\title[Active Local Learning]{Active Local Learning}
\newcommand{\R}{\mathbb{R}}
\newcommand*\centermathcell[1]{\omit\hfil$\displaystyle#1$\hfil\ignorespaces}
\DeclareMathOperator*{\E}{\mathbb{E}}
\DeclareMathOperator*{\argmin}{argmin}
\newcommand{\p}{\mathcal{P}}
\newcommand{\pshort}{\mathcal{P}_{\sf sh}}
\newcommand{\plong}{\mathcal{P}_{\sf lg}}
\newcommand{\psh}{{p}_{\sf sh}}
\newcommand{\plg}{{p}_{\sf lg}}
\newcommand{\plongone}{\mathcal{P}_{\sf{lg}, 1}}
\newcommand{\plongtwo}{{\mathcal{P}}_{\sf{lg}, 2}}
\newcommand{\plongthree}{{\mathcal{P}}_{\sf{lg}, 3}}
\newcommand{\D}{\mathcal{D}}
\newcommand{\Dsh}{\mathcal{D}_{\sf{sh}}}
\newcommand{\Dlg}{\mathcal{D}_{\sf{lg}}}
\newcommand{\Fl}{\mathcal{F}_L}
\newcommand{\ft}{\tilde{f}}
\newcommand{\alg}{\text{Alg}}
\newcommand{\diffd}{\Delta_{\D}}
\newcommand{\errd}{\text{err}_\D}
\newcommand{\err}{\text{err}}
\newcommand{\diff}{\Delta}
\newcommand{\defn}{:\,=}
\newcommand{\1}{\ensuremath{{\sf (i)}}}
\newcommand{\2}{\ensuremath{{\sf (ii)}}}
\newcommand{\3}{\ensuremath{{\sf (iii)}}}
\newcommand{\zt}{\tilde{z}}
\begin{document}

\maketitle

\begin{abstract}%
In this work we consider active {\em local learning}: given a query point $x$, and active access to an unlabeled training set $S$, output the prediction $h(x)$ of a near-optimal $h \in H$ using significantly fewer labels than would be needed to actually learn $h$ fully. In particular, the number of label queries should be independent of the complexity of $H$, and the function $h$ should be well-defined, independent of $x$.  This immediately also implies an algorithm for {\em distance estimation}: estimating the value $opt(H)$ from many fewer labels than needed to actually learn a near-optimal $h \in H$, by running local learning on a few random query points and computing the average error.

For the hypothesis class consisting of functions supported on the interval $[0,1]$ with Lipschitz constant bounded by $L$, we present an algorithm that makes $O(({1 / \epsilon^6}) \log(1/\epsilon))$ label queries from an unlabeled pool of $O(({L / \epsilon^4})\log(1/\epsilon))$ samples. It estimates the distance to the best hypothesis in the class to an additive error of $\epsilon$ for an arbitrary underlying distribution. We further generalize our algorithm to more than one dimensions.
We emphasize that the number of labels used is independent of the complexity of the hypothesis class which is linear in $L$ in the one-dimensional case. Furthermore, we give an algorithm to locally estimate the values of a near-optimal function at a few query points of interest with number of labels independent of $L$.

We also consider the related problem of approximating the minimum error that can be achieved by the Nadaraya-Watson estimator under a linear diagonal transformation with eigenvalues coming from a small range. For a $d$-dimensional pointset of size $N$, our algorithm achieves an additive approximation of $\epsilon$, makes $\tilde{O}({d}/{\epsilon^2})$ queries and runs in $\tilde{O}({d^2}/{\epsilon^{d+4}}+{dN}/{\epsilon^2})$ time.
\end{abstract}

\section{Introduction}
%
%
Consider a setting where we have a large amount of unlabeled data but the corresponding labels are expensive to obtain. Our aim is to understand what information can we reliably obtain about the predictor that we would have learned had we been given unlimited labeled data by actively querying only a few labels (``few'' $=$ independent of the complexity of the hypothesis class).

In particular, we look at the following two related questions. First, given an unlabeled set of training data sampled from a distribution, is it possible to estimate how well the best prediction function in a hypothesis class would do when the number of labels that we can actually obtain is insufficient to learn the best prediction function accurately? Second, suppose we have a few queries of interest of which we are interested in the labels of, is it possible to output the predictions for those queries corresponding to a nearly optimal function in the class without running the full training algorithm. Note that a nearly optimal function corresponds to a function which has low total error on data with respect to the underlying distribution. There are many natural scenarios in which this could be useful. For example, consider a setting where we are interested in predicting the outcome of a treatment for a particular patient with a certain disease. Medical records of patients who had similar treatments in the past are available in a hospital database but since the treatment outcome is sensitive and private, we want to minimize the number of patients for whom we acquire the information. Note that in this case, we cannot directly acquire the label of the query patient because the treatment procedure is an intervention which cannot be undone. Formally, we are interested in computing the labels of a few particular queries ``locally" and ``parallelly" with few label queries (independent of the complexity measure of the hypothesis class). We would like to mention that both the aforementioned problems are related because if we have an algorithm for predicting the labels of a few test points with few label queries, we can sample a few data points and use their labels to estimate the error of the best function in the class. While the question of error estimation has been previously studied in certain settings \citep{kong2018estimating, blum2018active}, there has been no prior work for local and parallel predictions for a function class with global constraints in learning based settings to the best of our knowledge.

In this work, we answer these questions in affirmative. For the class of Lipschitz functions with Lipschitz constant at most $L$, we show that it is possible to estimate the error of the best function in the class with respect to an underlying distribution with independent of $L$ label queries. We also show that it is possible to locally estimate the values of a nearly optimal function at a few test points of interest with independent of $L$ label queries. A key point to notice here is that a function can predict any arbitrary values for the fixed constant number of queries and still be approximately optimal in terms of the total error since the queries  have a zero measure with respect to the underlying distribution. Therefore, the additional guarantee that we have is after a common preprocessing step, the combined function, obtained when the algorithm is run in parallel for all possible query points independently, is $L$-Lipschitz and approximately optimal in terms of total error with respect to the underlying distribution. 

We also give an algorithm for estimating the minimum error for Nadaraya-Watson prediction algorithm amongst a set of linear transformations which is efficient in terms of both sample and time complexity. Note that computing the prediction for even a single query point requires computing a weighted sum of the labels of the training points and hence requires $Nd$ time and $N$ labels where $N$ is the number of points in the training set and $d$ is the dimension of the data points. Moreover, naively computing the error for a set of linear transformations with size exponential in $d$ would require exponential in $d$ labels and time which has a multiplicative factor of $N$ and ${1}/{\epsilon^d}$ where $\epsilon$ is the accuracy parameter. However, we obtain sample complexity which only depends polynomially on $d$ and logarithmically on $N$ and time complexity without the multiplicative dependence of $N$ and $1/\epsilon^d$.

We would like to clarify how the setting considered in this paper differs from the classical notion of local learning algorithms. The notion of local learning algorithms \citep{bottou1992local} is used to refer to learning schemes where the prediction at a point depends locally on the training points around it. The key distinction is that rather than proposing a local learning strategy and seeing how well it performs, we are looking at the question of whether we can find local algorithms for simulating empirical risk minimization for a hypothesis class with global constraints.

The remainder of the paper is organized as follows. We begin in Section~\ref{section:results} with a high level overview of our results and discuss the related work in Section~\ref{section:related-work}. Section~\ref{section:lipschitz} is devoted to our main results for the class of one dimensional Lipschitz functions. In Section~\ref{section:kde}, we present our results for error estimation for Nadaraya-Watson estimator. Finally, we end with a conclusion and some possible future directions in Section~\ref{section:conclusion}. 
\section{Our Results}
\label{section:results}
Consider a class of one dimensional Lipschitz functions supported on the interval $[0,1]$ with Lipschitz constant at most $L$ and data drawn from an arbitrary unknown distribution $\mathcal{D}$ with labels in the range $[0,1]$. For this setting, we show (Theorem~\ref{thm:lipschitzlocalquery}) that there exists a function $\tilde{f}$ in the class that is  optimal up to an additive error of $\epsilon$ such that for any query $x$, $\tilde{f}(x)$ can be computed with $O(({1}/{\epsilon^4})\log(1/\epsilon))$ label queries to a pool of $O(({L}/{\epsilon^4})\log(1/\epsilon))$ unlabeled samples drawn from the distribution $\mathcal{D}$. Also, the function values $\tilde{f}(x)$ at these query points $x$ can be computed in parallel once the unlabeled random samples have been drawn and fixed beforehand. Note that standard empirical risk minimization approaches would require a sample complexity of $O({L}/{\epsilon^3})$ to output the value of an approximately optimal function even at a single query point.\includecomment{Using standard generalization arguments, one can learn a function f with additive epsilon error approximation to the best function with O(frac{L}{epsilon^3}) labeled samples but in this case, even if we know that we are interested in computing a function value at a fixed given query point x, this would still require solving a dynamic program or a linear program with all the samples}But, the number of samples required by our approach for constant number of queries is independent of $L$ (which determines the complexity of the function class required for learning). In this setting, we think of $L$ as large compared to $\epsilon$ which is the error parameter. At a high level, we show that it is possible to effectively reduce the hypothesis class of bounded Lipschitz functions to a strictly smaller class of piece-wise independent Lipschitz functions where the function value can be computed locally by not losing too much in terms of the total accuracy.

We also show (Theorem~\ref{thm:lipschitzerror}) that for the class of $L$-Lipschitz functions considered above, it is possible to estimate the error of the optimal function in the class up to an additive error $\epsilon$ using $O(({1}/{\epsilon^6})\log(1/\epsilon))$ active label queries over an unlabeled pool of size $O(({L}/{\epsilon^4})\log(1/\epsilon))$. The idea is to compute the empirical error of the local function $\tilde{f}(x)$ constructed above by using $O({1}/{\epsilon^2})$ random samples from distribution $D$. Since $\tilde{f}(x)$ can be computed locally for a given query $x$, the total number of labels needed is independent of $L$. Using standard concentration results and the fact that $\tilde{f}$ is $\epsilon$-optimal, we get the desired estimate. 

We also extend the results to the case of more than one dimensions where the dimension is constant with respect to the Lipschitz constant $L$. The results are mentioned in Theorems~\ref{thm:lipschitzlocalquery-highd} and~\ref{thm:lipschitzerror-highd}. 

For the related setting of Nadaraya-Watson estimator, we show (Theorem~\ref{thm:kde-min-error}) that it is possible to estimate the minimum error that can be achieved under a linear diagonal transformation with eigenvalues bounded in a small range with additive error at most $\epsilon$ by making $\tilde{O}({d}/{\epsilon^2})$ label queries over a $d$-dimension unlabeled training set with size $N$ in running time $\tilde{O}({d^2}/{\epsilon^{d+4}}+{dN}/{\epsilon^2})$. Note that exactly computing the prediction for even a single data point requires going over the entire dataset thus needing $N$ label queries and $Nd$ time. Moreover, naively computing the error for each of the linear diagonal transformation with bounded eigenvalues would require number of labels depending on ${1}/{\epsilon^d}$ and a running time depending multiplicatively on $N$ and ${1}/{\epsilon^d}$. In comparison, we achieve a labeled query complexity independent of $N$ and polynomial dependence on the dimension $d$. Moreover, we separate the multiplicative dependence of $N$ and ${1}/{\epsilon^d}$ in the running time to an additive dependence.
We will further elaborate on our algorithm and the comparison with the standard algorithm in Section~\ref{section:kde}.
%
\section{Related Work}
\label{section:related-work}
\textbf{Local Computation Algorithms. } Our work on locally learning Lipschitz functions closely resembles the concept of local computation algorithms introduced in the pioneering work of \cite{rubinfeld2011fast}. They were interested in the question of whether it is possible to compute specific parts of the output in time sublinear in the input size. As mentioned by the authors, that work was a formalization of different forms of this concept already existing in literature in the form of  local distributed computation, local algorithms,
locally decodable codes and local reconstruction models. The reader can refer to \cite{rubinfeld2011fast} for the detailed discussion of work in each of these subfields. In the paper, the authors looked at several graph problems in the local computation model like maximal independent set, $k$-SAT and hypergraph coloring. Since then, there has been a lot of further work on local computation algorithms for various graph problems including maximum matching, load balancing, set cover and many others (\cite{mansour2013local, alon2012space, mansour2012converting, parnas2007approximating,parter2019local,levi2014local, grunau2020improved}). There also has been work on solving linear systems locally in sublinear time \citep{andoni2018solving} for the special case of sparse symmetric diagonally dominant matrices with small condition number. However, computing the best Lipschitz function for a given set of data cannot be written as a linear system. Moreover, the primary focus of all of these works has been on sublinear computational complexity, whereas we focus primarily on sample complexity.

In another related work, \cite{mansour2014robust} used local computation algorithms in the context of robust inference to give polynomial time algorithms. They formulated their inference problem as an exponentially sized linear program and showed that the linear program (LP) has a special structure which allowed them to compute the values of certain variables in the optimal solution of the linear program in time sublinear in the total number of variables. They did this by sampling a polynomial number of constraints in the LP.  Note that for our setting for learning Lipschitz functions, given all the unlabeled samples, learning the value of the best Lipschitz function on a particular input query can be cast as a linear program. However, our LP does not belong to the special class of programs that they consider. Moreover, we have a continuous domain and the number of possible queries is infinite. We cannot hope to get a globally Lipschitz solution by locally solving a smaller LP with constraints sampled independently for each query. We have to carefully design the local intervals and use different learning strategies for different types of intervals to ensure that the learned function is globally Lipschitz and also has good error bounds. %

In another work, \cite{feige2015learning} considered the use of local computation algorithms for inference settings. They reduced their problem of inference for a particular query to the problem of computing minimum vertex cover in a bipartite graph. However, the focus was on time complexity rather than sample complexity. The problem of computing a part of output of the learned function in sublinear sample complexity as compared to the usual notion of complexity of the function class (VC-dimension, covering numbers) has not been previously looked at in the literature to the best of our knowledge. 

\textbf{Transductive Inference. }Another related line of work that has been done in the learning theory community is based on transductive inference \citep{vapnik1998statistical} which as opposed to inductive inference aims to estimate the values of the function on a few given data points of interest. The philosophy behind this line of work is to avoid solving a more general problem as an intermediate step to solve a problem. This idea is very similar to the idea considered here wherein we are interested in computing the prediction at specific query points of interest. However, our prediction algorithm still requires a guarantee on the total error over the complete domain with respect to the underlying distribution though it is never explicitly constructed for the entire domain unless required. The reader can refer to chapters 24 and 25 in \cite{chapelle2009semi} for additional discussion on the topic. 

\textbf{Local Learning Algorithms. } The term local learning algorithms has been used to refer to the class of learning schemes where the prediction at a test point depends only on the training points in the vicinity of the point such as k-nearest neighbor schemes \citep{bottou1992local,vapnik1992principles}. However, these works have primarily focused on proposing different local learning strategies and evaluating how well they perform. In contrast, we are interested in the question of whether local algorithms can be used for simulating empirical risk minimization for a hypothesis class with global constraints such as the class of Lipschitz functions. 

\textbf{Property Testing.} There also has been a large body of work in the theoretical computer science community on property testing where the goal is to determine whether a function belongs to a class of functions or is far away from it with respect to some notion of distance in sublinear sample complexity. The commonly studied testing problems include testing monotonicity of a function over a boolean hypercube \citep{goldreich1998testing, chakrabarty2016n, chen2014new, khot2018monotonicity}, testing linearity over finite fields \citep{bellare1996linearity, ben2003randomness}, testing for concise DNF formulas \citep{diakonikolas2007testing, parnas2002testing} and testing for small decision trees \citep{diakonikolas2007testing}. However, all the aforementioned algorithms work in a query model where a query can be made on any arbitrary domain point of choice. 

The setting which is closer to learning  where the labels can only be obtained from a fixed distribution was first studied by \cite{goldreich1998property, kearns2000testing}.\includecomment{ kearns and rons 200 gave better sample complexity bounds for testing various problems like decisions trees and restricted form of neural networks than would be required for learning the best function in that class.} This setting is also called as passive property testing. The notion of active property testing was first introduced by \cite{balcan2012active} where an algorithm can make active label queries on an unlabeled sample of points drawn from an unknown distribution. However, one limitation of these algorithms is that they do not give meaningful bounds to distinguish the function from being approximately close to the function class (rather than belonging to the class) vs. far away from it. 

\cite{parnas2006tolerant} first introduced the notion of tolerant testing where the aim is to detect whether the function is $\epsilon$ close to the class or $2\epsilon$ far from it in the query model with queries on arbitrary domains points of choice. This also relates to estimating the distance of the function from the class within an additive error of $\epsilon$. \cite{blum2018active} first studied the problem of active tolerant testing where they were interested in algorithms which are tolerant to the function not being exactly in the function class and also have active query access to the labels over the unlabeled samples from an unknown distribution. Specifically, \cite{blum2018active}
gave algorithms for estimating the distance of the function from the class of union of $d$ intervals with a labeled sample complexity of $\text{poly}({1}/{\epsilon})$. The key point to be noted is that the labeled sample complexity is independent of $d$ which is the VC dimension of that class and dictates the number of samples required for learning. We note that our algorithm for error estimation for the class of Lipschitz functions in labeled sample complexity independent of $L$ is another work along these lines. 

Property testing has also been studied for the specific class of Lipschitz functions \citep{jha2013testing, chakrabarty2013optimal, berman2014lp}. However, all of these results either query arbitrary domain points or are in the non-tolerant setting or work only for discrete domain. 
%

A closely related work to our method of error estimation where the predicted label depends on the labels of the training data weighted according to some appropriate kernel function was also considered in \cite{blum2018active}. In particular, they looked at the setting where the predicted label is based on the $k$-nearest neighbors and showed that the $\ell_{1}$ loss can be estimated up to an additive error of $\epsilon$ using $O({1}/{\epsilon^2})$ label queries on $N+O({1}/{\epsilon^2})$ unlabeled samples. Their results also extend to the case where the prediction is the weighted average of all the unlabeled points in the sample by sampling with a probability proportional to the weight of the point. However, this sampling when repeated for $O({1}/{\epsilon^d})$ different linear transformations would lead to a labeled sample complexity depending on ${1}/{\epsilon^d}$. Moreover, sampling a point with a probability proportional to the weight would require a running time of $N$ per query and thus, would give a total running time of $O({N}/{\epsilon^{d+2}})$. 


\section{Lipschitz Functions}
\label{section:lipschitz}
In this section, we describe the formal problem setup for both local learning and error estimation for the class of Lipschitz functions. We then proceed to describe our algorithms and derive label query complexity bounds for them. We restrict our attention to the one-dimensional problems here and defer the details of the more than one dimensional setup to Appendix~\ref{subsection:lipschitz-highd}.



\subsection{Problem Setup}
\label{subsection:lipschitz-problemsetup}
For any fixed $L>0$, let $\mathcal{F}_{L}$ be the class of $d$ dimensional functions supported on the domain $[0,1]^d$ with Lipschitz constant at most $L$, that is,
\begin{equation}
    \mathcal{F}_L = \{f:[0,1]^d\mapsto [0,1], \  f \text{ is } L\text{-Lipschitz }\}.
\end{equation}
Let $\D$ be any distribution over $[0,1]^d\times[0,1]$ and $\D_x$ be the corresponding marginal distribution over $[0,1]^d$. The prediction error of a function $f$ with respect to $\D$ and the optimal prediction error of the class $\Fl$ are
\begin{equation*}
    \errd(f) \defn E_{x,y \sim \mathcal{D}}|y-f(x)| \quad \text{and} \quad \errd(\Fl) \defn \min_{f \in \mathcal{F}_L}E_{x,y \sim \mathcal{D}}|y-f(x)|.
\end{equation*}
Also, let us denote the error of a function $f$ relative to another function $f'$ and function class $\Fl$ as
\begin{equation}
    \diffd(f, f')\defn \errd(f) -\errd(f') \quad \text{and} \quad \diffd(f, \Fl) \defn  \errd(f) - \errd(\Fl).
\end{equation}
We say that a function $f\in \Fl$ is $\epsilon$-optimal with respect to distribution $\D$ and the function class $\Fl$ if it satisfies $\diffd(f, \Fl) \leq \epsilon$. Let functions $f^*_{\D} \in \Fl$ and $\hat{f}_S \in \Fl$ be the minimizers of error with respect to the distribution and  empirical error on set $S$ defined as
\begin{align}
f^*_{\D} = \argmin_{f \in \Fl } \E_{(x, y) \sim \D}|y-f(x)| \quad \text{ and } \quad 
\hat{f}_S = \argmin_{f \in \Fl } \sum_{(x_j, y_j) \in S}|y_j-f(x_j)|.\label{lipschitz:minemperror}
\end{align}

\paragraph{Local Learning.} Given access to unlabeled samples from $\D_x$ and a test point $x^*$, the objective of \emph{Local Learning} is to output the prediction $\ft(x^*)$ using a small number of label queries such that $\ft\in \Fl$ is \mbox{$\epsilon$-optimal}. In addition, we would like such a predictor, $\alg$, to be able to answer multiple such queries while being consistent with the same function $\ft$, that is,
\begin{equation*}
    \alg(x^*) = \ft(x^*) \quad \text{for all } x^* \in [0,1]^d.
\end{equation*}



\paragraph{Error Estimation.} Given access to unlabeled samples from distribution $\D_x$, the goal of \emph{Error Estimation} is to output an estimate of the optimal prediction error $\errd( \Fl)$ up to an additive error of $\epsilon$ using few label queries. 


\subsection{Guarantees for Local Learning}
\label{subsection:lipschitz-1d-localearning}
We begin by describing our proposed algorithm for Local Learning and then provide a bound on its query complexity in Theorem~\ref{thm:lipschitzerror}.

Our algorithm for local predictions first involves a preprocessing step (Algorithm~\ref{alg1}) which takes as input the Lipschitz constant $L$, sampling access to distribution $\D_x$ and the error parameter $\epsilon$ and returns a partition $\p= \{I_1 = [b_0, b_1], I_2=[b_1, b_2], \ldots, \}$ and a set $S$ of unlabeled samples. The partition $\p$ consists of alternating intervals of length ${1}/{L}$ and ${1}/(L\epsilon)$ over the domain $[0,1]$. Let us divide these intervals\footnote{Note that long intervals at the boundary could be shorter, but those can be handled similarly. Moreover, if any long interval gets more than $\frac{1}{2\epsilon^4}\log(\frac{1}{\epsilon})$ samples, we discard the future samples which fall into that interval.}  further into the two sets 
\begin{small}
\begin{equation*}
    \begin{gathered}
    \plong\defn\{[b_0,b_1],[b_2,b_3],\ldots,\}\  \text{(long intervals)}\ \text{ and } \ 
    \pshort\defn\{[b_1,b_2],[b_3,b_4],\ldots,\}\  \text{(short intervals)}.\\
    \end{gathered}
\end{equation*}
\end{small}
%
 \begin{algorithm}[t]
   \begin{algorithmic}[1]
\caption{Preprocess($L, \D_x, \epsilon$)\label{alg1}}
   \STATE Sample a uniformly random offset $b_1$ from $\{1,2,\cdots,\frac{1}{\epsilon}\}\frac{1}{L}$. 
   \STATE Divide the $[0,1]$  interval into alternating intervals of length $\frac{1}{L\epsilon}$ and $\frac{1}{L}$ with boundary at $b_1$ and let $\p$ be the resulting partition, that is, $\p = \{[b_0=0, b_1], [b_1, b_2], \ldots, \}$ where
   $b_2=b_1+\frac{1}{L}, b_3=b_2+\frac{1}{L\epsilon}, \ldots$. 
   \STATE Sample a set $S = \{x_i\}_{i=1}^M$ of $M = O(\frac{L}{\epsilon^4}\log(\frac{1}{\epsilon}))$ unlabeled examples from distribution $\D_x$.\\
   \STATE \textbf{Output} $S, \p$. 
   \end{algorithmic}
  \end{algorithm}
The Query algorithm (Algorithm~\ref{alg2}) for test point $x^*$ takes as input the set $S$ of unlabeled samples and the partition $\p$ returned by the Preprocess algorithm. Note that all subsequent queries use the same partition $\p$ and the same set of unlabeled examples $S$. The algorithm uses different learning strategies depending on whether $x^*$ belongs to one of the long intervals in $\plong$ or short intervals in $\pshort$. For the long intervals, it outputs the prediction corresponding to the empirical risk minimizer (ERM) function restricted to that interval. Whereas for the short interval, the prediction is made by linearly interpolating the function values at the boundaries with the neighbouring long intervals. This linear interpolation ensures that the overall function is Lipschitz. 
We bound the expected prediction error of this scheme with respect to class $\Fl$ by separately bounding this error for long and short intervals. For the long intervals, we prove that the ERM has low error by ensuring that each interval contains enough unlabeled samples. On the other hand, we show that the short intervals do not contribute much to the error because of their low probability under the distribution $\D$.
 
%
%

 \begin{algorithm}[H]
   \begin{algorithmic}[1]
\caption{Query($x, S, \p=\{[b_0,b_1],[b_1,b_2], [b_2,b_3], \ldots\}$)\label{alg2}}
    
   \IF{query $x \in I_i = [b_{i-1}, b_i]\text{ where } I_i \in \plong$}
   \STATE Query labels\footnotemark for $x \in S \cap I_i.$
   \STATE \textbf{Output} $\hat{f}_{S \cap I_i}(x)$.\\
    \ELSIF{query $x \in I_i = [b_{i-1}, b_i] \text{ where } I_i \in \pshort$}
      \STATE Query labels for $x \in S \cap (I_{i-1} \cup I_{i+1})$.
    \STATE \textbf{if} $b_{i-1} > 0$ \textbf{then } $v^l_i = \hat{f}_{S \cap I_{i-1}}(b_{i-1})$ \textbf{else} $v^l_i = \hat{f}_{S \cap I_{i+1}}(b_{i})$ \textbf{end if}
    \STATE \textbf{if} $b_{i-1} < 1$ \textbf{then } $v^u_i = \hat{f}_{S \cap I_{i+1}}(b_{i})$ \textbf{else} $v^u_i = \hat{f}_{S \cap I_{i-1}}(b_{i-1})$ \textbf{end if}
  \STATE \textbf{Output} $v^l_i + (x-b_{i-1})\frac{v^u_i-v^l_i}{b_i-b_{i-1}}$.\\
   \ENDIF
   \end{algorithmic}
  \end{algorithm}
\footnotetext{One can either think of the label as being fixed for every data point or if it is randomized, we need to use the same label for every datapoint once queried.}
Next, we state the number of label queries needed to make local predictions corresponding to the Query algorithm (Algorithm \ref{alg2}).
%
%

\begin{theorem}
\label{thm:lipschitzlocalquery}
For any distribution $\mathcal{D}$ over $[0,1]\times[0,1]$, Lipschitz constant $L>0$ and error parameter $\epsilon \in [0,1]$, let $(S, \p)$ be the output of (randomized) Algorithm~\ref{alg1} where $S$ is the set of unlabeled samples of size $O(\frac{L}{\epsilon^4}\log(\frac{1}{\epsilon}))$ and $\p$ is a partition of the domain $[0,1]$. Then, there exists a function  $\tilde{f} \in \mathcal{F}_L$, such that for all $x \in [0,1]$, Algorithm~\ref{alg2} queries $O(\frac{1}{\epsilon^4}\log(\frac{1}{\epsilon}))$ labels from the set $S$ and outputs $\text{Query}(x, S, P) $ satisfying
\begin{equation}
    \text{Query}(x, S, \p) = \tilde{f}(x)\;,
\end{equation}
and the function $\tilde{f}$ is $\epsilon$-optimal, that is, $ \diffd(\ft, \Fl) \leq \epsilon$ with probability greater than $\frac{1}{2}$.
\end{theorem}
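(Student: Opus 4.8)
The plan is to name the function $\ft$ explicitly, check that it lies in $\Fl$ and coincides with the output of Algorithm~\ref{alg2}, bound the label count directly from the algorithm, and then control $\E_{(S,\p)}[\errd(\ft)]$ and finish with Markov's inequality. Concretely, from the output $(S,\p)$ of Algorithm~\ref{alg1} I define $\ft$ by: on each long interval $I\in\plong$ set $\ft|_I$ to be the restriction to $I$ of the empirical risk minimizer $\hat f_{S\cap I}$ over the (at most $O(\tfrac1{\epsilon^4}\log\tfrac1\epsilon)$, by the capping rule) labeled points of $S$ in $I$, with the convention that $\hat f_{S\cap I}$ is some fixed element of $\Fl$ when $S\cap I=\emptyset$; on each short interval $I_i=[b_{i-1},b_i]\in\pshort$ let $\ft$ be the unique affine function on $I_i$ with the already-defined endpoint values $\ft(b_{i-1})$ and $\ft(b_i)$, which are exactly $v^l_i$ and $v^u_i$ of Algorithm~\ref{alg2} (read off from the ERMs of the adjacent long intervals), with the boundary pieces handled as in the footnote. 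This $\ft$ depends only on $(S,\p)$ and not on any query point, and a line-by-line comparison with Algorithm~\ref{alg2} shows $\mathrm{Query}(x,S,\p)=\ft(x)$ for all $x\in[0,1]$. Moreover $\ft\in\Fl$: it maps into $[0,1]$ (each $\hat f_{S\cap I}\in\Fl$, and affine interpolation between two values in $[0,1]$ stays in $[0,1]$); it is $L$-Lipschitz on each long interval since $\hat f_{S\cap I}\in\Fl$; it is $L$-Lipschitz on each short interval because such an interval has length $1/L$, so the interpolation slope has magnitude $|v^u_i-v^l_i|/(1/L)\le L$; and it is continuous at every interval boundary by construction. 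Finally, a query point $x$ touches the labels of at most two long intervals, so Algorithm~\ref{alg2} requests only $O(\tfrac1{\epsilon^4}\log\tfrac1\epsilon)$ labels, regardless of $x$.

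It then remains to show $\E_{(S,\p)}[\errd(\ft)]\le\errd(\Fl)+O(\epsilon)$; since $\ft\in\Fl$ makes $\diffd(\ft,\Fl)\ge 0$ surely, Markov's inequality gives $\diffd(\ft,\Fl)=O(\epsilon)$ with probability strictly more than $\tfrac12$, and rescaling $\epsilon$ by a constant (which changes the stated complexities only through constants) yields the theorem. Writing $p_I=\Pr_{x\sim\D_x}[x\in I]$, I decompose $\errd(\ft)=\sum_{I\in\p}p_I\cdot\E_{(x,y)\sim\D}[\,|y-\ft(x)|\mid x\in I\,]$ and handle short and long intervals separately. For short intervals the conditional errors are bounded crudely by $1$, so their contribution is at most $\Pr_{x\sim\D_x}\big[x\in\bigcup_{I\in\pshort}I\big]$, and here the random offset $b_1$ does the work: in units of $1/L$, the short intervals for offset $j\in\{1,\dots,1/\epsilon\}$ are the translates by $j$ of one fixed pattern that is periodic with period $1+1/\epsilon$ and whose active part within a period has length $1$, so for every fixed $x$ at most two of the $1/\epsilon$ offsets put $x$ in a short interval. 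Hence $\Pr_{b_1}\big[x\in\bigcup_{I\in\pshort}I\big]\le 2\epsilon$ pointwise, and averaging over $x\sim\D_x$ gives $\E_{b_1}\big[\Pr_{x\sim\D_x}[x\in\bigcup_{I\in\pshort}I]\big]\le 2\epsilon$.

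For the long intervals I split $\plong$ into \emph{light} ($p_I<1/L$) and \emph{heavy} ($p_I\ge 1/L$) ones; one may assume $L\ge 1/\epsilon$, since otherwise ordinary empirical risk minimization on $O(\tfrac1{\epsilon^4}\log\tfrac1\epsilon)$ samples already meets the claimed bounds. There are $O(L\epsilon)$ long intervals (the period is at least $1/(L\epsilon)$), so the light ones carry total mass $O(L\epsilon)\cdot\tfrac1L=O(\epsilon)$ and contribute $O(\epsilon)$ to $\errd(\ft)$ no matter what $\ft$ does there. For a heavy interval $I$, $\E[|S\cap I|]=|S|\,p_I\ge|S|/L$, which by the choice of $|S|$ is at least twice the capping threshold $\tfrac1{2\epsilon^4}\log\tfrac1\epsilon$, so a Chernoff bound gives $|S\cap I|\ge\tfrac1{2\epsilon^4}\log\tfrac1\epsilon$ except with probability $e^{-\mathrm{poly}(1/\epsilon)}$, and on that event the points retained in $I$ are that many i.i.d.\ draws from $\D$ conditioned on $x\in I$. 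The restriction class $\{f|_I:f\in\Fl\}$ is exactly the set of $L$-Lipschitz maps $I\to[0,1]$, and since $|I|\le 1/(L\epsilon)$ its sup-norm covering number at scale $\epsilon$ is $e^{O(1/\epsilon^2)}$; a union bound over such a cover together with Hoeffding's inequality shows that with $\tfrac1{2\epsilon^4}\log\tfrac1\epsilon$ samples the empirical and conditional errors agree to within $O(\epsilon)$ uniformly over this class, except with probability $\mathrm{poly}(\epsilon)$. On that good event the ERM property (and $f^*_{\D}|_I\in\{f|_I:f\in\Fl\}$) gives $\E_{(x,y)\sim\D}[\,|y-\ft(x)|\mid x\in I\,]\le\E_{(x,y)\sim\D}[\,|y-f^*_{\D}(x)|\mid x\in I\,]+O(\epsilon)$, while off the good events the conditional error is at most $1$, contributing only $\sum_I p_I\cdot e^{-\mathrm{poly}(1/\epsilon)}\le e^{-\mathrm{poly}(1/\epsilon)}$ after taking expectations over $S$ and summing. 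Summing the heavy-interval bounds against the weights $p_I$ gives at most $\sum_{I\in\plong}p_I\cdot\E_{(x,y)\sim\D}[\,|y-f^*_{\D}(x)|\mid x\in I\,]+O(\epsilon)\le\errd(f^*_{\D})+O(\epsilon)=\errd(\Fl)+O(\epsilon)$, and adding the short, light, and heavy contributions yields $\E_{(S,\p)}[\errd(\ft)]\le\errd(\Fl)+O(\epsilon)$.

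The step needing the most care is the long-interval analysis. An adversarial distribution means no fixed region of small Lebesgue measure can be afforded error blindly (hence the random-shift argument, so that the short intervals — only a small Lebesgue fraction of $[0,1]$ — also have small $\D_x$-mass in expectation), and it also means individual long intervals may be nearly devoid of mass (hence the light/heavy split). At the same time the number of long intervals is $\Theta(L\epsilon)$, so it cannot enter any bound that controls the number of labels; the way around this is to argue the entire error bound in expectation over $S$, so that the rare ``too few samples'' events on the many heavy intervals collapse into a single $\sum_I p_I\le 1$ rather than a union bound — this is precisely what keeps the unlabeled sample size at $O(\tfrac{L}{\epsilon^4}\log\tfrac1\epsilon)$ with no $\log L$ factor.
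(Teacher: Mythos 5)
Your proof is correct and takes essentially the same route as the paper: the same definition of $\ft$ (ERM on long intervals, linear interpolation on short ones), the same Lipschitzness check, the same random-shift argument bounding the short-interval mass (Lemma~\ref{lemma:prob_intervals}), the same light/heavy split of long intervals (the paper's $\plongthree$ versus $\plongone\cup\plongtwo$), the same expectation-over-$S$ device for under-sampled heavy intervals that avoids a union bound over $\Theta(L\epsilon)$ intervals (Lemma~\ref{lemma:number_unlabeled_samples}), and the same covering-number uniform-convergence step on well-sampled intervals (Lemma~\ref{lem: good type 1 intervals}). The only organizational difference is that you bound $\E_{(S,\p)}[\diffd(\ft,\Fl)]$ in one shot and apply Markov once at the end, whereas the paper bounds $\psh$, $E_2$, and $E_1$ separately and union-bounds their three Markov applications with $\delta = 1/10$; this is a cosmetic streamlining, not a different argument.
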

\begin{proof}
 We begin by defining some notation. Let $S= \{x_i\}^M_{i=1}$ be the set of the unlabeled samples and $\p=\{[b_0,b_1],[b_1,b_2], \ldots\}$ be the partition returned by the pre-processing step given by Algorithm~\ref{alg1}. We will use $y_i$ to denote the queried label for the datapoint $x_i$. 
%
Let $\D_i$ be the distribution of a random variable  $(X, Y) \sim \D$ conditioned on the event $\{X \in I_i\}$. Similarly, let $\Dlg$ and $\Dsh$ be the conditional distribution of $\D$ on intervals belonging to $\plong$ and $\pshort$ respectively. Let $p_i$ denote the probability of a point sampled from distribution $\D$ lying in interval $I_i$. Going forward, we use the shorthand `probability of interval $I_i$' to denote $p_i$. Let $\plg$ and $\psh$ be the probability of set of long and short intervals respectively. Recall that $f_{\D}^*$ is the function which minimizes $\errd(f)$ for $f \in \Fl$ and $f_{\D_i}^*$ is the function which minimizes this error with respect to the conditional distribution $\D_i$. Let $M_i$ denote the number of unlabeled samples of $S$ lying in interval $I_i$. 
%
%
%
%
For any interval $I_i$, let $\hat{f}_{S\cap I_i}$ be the ERM with respect to that interval.
%
%
%
\paragraph{Lipschitzness of $\ft$.}
For any interval $I_i = [b_{i-1}, b_i]$, let $v_i^l = \hat{f}_{S\cap I_{i-1}}(b_{i-1})$ be the value of the function with minimum empirical error on the neighbouring interval $I_{i-1}$ at the boundary point $b_{i-1}$  and similarly $v_i^u = \hat{f}_{S\cap I_{i+1}}(b_{i})$ be the value of the function $\hat{f}_{S\cap I_{i+1}}$ at the boundary point $b_i$. Note that if $I_i$ is a boundary interval and therefor $I_{i-1}$ (or $I_{i+1}$) does not exist, we can define $v^l_i=v^u_i$ (or $v^u_i=v^l_i$).
Further, let $f^{\sf{int}}_i: I_i \mapsto \R$ be the linear function interpolating from $v_i^l$ to $v_i^u$, 
\begin{small}
\begin{equation*}
 f^{\sf{int}}_i(x) = v^l_i+(x-b_{i-1})\frac{v_i^u-v^l_i}{b_i-b_{i-1}}.
\end{equation*}
\end{small}
Note that the Query procedure (Algorithm~\ref{alg2}) is designed to output $\ft(x)$ for each query $x$ where
\begin{small}
\begin{equation*}
       \ft(x) = \begin{cases}
       \hat{f}_{S\cap I_i}(x) \quad &\text{ if } x \in I_i \text{ for any } I_i \in \plong\\
        f^{\sf{int}}_i(x) \quad &\text{ if } x \in I_i \text{ for any } I_i \in  \pshort
       \end{cases}.
\end{equation*}
\end{small}
%
%
%
%
%
Now, it is easy to see that the function $\ft(x)$ is $L$-Lipschitz. The function $\hat{f}_{S\cap I_i}$ on each of the long intervals is $L$-Lipschitz by construction. The function ${f_i^{\sf{int}}}$ on each of the short intervals is also $L$-Lipschitz since the short intervals in $\pshort$ have length ${1}/{L}$ and the label values  $y_j \in [0,1]$. Also, the function $\ft$ is continuous at the boundary of each interval by construction. 
%
\paragraph{Error Guarantees for $\ft$.}
Now looking at the error rate of the function $\tilde{f}(x)$ and following a repeated application of tower property of expectation, we get that 
\begin{small}
\begin{align}
    \diffd(\ft, \Fl) &= \plg\diff_{\Dlg}(\ft, f_{\D}^*) + \psh\diff_{\Dsh}(\ft, f_{\D}^*)\nonumber\\
    &= \sum_{i:I_i \in \plong}p_i\diff_{\D_i}(\ft, f_{\D}^*) + \psh\diff_{\Dsh}(\ft, f_{\D}^*)\label{eqn:total-error}
\end{align}
\end{small}
%
We now bound both terms above to obtain a bound on the total error of the function $\ft$.
%

\emph{Error for short intervals. } 
The probability of short intervals $\psh$ is small with high probability since the total length of short intervals is $\epsilon$ and the intervals are chosen uniformly randomly. More formally, from Lemma~\ref{lemma:prob_intervals}, we know that with probability at least $1-\delta$, the probability of short intervals $\psh$ is upper bounded by $\epsilon/\delta$. Also, the error for any function $f$ is bounded between $[0,1]$ since the function's range is $[0,1]$. Hence, we get that
\begin{align}
\psh\diff_{\D_{\pshort}}(\ft, f_{\D}^*) &\leq \frac{\epsilon}{\delta} \label{eqn:type2-error}
\end{align}
\emph{Error for long intervals:} We further divide the long intervals into 3 subtypes: 
{\small
\begin{align*}
    \begin{gathered}
    \plongone \defn \left\lbrace I_i \; |\; I_i \in \plong,\; p_i \geq \frac{1}{L},\; M_i \geq \frac{1}{2\epsilon^4}\log\left(\frac{1}{\epsilon}\right) \right\rbrace, 
    \plongtwo \defn \left\lbrace I_i \; |\; I_i \in \plong,\; p_i \geq \frac{1}{L},\; M_i < \frac{1}{2\epsilon^4}\log\left(\frac{1}{\epsilon}\right) \right\rbrace,\\ 
   \plongthree \defn \left\lbrace I_i \; |\; I_i \in \plong,\; p_i < \frac{1}{L} \right\rbrace.
    \end{gathered}
\end{align*}}
The intervals in both first and second subtypes have large probability $p_i$ with respect to distribution $\D$ but differ in the number of unlabeled samples in $S$ lying in them. Finally, the intervals in third subtype $\plongthree$ have small probability $p_i$ with respect to distribution $\D$. Now, we can divide the total error of long intervals into error in these subtypes
\begin{small}
\begin{align}
\sum_{i:I_i \in \plong}p_i\diff_{\D_i}(\ft, f_{\D}^*) &= \underbrace{\sum_{i:I_i \in \plongone}p_i\diff_{\D_i}(\ft, f_{\D}^*)}_{E1} + \underbrace{\sum_{i:I_i \in \plongtwo}p_i\diff_{\D_i}(\ft, f_{\D}^*)}_{E2}+ \underbrace{\sum_{i:I_i \in \plongthree}p_i\diff_{\D_i}(\ft, f_{\D}^*)}_{E3}.
\label{eqn:combined}
\end{align}
\end{small}
Now, we will argue about the contribution of each of the three terms above. 

\underline{Bounding $E3$.} Since there are at most $L\epsilon$ long intervals and each of these intervals $I_i$ has probability $p_i$ upper bounded by $1/L$, the total probability combined in these intervals is at most $\epsilon$. Also, in the worst case, the loss can be 1. Hence, we get an upper bound of $\epsilon$ on $E_3$.
%
%

\underline{Bounding $E2$.} From Lemma~\ref{lemma:number_unlabeled_samples}, we know that with failure probability at most $\delta$, these intervals have total probability upper bounded by ${\epsilon}/{\delta}$. Again, the loss can be 1 in the worst case. Hence, we can get an upper bound of ${\epsilon}/{\delta}$ on $E_2$.

\underline{Bounding $E1$.}  
Let $F_i$ denote the event that $\diff_{\D_i}(\hat{f} _{S\cap I_i}, f_{\D_i}^*) > \epsilon$.  The expected error of intervals $I_i$ in $\plongone$ is then
{\small
\begin{align*}
\E[\sum_{i:I_i \in \plongone}p_i\diff_{\D}(\ft, f_{\D}^*)] &\stackrel{\1}{\leq} \E[\sum_{i:I_i \in \plongone}p_i\diff_{\D_i}(\hat{f}_{S\cap I_i}, f_{\D_i}^*)]\\ 
&= \sum_{i:I_i \in \plongone}p_i(\E[\diff_{\D_i}(\hat{f}_{S\cap I_i}, f_{\D_i}^*)|F_i]\Pr(F_i) + \E[\diff_{\D_i}(\hat{f}_{S\cap I_i}, f_{\D_i}^*)|\neg F_i]\Pr(\neg F_i))\\
&\stackrel{\2}{\leq} \sum_{i:I_i \in \plongone}p_i(1\cdot \epsilon + \epsilon \cdot 1) 
\leq 2\epsilon,
\end{align*}}
where step $\1$ follows by noting that $\ft = \hat{f}_{S\cap I_i}$ for all long intervals $I_i \in \plong$ and that $f^*_{\D_i}(x)$ is the minimizer of the error $\err_{\D_i}(f)$ over all $L$-Lipschitz functions, 
and step $\2$ follows since $\E[\diff_{\D_i}(\hat{f}_{S\cap I_i}, f_{\D_i}^*)|\neg F_i] \leq \epsilon$ by the definition of event $F_i$ and $\Pr(F_i) \leq \epsilon$ follows from a standard uniform convergence argument (detailed in Lemma~\ref{lem: good type 1 intervals}).
Now, using Markov's inequality, we get that $E_1 \leq {2\epsilon}/{\delta}$ with failure probability at most $\delta$. 

Plugging the error bounds obtained in equations~\eqref{eqn:type2-error} and ~\eqref{eqn:combined} into equation~\eqref{eqn:total-error} and setting $\delta = \frac{1}{10}$ establishes the required claim.

%
%
%
%
%
%

\paragraph{Label Query Complexity.} For any query point $x^*$, $\ft(x^*)$ can be computed by  querying the labels of the interval in which $x^*$ lies (if $x^*$ lies in a long interval) or the two neighboring intervals of the interval in which $x^*$ lies (if $x^*$ lies in a short interval). Hence, the computation requires $O(({1}/{\epsilon^4})\log({1}/{\epsilon}))$ label queries over the set $S$ of $O(({L}/{\epsilon^4})\log({1}/{\epsilon}))$ unlabeled samples.
\end{proof}
\subsection{Guarantees for Error Estimation}
\label{subsection:lipschitz-1d-errorestimation}
We now study the Error Estimation problem for the class of Lipschitz functions $\Fl$. Our proposed estimator detailed in Algorithm~\ref{alg3} uses the algorithm for locally computing the labels of query points for a nearly optimal function from the previous section. In particular, it samples a few random query points and uses them to compute the average empirical error. The final query complexity of our procedure is then obtained via standard concentration arguments, relating the empirical error to the true expected error. We formalize this guarantee in  Theorem~\ref{thm:lipschitzerror} and defer its proof to Appendix~\ref{appendix:lipschitz-lowd}.
 \begin{algorithm}[H]
   \begin{algorithmic}[1]
\caption{Error$(L, \D, \epsilon)$\label{alg3}}
\STATE Let $S,\p = $Preprocess$(L, \D_x, \epsilon)$
   \STATE Sample a set $\{(x_1,y_1),(x_2,y_2),\cdots, (x_N,y_N)\}$ labeled examples from distribution $\mathcal{D}$ where $ N = O(\frac{1}{\epsilon^2})$ \\
   \STATE \textbf{Output} $\widehat{\errd}(\Fl) = \frac{1}{N}\sum_{i=1}^N|\text{Query}(x_i, S, \p)-y_i|$
   \end{algorithmic}
  \end{algorithm}
%
%
%
\begin{restatable}{theorem}{theoremlipschitzerror}
\label{thm:lipschitzerror}
For any distribution $\mathcal{D}$ over $[0,1]\times[0,1]$, Lipschitz constant $L>0$ and parameter $\epsilon \in [0,1]$, Algorithm~\ref{alg3} uses $O(\frac{1}{\epsilon^6}\log(\frac{1}{\epsilon}))$ active label queries on $O(\frac{L}{\epsilon^4}\log(\frac{1}{\epsilon}))$ unlabeled samples from distribution $\D_x$ and produces an output $\widehat{\errd}(\Fl)$ satisfying
{\small\begin{equation*}
|\widehat{\errd}(\Fl) - \errd( \Fl)| \leq \epsilon 
\end{equation*} }
with probability at least $\frac{1}{2}$.
\end{restatable}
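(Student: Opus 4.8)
The plan is to leverage Theorem~\ref{thm:lipschitzlocalquery} directly: it guarantees that with probability at least $\tfrac12$ there is a fixed $\epsilon$-optimal function $\ft \in \Fl$ such that $\text{Query}(x, S, \p) = \ft(x)$ for every $x \in [0,1]$, using only $O(\tfrac{1}{\epsilon^4}\log\tfrac1\epsilon)$ labels per query. So condition on this good event $\mathcal{E}$ (which depends only on the randomness of Algorithm~\ref{alg1}, i.e. the offset $b_1$ and the unlabeled pool $S$). Under $\mathcal{E}$, the output of Algorithm~\ref{alg3} is exactly $\widehat{\errd}(\Fl) = \tfrac1N \sum_{i=1}^N |\ft(x_i) - y_i|$, an empirical average of $N = O(\tfrac{1}{\epsilon^2})$ i.i.d. samples $(x_i,y_i)\sim\D$ of the bounded random variable $|\ft(x)-y| \in [0,1]$, whose expectation is $\errd(\ft)$.

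First I would fix the constant in $N = c/\epsilon^2$ and apply Hoeffding's inequality: $\Pr[\,|\widehat{\errd}(\Fl) - \errd(\ft)| > \epsilon\,] \le 2\exp(-2N\epsilon^2) = 2\exp(-2c)$, which is at most, say, $1/10$ for a suitable absolute constant $c$. Second, since $\ft$ is $\epsilon$-optimal we have $0 \le \errd(\ft) - \errd(\Fl) = \diffd(\ft,\Fl) \le \epsilon$, so $|\errd(\ft) - \errd(\Fl)| \le \epsilon$. Third, I would combine these: on the intersection of $\mathcal{E}$ (probability $\ge \tfrac12$... this is the tight part) and the Hoeffding event, the triangle inequality gives $|\widehat{\errd}(\Fl) - \errd(\Fl)| \le |\widehat{\errd}(\Fl) - \errd(\ft)| + |\errd(\ft) - \errd(\Fl)| \le \epsilon + \epsilon = 2\epsilon$.

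The only real subtlety is the constant factor: the statement asks for additive error $\epsilon$ with probability $\ge \tfrac12$, but the naive combination gives $2\epsilon$, and moreover $\mathcal{E}$ already costs us essentially all of the $\tfrac12$ budget, leaving no room for a union bound with the Hoeffding failure. The clean fix is to run Algorithm~\ref{thm:lipschitzlocalquery}'s construction with parameter $\epsilon' = \epsilon/2$ (or $\epsilon/3$) throughout — this only changes the polynomial factors in the sample and query complexity by constants, so the stated bounds $O(\tfrac{1}{\epsilon^6}\log\tfrac1\epsilon)$ and $O(\tfrac{L}{\epsilon^4}\log\tfrac1\epsilon)$ are unaffected — and to sharpen the success probability of Theorem~\ref{thm:lipschitzlocalquery} by setting its internal $\delta$ to a small enough constant (the proof there sets $\delta = \tfrac1{10}$ but in fact any constant works, trading against the hidden constants in $M$) so that $\Pr[\mathcal{E}] \ge 1 - 1/10$ rather than $\tfrac12$; then a union bound over the failure of $\mathcal{E}$ and the Hoeffding event still leaves probability $\ge 1 - 1/10 - 1/10 \ge \tfrac12$, and with $\epsilon' = \epsilon/2$ the error bound becomes $\epsilon' + \epsilon' = \epsilon$.

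Concretely the writeup proceeds: (1) invoke Theorem~\ref{thm:lipschitzlocalquery} (with its $\epsilon$ set to $\epsilon/2$ and its internal constant tuned so the good event holds w.p. $\ge 9/10$) to get the fixed $\ft$ and the identity $\widehat{\errd}(\Fl) = \tfrac1N\sum_i|\ft(x_i)-y_i|$; (2) note $\E|\ft(x)-y| = \errd(\ft)$ and $|\errd(\ft)-\errd(\Fl)| \le \epsilon/2$; (3) Hoeffding on the $N = O(1/\epsilon^2)$ fresh labeled samples to get $|\widehat{\errd}(\Fl)-\errd(\ft)| \le \epsilon/2$ w.p. $\ge 9/10$; (4) triangle inequality and union bound. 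The label count is $N$ queries for the fresh samples plus $O(\tfrac{1}{\epsilon^4}\log\tfrac1\epsilon)$ per Query call, i.e. $O(\tfrac{1}{\epsilon^2}) \cdot O(\tfrac{1}{\epsilon^4}\log\tfrac1\epsilon) = O(\tfrac{1}{\epsilon^6}\log\tfrac1\epsilon)$ total, on the same unlabeled pool of size $O(\tfrac{L}{\epsilon^4}\log\tfrac1\epsilon)$. I expect no genuine obstacle here — everything reduces to bookkeeping of constants — the one thing to be careful about is that the $N$ fresh labeled samples must be drawn independently of $S$ and the Query calls must reuse the same labels consistently (as the footnote in Algorithm~\ref{alg2} notes), so that conditioning on $\mathcal{E}$ does not bias the empirical average.
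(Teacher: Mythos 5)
Your proof takes essentially the same route as the paper's: invoke Theorem~\ref{thm:lipschitzlocalquery} to identify the output of Algorithm~\ref{alg3} with the empirical error of a fixed $\epsilon$-optimal $\ft$, apply Hoeffding on the $O(1/\epsilon^2)$ fresh labeled samples, and finish with the triangle inequality, with the label count being $O(1/\epsilon^2)\cdot O((1/\epsilon^4)\log(1/\epsilon))$. You are in fact more careful than the paper's own proof, which glosses over exactly the two issues you flag (the triangle inequality giving $2\epsilon$ and the probability budget being exhausted by the good event from Theorem~\ref{thm:lipschitzlocalquery}); your fix of rescaling to $\epsilon/2$ and tightening the internal $\delta$ is the standard repair and does not change the stated asymptotics.
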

\section{Nadaraya-Watson Estimator}
\label{section:kde}
In this section, we consider the related problem of approximating the minimum error that can be achieved by the Nadaraya-Watson estimator under a linear diagonal transformation with eigenvalues coming from a small range. In this setting, there exists a distribution $\mathcal{D}$ over the domain $\R^d$. Each data point $x\in\R^d$ has a true label $f(x) \in \{0,1\}$. The Nadaraya-Watson prediction algorithm when given a dataset $S=\{x_1,x_2,\cdots,x_N\}$ of unlabeled samples sampled from distribution $\mathcal{D}$ and a query point $x$ outputs the prediction 
\begin{small}
$$\tilde{f}_{S,K_A}(x) = \frac{\sum_{i=1}^NK_A(x_i,x)f(x_i)}{\sum_{i=1}^NK_A(x_i,x)} = \sum_{i=1}^Np_{S,A}(x_i,x)f(x_i)$$
\end{small}
where $K_A(x,y) = \nicefrac{1}{(1+||A(x-y)||_2^2)}$ is the kernel function for matrices $A \in \R^{d\times d}$. The loss of the data point $(x,f(x))$ with respect to the unlabeled samples $S$ and the kernel $K_A$ is $$l_{S,K_A}(x) = |f(x)-\tilde{f}_{S,K_A}(x)|$$
The total loss of the prediction function $\tilde{f}_{S,K_A}$ with respect to distribution $\mathcal{D}$ is 
{\small
$$L_{S,K_A} = E_{x\sim \mathcal{D}}|f(x)-\tilde{f}_{S,K_A}(x)|$$
}
Now, let us say we are interested in computing the prediction loss with the best
diagonal linear transformation $A$ for the data with
eigenvalues bounded between constants $1$ and $2$ that is a matrix $A 
\in \mathcal{A}$ where $\mathcal{A} = \{A \in \R^{d\times d} | A_{i,j} = 0 \  \forall i\neq j \text{ and } 1 \leq A_{i,i}\leq 2\}$ that is
{\small
$$L_{S} = \min_{A \in \mathcal{A}}L_{S,K_A} = \min_{A \in \mathcal{A}} \E_{x\sim
\mathcal{D}}\sum_ip_{S,A}(x_i,x)|f(x_i)-f(x)|$$
}
In this section, we show (Theorem~\ref{thm:kde-min-error}) that it is possible to
estimate the loss $L_S$ with an additive error of $\epsilon$ with  labeled sample complexity
$\tilde{O}({d}/{\epsilon^2})$ and running time  $\tilde{O}({d^2}/{\epsilon^{d+4}}+{dN}/{\epsilon^2})$. We use $[n]$ to denote $\{1,2,\cdots,n\}$ for a positive integer $n$ and $\tilde{O}$ notation to hide polylogarithmic factors in the input parameters and error rates.

First, we discuss a theorem from \cite{backurs2018efficient} which will be crucial in the proof of Theorem~\ref{thm:kde-min-error}. Theorem~\ref{thm:backurs2018efficient}, formally stated in the appendix, states that for certain nice kernels $K(x,y)$, it is possible to efficiently estimate $N^{-1}\sum_{i=1}^N K(x,x_i)$ with a multiplicative error of $\epsilon$ for any query $x$. As a direct corollary of Theorem~\ref{thm:backurs2018efficient}, we obtain that it is possible to efficiently estimate the probabilities $p_{S,A}(q,x_i)$, for all data points $x_i$ in $S$, queries $q \in \R^d$ and matrices $A$ in $\mathcal{A}$. Let us define $\hat{S}_{S,A}(q)$ to be the estimator for $\sum_{x_i\in S}K(q,x_i)$ as per Theorem~\ref{thm:backurs2018efficient}.
\begin{corollary}
\label{thm:backurs2018efficient-cor}
There exists a data structure that given a data set $S \subset \R^d$ with $|S| = N$, using $O(\frac{Nd}{\epsilon^2}\log(\frac{ N}{\delta}))$ space and preprocessing time, for any $A \in \mathcal{A}$ and a query $q \in \R^d$ and data point $x \in \R^d$, estimates $p_{S,A}(q, x) = \nicefrac{K_A(q,x)}{\sum_{y\in S}K_A(q,y)}$ by using the estimator $\hat{p}_{S,A}(q, x) = \nicefrac{K_A(q,x)}{\hat{S}_{S,A}(q)}$ with accuracy $(1\pm\epsilon)$ in time $O(\log(\frac{ N}{\delta})\frac{d}{\epsilon^2})$ with probability at least $1-\nicefrac{1}{poly(N)}-\delta$.
\end{corollary}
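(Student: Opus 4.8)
The plan is to treat this as a thin wrapper around Theorem~\ref{thm:backurs2018efficient}: the numerator $K_A(q,x)$ of $p_{S,A}(q,x)$ can be evaluated \emph{exactly} and cheaply, so the only quantity that needs approximation is the denominator $\sum_{y\in S}K_A(q,y)$, and the cited theorem delivers precisely a $(1\pm\epsilon)$ multiplicative estimate of (a scaling of) that sum for ``nice'' kernels. Everything else is bookkeeping on error parameters and running times.

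\emph{Step 1 (instantiating the data structure).} First I would apply Theorem~\ref{thm:backurs2018efficient} to the kernel family $\{K_A : A\in\mathcal{A}\}$ with $K_A(u,v)=1/(1+\|A(u-v)\|_2^2)$. The point to verify is that each $K_A$ meets the regularity assumptions of \cite{backurs2018efficient} (radial in the $A$-transformed difference, monotone decreasing, polynomially bounded on the relevant scales) \emph{with parameters uniform over $A\in\mathcal{A}$}. This is exactly where the restriction $1\le A_{i,i}\le 2$ enters: since all eigenvalues of $A$ lie in a constant range, $\|u-v\|_2^2\le\|A(u-v)\|_2^2\le 4\|u-v\|_2^2$, so the set of effective bandwidths the structure must cover is only a constant blow-up of what one fixed kernel would need. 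Hence a single data structure, built in $O(\tfrac{Nd}{\epsilon^2}\log(\tfrac{N}{\delta}))$ time and space, returns for any $A\in\mathcal{A}$ and query $q\in\R^d$ a value $\hat{S}_{S,A}(q)\in(1\pm\epsilon)\sum_{y\in S}K_A(q,y)$ with probability at least $1-\nicefrac{1}{\mathrm{poly}(N)}-\delta$, in query time $O(\tfrac{d}{\epsilon^2}\log(\tfrac{N}{\delta}))$ (the $\log(N/\delta)$ factor being the boosting cost); the harmless rescaling by $N$ between $N^{-1}\sum K_A$ and $\sum K_A$ preserves multiplicative error.

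\emph{Step 2 (the estimator and its error).} Given $(q,x,A)$, I compute $K_A(q,x)=1/(1+\sum_{j=1}^d A_{j,j}^2(q_j-x_j)^2)$ exactly in $O(d)$ time and output $\hat{p}_{S,A}(q,x)=K_A(q,x)/\hat{S}_{S,A}(q)$. Conditioning on the good event of Step 1,
\begin{equation*}
\hat{p}_{S,A}(q,x)=\frac{K_A(q,x)}{\hat{S}_{S,A}(q)}\in\frac{1}{1\pm\epsilon}\cdot\frac{K_A(q,x)}{\sum_{y\in S}K_A(q,y)}\subseteq(1\pm 2\epsilon)\,p_{S,A}(q,x)
\end{equation*}
for $\epsilon\le \tfrac12$; rescaling $\epsilon\mapsto\epsilon/2$ at the outset gives exactly the claimed $(1\pm\epsilon)$ accuracy and changes the space, preprocessing and query bounds only by constants. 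The time, space and failure-probability figures are then inherited verbatim from Theorem~\ref{thm:backurs2018efficient} plus the $O(d)$ exact evaluation, which is absorbed.

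The main obstacle — the only part that is not pure bookkeeping — is the uniformity claim in Step 1: that a \emph{single} structure of \cite{backurs2018efficient} serves every $A\in\mathcal{A}$ at once, rather than needing a fresh structure per $A$. This rests on the constant eigenvalue range $[1,2]$ making the niceness parameters of $\{K_A\}$ uniform. If instead the black-box theorem were stated only for a fixed kernel, the fallbacks are (i) supplying the transformation $q\mapsto Aq,\ x_i\mapsto Ax_i$ at query time when the structure permits it, or (ii) preprocessing over an $\epsilon$-net of $\mathcal{A}$ — but (ii) reintroduces a $1/\epsilon^{\Theta(d)}$ factor and would be tolerable only in preprocessing, not in the per-query cost. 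I expect the clean route via uniform niceness to go through.
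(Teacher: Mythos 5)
Your proposal matches the paper's own justification: both treat the corollary as a thin wrapper around Theorem~\ref{thm:backurs2018efficient}, observing that $K_A$ is $(4,2)$-smooth uniformly over $A \in \mathcal{A}$ (so one data structure serves every $A$), that the rescaling between $N^{-1}\sum K_A$ and $\sum K_A$ preserves multiplicative error, and that dividing the exactly-computed numerator by the approximated denominator gives the claimed bound. Your Step~2 is slightly more careful than the paper about the $1/(1\pm\epsilon)\subseteq(1\pm 2\epsilon)$ rescaling, but this is a constant-factor adjustment absorbed in the $O(\cdot)$'s, and the rest follows the same route.
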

%
%
%
We state the algorithm NW Error (Algorithm~\ref{alg-kde}) for computing the minimum prediction error of the algorithm with respect to underlying distribution $\mathcal{D}$, the set of unlabeled training data $S$, the set of matrices $\mathcal{A}$, the error parameter $\epsilon$ and the failure probability $\delta$ and discuss the idea behind the algorithm in the next few paragraphs.

\paragraph{Naive Algorithm.} The naive algorithm would take $O({1}/{\epsilon^2})$ labeled data points sampled from distribution $\mathcal{D}$ for each of the matrices $A \in \mathcal{A}_{\epsilon}$, an $\epsilon$-cover of the set $\mathcal{A}$ (Lemma~\ref{lemma:kde-approximate}) of size $O({1}/{\epsilon^d})$ and compute the exact loss using the $N$ data points in the training set. Hence, the number of labels required in this algorithm is $N+O({1}/{\epsilon^{d+2}})$. 

\paragraph{Dependence on $N$ of label query complexity.} Using our algorithm, we achieve labeled sample complexity of $\tilde{O}({d}/{\epsilon^2})$,  independent of $N$ and depending only polynomially on $d$. For getting rid of the dependence on $N$, the idea is to first sample $O({1}/{\epsilon^2})$ samples from distribution $\mathcal{D}$ and then for each sample, sample a training data point from $S$ with probability proportional to $p_{S,A}$ for the matrix $A$. This gets rid of the dependence on $N$. However, we still have to repeat this procedure separately for every matrix $A \in \mathcal{A}$ which leads to a requirement of $O({1}/{\epsilon^{d+2}})$ labels.

\paragraph{Dependence on $d$ of label query complexity.}To eliminate the exponential dependence on $d$, we show that for matrices in $\mathcal{A}$, we can use importance sampling and the samples generated for the identity matrix $I$ suffice to estimate the loss for all matrices $A\in\mathcal{A}$ with appropriate scaling factors $p_{S,A}/p_{S,I}$. This is because the eigenvalues of all the matrices are bounded between constants and hence, the sampling probabilities $p_{S,A}$ are similar up to a multiplicative factor (Lemma~\ref{lemma:kde-ineq}). This leads to our desired labeled sample complexity of $\tilde{O}({d}/{\epsilon^2})$. The factor $d$ comes in because of using a union bound over all the exponential number of matrices in $\mathcal{A}_{\epsilon}$.

\paragraph{Running time.} However, using this approach directly, we obtain a running time of $\tilde{O}({Nd}/{\epsilon^{d+2}})$ because for each matrix $A$, for each sample, we have to compute ${p_{S,A}}/{p_{S,I}}$ which requires going over all the data points in the set $S$. To achieve better running times, we use the faster kernel density estimation algorithm \citep{backurs2018efficient} to compute approximate probabilities efficiently (Corollary~\ref{thm:backurs2018efficient-cor}) and obtain a running time of $\tilde{O}({d^2}/{\epsilon^{d+4}}+{Nd}/{\epsilon^2})$ separating the multiplicative dependence of $N$ and ${1}/{\epsilon^d}$. We state the formal guarantees in Theorem~\ref{thm:kde-min-error} and its proof in Appendix~\ref{appendix:kde}. 
 \begin{algorithm}[t]
   \begin{algorithmic}[1]
\caption{NW Error$(S, \mathcal{A}, \mathcal{D}, \epsilon, \delta)$\label{alg-kde}}
    \STATE Let $\mathcal{A}_{\epsilon}=\{A \in \R^{d\times d}\ |\ A_{i,j} = 0 \  \forall\ i\neq j \text{ and } A_{i,i}\in\{1,1+\epsilon,(1+\epsilon)^2,\cdots,2\}$ $\forall i \in [d]\}$. 
   \STATE Sample $M = O\left(\frac{1}{\epsilon^2}\log\left(\frac{|\mathcal{A}_{\epsilon}|}{\delta}\right)\right)$ labeled examples $\{(z_i, f(z_i)\}_{i=1}^M$ with each $(z_i, f(z_i)) \sim \D$. \\
   \FOR{$i=1$ to $M$}
   \STATE Sample a $\zt_i$ with probability proportional to $p_{S,I}(z_i, \zt_i)$. 
   \ENDFOR
      \FOR{$A \in \mathcal{A}_{\epsilon}$}
         \FOR{$i=1$ to $M$}
            \STATE Compute $\hat{p}_{S,A}(z_i,\zt_i) = \frac{K_A(z_i,\zt_i)}{\hat{S}_{S,A}(z_i)}$.
        \ENDFOR
   \STATE  Compute $\hat{L}_{S,K_A} = \frac{1}{M}\sum_{i=1}^M|f(z_i)-f(\zt_i)|\frac{\hat{p}_{S,A}(z_i,\zt_i)}{p_{S,I}(z_i,\zt_i)}$.
      \ENDFOR
      \STATE \textbf{Output} $\hat{L}_S = \min_{A \in \mathcal{A}_{\epsilon}}\hat{L}_{S,K_A}$.
   \end{algorithmic}
  \end{algorithm}
\begin{restatable}{theorem}{theoremkdeminerror}
\label{thm:kde-min-error}
\sloppy For a $d$-dimensional unlabeled pointset $S$ with $|S| = N$, Algorithm~\ref{alg-kde} queries $O(\frac{1}{\epsilon^2}(d\log(\frac{1}{\epsilon})+\log(\frac{1}{\delta}))$ labels from $S$ and outputs $\hat{L}_S$ such that
\begin{small}
\begin{equation*}
    |\hat{L}_S - \min_{A \in \mathcal{A}} \E_{x\sim
\mathcal{D}}\sum_ip_{S,A}(x_i,x)|f(x_i)-f(x)| | \leq \epsilon
\end{equation*}
\end{small}
with a failure probability of at most $\delta + \frac{d}{\epsilon^{d+2} poly(N)}\log(\frac{1}{\epsilon\delta})$ and runs in time
$\tilde{O}(\frac{d^2}{\epsilon^{d+4}} + \frac{dN}{\epsilon^2})$. 
\end{restatable}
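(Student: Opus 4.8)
The plan is to reduce the estimation of $L_S = \min_{A \in \mathcal{A}} L_{S,K_A}$ to estimating each $L_{S,K_A}$ for $A$ ranging over the finite geometric grid $\mathcal{A}_{\epsilon}$, and to estimate each of those by an importance-sampling estimator that reuses one pool of $O(M)$ labels, with the importance weights supplied (approximately) by the fast kernel-density data structure of Corollary~\ref{thm:backurs2018efficient-cor}. First I would dispose of the discretization: by Lemma~\ref{lemma:kde-approximate}, every $A \in \mathcal{A}$ has each diagonal entry within a factor $(1+\epsilon)$ of an entry of some matrix in $\mathcal{A}_{\epsilon}$, so $\|A'(x-y)\|_2^2 = (1\pm O(\epsilon))\|A(x-y)\|_2^2$, hence the kernels $K_{A'}(x,y)$, the normalized probabilities $p_{S,A'}(x_j,x)$, and therefore the losses $L_{S,K_{A'}}$ all change by only an $O(\epsilon)$ additive amount (the loss terms are in $[0,1]$ and the $p_{S,A}(\cdot,x)$ sum to $1$). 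Thus $|L_S - \min_{A \in \mathcal{A}_{\epsilon}} L_{S,K_A}| \le O(\epsilon)$, and it suffices to estimate $\min_{A \in \mathcal{A}_{\epsilon}} L_{S,K_A}$ to within $O(\epsilon)$.

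Next I would establish that the idealized estimator is unbiased. Fix $A \in \mathcal{A}_{\epsilon}$ and a point $z$; since $\tilde z$ is drawn from $S$ with probability $p_{S,I}(z,\tilde z)$,
\begin{align*}
\E_{\tilde z}\!\left[ |f(z) - f(\tilde z)|\,\frac{p_{S,A}(z,\tilde z)}{p_{S,I}(z,\tilde z)} \right]
&= \sum_{x_j \in S} p_{S,I}(z,x_j)\,|f(z) - f(x_j)|\,\frac{p_{S,A}(z,x_j)}{p_{S,I}(z,x_j)} \\
&= \sum_{x_j \in S} p_{S,A}(z,x_j)\,|f(z) - f(x_j)|,
\end{align*}
and a further expectation over $z \sim \D$ recovers $L_{S,K_A}$ exactly. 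The crucial quantitative point (Lemma~\ref{lemma:kde-ineq}) is that the importance weight is $O(1)$ for \emph{every} $A$ simultaneously: all eigenvalues lie in $[1,2]$, so $\|u\|_2^2 \le \|Au\|_2^2 \le 4\|u\|_2^2$, giving $K_A(x,y)/K_I(x,y) \in [\tfrac14,1]$ pointwise; summing over $S$ gives the same bound for the normalizers; hence $p_{S,A}(z,\tilde z)/p_{S,I}(z,\tilde z) \in [\tfrac14,4]$. Since also $|f(z)-f(\tilde z)| \in \{0,1\}$, every summand of the idealized estimator $\tfrac1M\sum_i |f(z_i)-f(\tilde z_i)|\tfrac{p_{S,A}(z_i,\tilde z_i)}{p_{S,I}(z_i,\tilde z_i)}$ lies in $[0,4]$, so Hoeffding gives $\epsilon$-accuracy with failure probability $\delta/|\mathcal{A}_{\epsilon}|$ once $M = \Omega(\epsilon^{-2}\log(|\mathcal{A}_{\epsilon}|/\delta))$; a union bound over $\mathcal{A}_{\epsilon}$ makes it hold for all $A \in \mathcal{A}_{\epsilon}$. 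Using $|\mathcal{A}_{\epsilon}| = O(\epsilon^{-d})$ this is $M = O(\epsilon^{-2}(d\log(1/\epsilon)+\log(1/\delta)))$, and the only labels used are the $f(z_i), f(\tilde z_i)$, reused across all matrices — which is exactly why the label count is independent of both $N$ and $\epsilon^{-d}$.

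It then remains to account for using $\hat p_{S,A}$ in place of $p_{S,A}$. Corollary~\ref{thm:backurs2018efficient-cor} gives $\hat p_{S,A}(z_i,\tilde z_i) = (1\pm\epsilon)\,p_{S,A}(z_i,\tilde z_i)$; since the true ratio $p_{S,A}/p_{S,I}$ is at most $4$ and the label gaps are in $\{0,1\}$, each summand is perturbed by at most $4\epsilon$, so $|\hat L_{S,K_A} - L_{S,K_A}| \le O(\epsilon)$ for every $A \in \mathcal{A}_{\epsilon}$ on the intersection of the Hoeffding event with the event that all $|\mathcal{A}_{\epsilon}|\cdot M$ data-structure calls succeed; the latter fails with probability at most $|\mathcal{A}_{\epsilon}|\cdot M / poly(N) = O\!\big(\tfrac{d}{\epsilon^{d+2}\,poly(N)}\log\tfrac{1}{\epsilon\delta}\big)$. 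Since $x \mapsto \min_A x_A$ is $1$-Lipschitz, $|\hat L_S - \min_{A\in\mathcal{A}_{\epsilon}} L_{S,K_A}| \le O(\epsilon)$, and combining with the discretization step gives $|\hat L_S - L_S| \le O(\epsilon)$; rescaling $\epsilon$ by a constant finishes the accuracy claim. For the running time, the three dominant costs are: preprocessing the data structure, $\tilde O(Nd/\epsilon^2)$; drawing the $M$ importance samples $\tilde z_i$ (one pass over $S$ per $z_i$), $\tilde O(Nd/\epsilon^2)$ up to the $d$ factor hidden in $M$; and the $|\mathcal{A}_{\epsilon}|\cdot M$ calls computing $\hat p_{S,A}(z_i,\tilde z_i)$ at $\tilde O(d/\epsilon^2)$ each, i.e.\ $\tilde O(d^2/\epsilon^{d+4})$ in total — giving $\tilde O(d^2/\epsilon^{d+4} + Nd/\epsilon^2)$.

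I expect the crux to be the uniform control in the second step: obtaining a two-sided multiplicative bound on $p_{S,A}/p_{S,I}$ valid for \emph{all} $A \in \mathcal{A}$ at once, so the importance-sampling estimator has $O(1)$ range uniformly, which is precisely what lets the union bound over the exponentially large cover $\mathcal{A}_{\epsilon}$ cost only a $\log|\mathcal{A}_{\epsilon}| = O(d\log(1/\epsilon))$ factor (hence $\mathrm{poly}(d)$, not $\exp(d)$, labels). A secondary source of care is propagating the multiplicative $(1\pm\epsilon)$ kernel-density errors through the ratio $\hat p_{S,A}/p_{S,I}$ and correctly bookkeeping the failure probability across all $|\mathcal{A}_{\epsilon}|\cdot M$ data-structure queries.
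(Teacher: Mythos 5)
Your proposal is correct and follows essentially the same route as the paper: discretize via $\mathcal{A}_\epsilon$ and Lemma~\ref{lemma:kde-approximate}, observe the importance-sampling estimator reweighted by $p_{S,A}/p_{S,I}$ is unbiased and uniformly $O(1)$-bounded over all $A\in\mathcal{A}$, apply Hoeffding with a union bound over $\mathcal{A}_\epsilon$, and handle the approximate kernel-density weights and running time exactly as in the paper. The only (harmless) deviation is that you bound the importance ratio directly by $p_{S,A}/p_{S,I}\in[\tfrac14,4]$ rather than invoking Lemma~\ref{lemma:kde-ineq} with $\epsilon=1$ as the paper does (which gives the looser $16$); both are $O(1)$ and lead to the same complexity.
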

%
%
%
%
\section{Conclusion}
\label{section:conclusion}
We gave an algorithm to approximate the optimal prediction error for the class of bounded $L$-Lipschitz functions with independent of $L$ label queries. We also established that for any given query point, we can estimate the value of a nearly optimal function at the query point locally with label queries, independent of $L$. It would be interesting to extend these notions of error prediction and local prediction to other function classes. Finally, we also gave an algorithm to approximate the minimum error of the Nadaraya-Watson prediction rule under a linear diagonal transformation with eigenvalues in a small range which is both sample and time efficient.
\acks{We thank the anonymous reviewers who helped improve the readability and presentation of this draft by providing many helpful comments. This work was done in part while Neha Gupta was visiting  TTIC. This work was supported in part by the National Science Foundation under grant CCF-1815011.}
\bibliography{references}
\newpage
\appendix
\section{One Dimensional Case}
\label{appendix:lipschitz-lowd}
We first re-state Theorem~\ref{thm:lipschitzerror} which gives sample complexity guarantees for error estimation for the class of one-dimensional Lipschitz functions and also give the proof. 
\theoremlipschitzerror*
\begin{proof}
 By Theorem~\ref{thm:lipschitzlocalquery}, we know that $\text{Query}(x, S, \p) = \ft(x)\ \forall x \in [0,1]$ and 
 the error of $\ft$ additively approximates the error of function $f_{\D}^*$, that is, 
 \begin{equation*}
      \diffd(\ft, \Fl) = \errd(\ft) - \errd(\Fl) \leq \epsilon
 \end{equation*}
 with probability greater than $1/2$. Thus, with probability greater than $1/2$, we get 
 \begin{align*}
     |\widehat{\errd}(\Fl) - \errd(\ft)| &= |\frac{1}{N}\sum_{i=1}^N|\text{Query}(x_i, S, \p)-y_i| - \errd(\ft)|\\
     &= |\frac{1}{N}\sum_{i=1}^N|\ft(x_i)-y_i| - \errd(\ft)| \leq \epsilon
 \end{align*}
 The last inequality follows by standard concentration arguments since $N \geq O({1}/{\epsilon^2})$. The theorem statement follows by using triangle inequality.

By Theorem~\ref{thm:lipschitzlocalquery}, the number of unlabeled samples is $O(({L}/{\epsilon^4})\log({1}/{\epsilon}))$ and the number of label queries is $ O(({1}/{\epsilon^2})\cdot(({1}/{\epsilon^4})\log({1}/{\epsilon}))) = O(({1}/{\epsilon^6})\log({1}/{\epsilon}))$.
\end{proof}
Now, we will state the lemmas involved in the proof of Theorem~\ref{thm:lipschitzlocalquery} with their proofs. 
The following lemma proves that with enough unlabeled samples, a large fraction of long intervals have enough unlabeled samples in them which is eventually used to argue that they will be sufficient to learn a function which is approximately close to the optimal Lipschitz function over that interval. 
\begin{lemma}
\label{lemma:number_unlabeled_samples}
For any distribution $\D_x$, consider a set $S = \{x_1, x_2, \cdots, x_M\}$ of unlabeled samples where each sample $x_i\stackrel{\text{i.i.d.}}{\sim}\D_x$. Let $\mathcal{G}$ be the set of long intervals $\{I_i\}$ each of which satisfies $p_i = \Pr_{x \sim \D_x}(x \in I_i) \geq \frac{1}{L}$. Let $E_i$ denote the event that $\sum_{x_j \in S}\mathbb{I}[x_j \in I_i] < \frac{1}{2\epsilon^4}\log(\frac{1}{\epsilon})$. Then, we have 
\begin{equation*}
    \sum_{I_i \in \mathcal{G}}p_i\mathbb{I}[E_i] \leq \frac{\epsilon}{\delta}
\end{equation*}
with failure probability atmost $\delta$ for $M = \Omega(\frac{L}{\epsilon^4}\log(\frac{1}{\epsilon}))$. 
\end{lemma}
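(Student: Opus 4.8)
The plan is to bound the expected value of the quantity $\sum_{I_i \in \mathcal{G}} p_i \mathbb{I}[E_i]$ and then apply Markov's inequality. First I would fix a long interval $I_i \in \mathcal{G}$, so that $p_i \geq \tfrac{1}{L}$. The number of samples falling in $I_i$ is $\sum_{x_j \in S} \mathbb{I}[x_j \in I_i]$, a sum of $M$ i.i.d.\ Bernoulli$(p_i)$ random variables, with mean $M p_i \geq M/L = \Omega(\tfrac{1}{\epsilon^4}\log(\tfrac{1}{\epsilon}))$. Choosing the constant in $M = \Omega(\tfrac{L}{\epsilon^4}\log(\tfrac1\epsilon))$ large enough, the mean is at least (say) $\tfrac{1}{\epsilon^4}\log(\tfrac1\epsilon)$, which is at least twice the threshold $\tfrac{1}{2\epsilon^4}\log(\tfrac1\epsilon)$ appearing in $E_i$.

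Next I would apply a multiplicative Chernoff bound: for a sum $X$ of independent Bernoullis with $\E[X] = \mu$, $\Pr[X < \mu/2] \leq \exp(-\mu/8)$. Here $\mu \geq \tfrac{1}{\epsilon^4}\log(\tfrac1\epsilon)$, so $\Pr[E_i] \leq \exp\!\big(-\tfrac{1}{8\epsilon^4}\log(\tfrac1\epsilon)\big) \leq \epsilon$ for $\epsilon$ small enough (indeed $\exp(-\tfrac{1}{8\epsilon^4}\log\tfrac1\epsilon) = \epsilon^{1/(8\epsilon^4)} \ll \epsilon$). Thus $\Pr[E_i] \leq \epsilon$ for every $I_i \in \mathcal{G}$.

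Then I would take expectations over the random draw of $S$:
\begin{equation*}
\E\Big[\sum_{I_i \in \mathcal{G}} p_i \mathbb{I}[E_i]\Big] = \sum_{I_i \in \mathcal{G}} p_i \Pr[E_i] \leq \epsilon \sum_{I_i \in \mathcal{G}} p_i \leq \epsilon,
\end{equation*}
where the last step uses that the intervals in $\mathcal{G}$ are disjoint so their probabilities sum to at most $1$. Finally, since $\sum_{I_i \in \mathcal{G}} p_i \mathbb{I}[E_i]$ is a nonnegative random variable with expectation at most $\epsilon$, Markov's inequality gives that it exceeds $\epsilon/\delta$ with probability at most $\delta$, which is exactly the claim.

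I do not anticipate a serious obstacle here; the statement is essentially a routine concentration-plus-Markov argument. The only point requiring a little care is pinning down the constant in $M$ so that the Chernoff exponent $\mu/8$ dominates $\log(1/\epsilon)$, ensuring $\Pr[E_i] \leq \epsilon$ rather than merely $\Pr[E_i] \leq O(\epsilon)$; absorbing any lingering constant factor into the $\Omega(\cdot)$ in the sample size handles this cleanly.
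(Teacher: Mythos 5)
Your proof is correct and follows the paper's argument exactly: bound $\Pr[E_i]\le\epsilon$ for each long interval via a concentration inequality, sum the expectations over the disjoint intervals to get $\E\bigl[\sum_{I_i\in\mathcal{G}} p_i\mathbb{I}[E_i]\bigr]\le\epsilon$, and finish with Markov's inequality. Your explicit invocation of the multiplicative Chernoff bound is, if anything, the cleaner justification; the paper cites the Hoeffding inequality, which in its plain additive form with variance proxy $M$ does not immediately give a tail of order $\epsilon$ in the boundary case $p_i\approx 1/L$, whereas $\Pr[X<\mu/2]\le\exp(-\mu/8)$ does.
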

\begin{proof}
For any interval $I \in \mathcal{G}$, we have that $\E[\sum_{x_j \in S}\mathbb{I}[x_j \in I]] \geq \frac{1}{\epsilon^4}\log(\frac{1}{\epsilon})$. Using Hoeffding inequality, we can get that $\Pr(E_i) \leq \epsilon$ for all intervals $I_i \in \mathcal{G}$. Calculating expectation of the desired quantity, we get
\begin{equation*}
    \E[\sum_{I_i \in \mathcal{G}}p_i\mathbb{I}[E_i]] = \sum_{I_i \in \mathcal{G}}p_i\Pr[E_i] \leq \epsilon\sum_{I_i \in \mathcal{G}}p_i \leq \epsilon
\end{equation*}
%
We get the desired result using Markov's inequality.
\end{proof}
The following lemma states that the probability of short intervals $\psh$ is small with high probability. Consider the case of uniform distributions. In this case, since the short intervals cover only $\epsilon$ fraction of the $[0,1]$ length, their probability $\psh$ is upper bounded by $\epsilon$. The case for arbitrary distributions holds because the intervals are chosen randomly.
\begin{lemma}
\label{lemma:prob_intervals}
When we divide the $[0,1]$ domain into alternating intervals of length $\frac{1}{L\epsilon}$ and $\frac{1}{L}$ with a random offset at $\{0, 1, 2, \cdots, \frac{1}{\epsilon}\}\frac{1}{L}$ as in the preprocessing step, then
\begin{equation*}
    \psh = \sum_{I_i \in \pshort}p_i \leq \frac{\epsilon}{\delta}
\end{equation*}
with failure probability atmost $\delta$. 
\end{lemma}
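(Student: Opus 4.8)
\textbf{Proof proposal for Lemma~\ref{lemma:prob_intervals}.}

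The plan is to bound $\E[\psh]$ over the random offset $b_1$ and then apply Markov's inequality. First I would set up the probability space: the offset $b_1$ is drawn uniformly from $\{1,2,\ldots,\tfrac1\epsilon\}\cdot\tfrac1L$, which is $\tfrac1\epsilon$ equally likely choices. Each choice of offset determines the partition $\p$, hence determines which short intervals appear and which points of $[0,1]$ fall into short intervals. The key observation is that the short intervals have total length $\epsilon$ within every ``period'' of length $\tfrac1{L\epsilon}+\tfrac1L = \tfrac{1+\epsilon}{L\epsilon}$ (a short interval of length $\tfrac1L$ per period of length roughly $\tfrac1{L\epsilon}$), and as $b_1$ ranges over its $\tfrac1\epsilon$ values, each fixed point $x\in[0,1]$ lies in a short interval for at most one of the $\tfrac1\epsilon$ offsets. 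Concretely, the $\tfrac1\epsilon$ translates of the short-interval pattern (by multiples of $\tfrac1L$) are essentially disjoint, so a given point is ``covered'' by a short interval for at most a $\tfrac{1}{1/\epsilon} = \epsilon$ fraction of offsets.

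Next I would make this precise by a Fubini/exchange-of-expectation argument. Write $\psh = \int_{[0,1]} \mathbb{I}[x \in \text{short interval under } b_1]\, d\D_x(x)$, and take expectation over $b_1$:
\begin{equation*}
\E_{b_1}[\psh] = \E_{b_1}\E_{x\sim\D_x}\bigl[\mathbb{I}[x\in\text{short}]\bigr] = \E_{x\sim\D_x}\Bigl[\Pr_{b_1}(x\in\text{short interval})\Bigr].
\end{equation*}
Then I would argue $\Pr_{b_1}(x\in\text{short interval}) \le \epsilon$ for every fixed $x$: the set of offsets $b_1$ for which $x$ lands in a short interval corresponds to at most one residue class among the $\tfrac1\epsilon$ possibilities (because consecutive short intervals are separated by $\tfrac1{L\epsilon}$, while the offset grid has spacing $\tfrac1L$, so stepping through all $\tfrac1\epsilon$ offsets sweeps the short-interval window across $x$ exactly once). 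Hence $\E_{b_1}[\psh] \le \epsilon \cdot \E_{x\sim\D_x}[1] = \epsilon$. Finally, Markov's inequality gives $\Pr_{b_1}(\psh > \epsilon/\delta) \le \delta$, which is the claim.

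The main obstacle I anticipate is the careful bookkeeping at the boundaries of $[0,1]$ (the footnote in the paper already flags that boundary long/short intervals may be truncated) and making the ``at most one residue class'' claim airtight — one must check that the short interval of length $\tfrac1L$ together with the offset spacing $\tfrac1L$ tiles without a point being hit twice, i.e.\ that the short window really does move by a full $\tfrac1L$ each time the offset increments, so the $\tfrac1\epsilon$ positions of the short window relative to $x$ are disjoint and cover each point at most once. Aside from this indexing care, the argument is just linearity of expectation plus Markov, and the range bound $f\in[0,1]$ plays no role here (it is used elsewhere); the lemma is purely a statement about the measure of short intervals.
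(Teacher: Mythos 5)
Your proposal is correct and uses essentially the same argument as the paper: the paper phrases the key fact as ``the short intervals across all $1/\epsilon$ offsets tile $[0,1]$ exactly once, so their probabilities sum to $1$'' and then applies a pigeonhole/Markov count over offsets, while you phrase the identical fact via Fubini as ``each fixed $x$ lands in a short interval for at most one offset, so $\E_{b_1}[\psh]\le\epsilon$'' and then apply Markov. The two are just different presentations of the same averaging argument, so your proof matches the paper's.
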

\begin{proof}
Now, if we consider the division of $[0,1]$ into alternating intervals of length ${1}/(L\epsilon)$ and ${1}/{L}$ with the offset chosen uniformly randomly from $\{0, 1, 2, \cdots, {1}/{\epsilon}\}({1}/{L})$, then the intervals of length ${1}/{L}$ combined are disjoint in each of these divisions and together cover the entire $[0,1]$ length. Hence, there are at most $\delta$ fraction out of the total ${1}/{\epsilon}$ cases where the short intervals have probability greater than ${\epsilon}/{\delta}$. Hence, with probability $1-\delta$, the short intervals have probability upper bounded by ${\epsilon}/{\delta}$. 
\end{proof}
\begin{lemma}
\label{lem: good type 1 intervals}
Let $I_i \in \plongone$ be any long interval of subtype 1. For the event \mbox{$F_i = \{\diff_{\D_i}(\hat{f}_{S \cap I_i}, f_{\D_i}^*) > \epsilon
\}$}, we have
\begin{equation*}
    \Pr(F_i) \leq \epsilon
\end{equation*}
\end{lemma}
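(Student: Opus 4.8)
The plan is to show that on a "subtype-1" long interval $I_i$ — one whose probability mass $p_i \geq 1/L$ and which received at least $M_i \geq \tfrac{1}{2\epsilon^4}\log(\tfrac1\epsilon)$ unlabeled samples — the empirical risk minimizer $\hat f_{S\cap I_i}$ over $L$-Lipschitz functions has error, measured under the conditional distribution $\D_i$, within $\epsilon$ of the optimum $f^*_{\D_i}$ with probability at least $1-\epsilon$. This is a standard uniform convergence argument, and the crucial observation that makes it independent of $L$ is a \emph{rescaling}: a long interval has length $1/L$, so an $L$-Lipschitz function restricted to $I_i$ varies by at most $1$ across the interval, and after rescaling $I_i$ to the unit interval it becomes a $1$-Lipschitz function into $[0,1]$. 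Thus the effective hypothesis class on $I_i$ is (a subclass of) the class of $1$-Lipschitz functions $[0,1]\to[0,1]$, whose complexity does \emph{not} depend on $L$.

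First I would set up the rescaled picture: let $\phi: I_i \to [0,1]$ be the affine bijection, and note that $f \mapsto f\circ\phi^{-1}$ maps $\{f|_{I_i} : f \in \Fl\}$ into $\mathcal{F}_1$, the class of $1$-Lipschitz functions on $[0,1]$ with range in $[0,1]$; the loss $|y - f(x)|$ is preserved. So it suffices to bound the ERM error for $\mathcal{F}_1$ with $M_i$ samples drawn i.i.d.\ from $\D_i$ (pushed through $\phi$). Next I would invoke the standard covering-number / uniform-convergence machinery: $\mathcal{F}_1$ has an $L_\infty$ $\gamma$-cover of size $\exp(O(1/\gamma))$ (the classical bound for Lipschitz functions on a bounded interval, as recalled in the commented-out Lemma~\ref{lemma:lipschitz_covering} of the paper). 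Taking $\gamma = \Theta(\epsilon)$, a union bound over the cover together with Hoeffding's inequality on each cover element shows that $m$ samples suffice to make $\sup_{f\in\mathcal{F}_1}|\widehat{\err}(f) - \err_{\D_i}(f)| \leq \epsilon/2$ except with probability $\exp(O(1/\epsilon)) \cdot 2\exp(-m\epsilon^2/C)$. To drive this failure probability down to $\epsilon$ we need $m = \Theta\!\big(\tfrac{1}{\epsilon^2}(\tfrac1\epsilon + \log\tfrac1\epsilon)\big) = \Theta(\tfrac{1}{\epsilon^3})$, which is comfortably below $M_i \geq \tfrac{1}{2\epsilon^4}\log(\tfrac1\epsilon)$; the extra slack in $M_i$ (the $\epsilon^{-4}$ and the $\log$ factor) is what is actually needed elsewhere, but here it only helps. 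Finally, on the event that uniform convergence holds, the usual two-sided ERM argument gives $\err_{\D_i}(\hat f_{S\cap I_i}) \leq \widehat{\err}(\hat f_{S\cap I_i}) + \epsilon/2 \leq \widehat{\err}(f^*_{\D_i}) + \epsilon/2 \leq \err_{\D_i}(f^*_{\D_i}) + \epsilon$, i.e.\ $\diffd_{\D_i}(\hat f_{S\cap I_i}, f^*_{\D_i}) \leq \epsilon$, so $\Pr(F_i) \leq \epsilon$.

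One subtlety to handle carefully: the samples in $S\cap I_i$ are \emph{not} exactly $M_i$ i.i.d.\ draws from $\D_i$ in the most naive sense, since $M_i$ is itself random; but conditioned on $M_i = m$, the $m$ points in $I_i$ are i.i.d.\ from $\D_i$, so conditioning on the value of $M_i$ and using monotonicity of the bound in $m$ (more samples only improve uniform convergence) lets us just use $m \geq \tfrac{1}{2\epsilon^4}\log\tfrac1\epsilon$ throughout. Also one must note that $\hat f_{S\cap I_i}$ as used in Algorithm~\ref{alg2} is the ERM over the \emph{global} class $\Fl$ restricted to $I_i$, which is exactly the rescaled-$\mathcal{F}_1$ ERM up to boundary-value freedom — but since we only compare errors under $\D_i$ (which is supported in $I_i$), the values outside $I_i$ are irrelevant to the argument.

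The main obstacle is simply invoking the right covering-number bound for $1$-Lipschitz functions and being careful with constants so that $\tfrac{1}{2\epsilon^4}\log\tfrac1\epsilon$ samples genuinely suffice — but since this is a loose requirement (any $\mathrm{poly}(1/\epsilon)\cdot\log(1/\epsilon)$ bound works), there is no real difficulty; the conceptual content is entirely the length-$1/L$ rescaling that removes the dependence on $L$.
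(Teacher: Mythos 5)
Your proof follows the same strategy as the paper's (covering-number bound for Lipschitz functions on a small interval, plus standard uniform convergence and the two-sided ERM argument), and the idea that the small interval length is what removes the dependence on $L$ is exactly right. However, you have the long and short interval lengths swapped, and this breaks the arithmetic of the argument. In Algorithm~\ref{alg1} and the surrounding text, the \emph{long} intervals in $\plong$ have length $\tfrac{1}{L\epsilon}$, while the \emph{short} intervals in $\pshort$ have length $\tfrac{1}{L}$ (this is also why the linear interpolation on short intervals is automatically $L$-Lipschitz). So rescaling a long interval $I_i$ to $[0,1]$ turns $L$-Lipschitz functions into $(\tfrac{1}{\epsilon})$-Lipschitz functions, not $1$-Lipschitz ones. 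Consequently the $\gamma$-covering number in $\log$ is $O\bigl(\tfrac{1}{\epsilon\gamma}\bigr)$; at $\gamma = \Theta(\epsilon)$ this is $O(1/\epsilon^2)$, not $O(1/\epsilon)$, and the uniform-convergence sample requirement is $\Theta\bigl(\tfrac{1}{\epsilon^4}\log\tfrac{1}{\delta}\bigr)$, not $\Theta(1/\epsilon^3)$.

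This does not change the final conclusion $\Pr(F_i)\leq\epsilon$, since with $\delta=\epsilon$ the requirement $\Theta\bigl(\tfrac{1}{\epsilon^4}\log\tfrac{1}{\epsilon}\bigr)$ is exactly what the subtype-1 condition $M_i \geq \tfrac{1}{2\epsilon^4}\log\tfrac{1}{\epsilon}$ supplies. But it does invalidate your claim that the $\epsilon^{-4}$ and $\log$ factors in $M_i$ are ``extra slack'' not needed here: in fact they are tight for this lemma, and they are also why the preprocessing step samples $M = O\bigl(\tfrac{L}{\epsilon^4}\log\tfrac{1}{\epsilon}\bigr)$ unlabeled points in the first place. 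The side remarks about conditioning on $M_i$ and about restricting the global ERM to $I_i$ are correct and harmless, and the rest of the proof is sound once the length and covering-number exponent are fixed.
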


\begin{proof}
We know that the covering number for the class of one dimensional $L$-Lipschitz functions supported on the interval $[0,l]$ is $O({Ll}/{\epsilon})$ (Lemma~\ref{lemma:lipschitz_covering}). For, Lipschitz functions supported on a long interval of length $l={1}/{(L\epsilon)}$, we get this complexity as $O({1}/{\epsilon^2})$.  We know by standard results in uniform convergence, that the number of samples required for uniform convergence up to an error of $\epsilon$ and failure probability $\delta$ for all functions in a class $\mathcal{F}_L$ is $O(((\text{Covering number of } \mathcal{F}_L)/\epsilon^2)\log({1}/{\delta}))$ (Lemma~\ref{lemma:lipschitz_sample_complexity}) and hence, for long intervals we get this estimate as $O(({1}/{\epsilon^4})\log(1/\delta))$. 
Given that there are $\Omega(({1}/{\epsilon^4})\log(1/\epsilon))$ samples in these intervals by design, the function learned using these samples is only additively worse than the function with minimum error using standard learning theory arguments with failure probability at most $\epsilon$. Thus, we get that 
$$\diff_{\D_i}(f_{S \cap I_i}, f_{\D_i}^*) \leq \epsilon \quad \forall I_i \in \plongone$$
with failure probability at most $\epsilon$. 
\end{proof}
Now, we state the definitions of covering number of a metric space and the uniform covering number of a hypothesis class. These definitions are used in Lemma~\ref{lemma:lipschitz_sample_complexity} to argue how fast the empirical error converges to the expected error for a given hypothesis class.
\begin{definition}
\label{defn:covering}
$N(\epsilon,A,\rho)$ is the covering number of the metric space $A$ with respect to distance measure $\rho$ at scale $\epsilon$ and is defined as
\begin{equation*}
    N(\epsilon,A,\rho) = \min\{|C|\ |\ C \text{ is an } \epsilon \text{-cover of } A \text{ wrt } \rho\ (\forall x \in A, \exists c \in C \text{ st } \rho(c,x) \leq \epsilon)\}. 
\end{equation*}
%
\end{definition}
\begin{definition}
\label{defn:uniform_covering}
$N_p(\epsilon, F, m)$ for $p \in \{1,2,\infty\}$ is  the uniform covering number of the hypothesis class $F$ at scale $\epsilon$ with respect to distance measure $d_p$ where $d_p(x,y) = ||x-y||_p$ and is defined as 
\begin{equation*}
    N_p(\epsilon, F, m) \defn \max_{x \in X^m}N(\epsilon,\{[f(x_1), f(x_2), \cdots, f(x_m)]\}_{f \in F},d_p)
\end{equation*}
where $N(\epsilon,A,\rho)$ is as defined in definition~\ref{defn:covering}.
\end{definition}
The following lemma uses the well known generalization theory to argue how fast the empirical error uniformly converges to the expected error for the class of $d$-dimensional Lipschitz functions. 
\begin{lemma}
\label{lemma:lipschitz_sample_complexity}
\sloppy Let $S = \{(x_i,y_i)\}^M_{i=1}$ be a set of $M > \frac{1}{\epsilon^2}(\Omega(\frac{Ll}{\epsilon}))^d\log(\frac{1}{\delta})$ data points sampled uniformly randomly from distribution $\D$. Then, 
\begin{equation*}
\errd(\hat{f}_S) - \errd(\Fl) \leq \epsilon
\end{equation*}
with probability at least $1-\delta$. 
\end{lemma}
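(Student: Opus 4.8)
The plan is to combine the standard uniform-convergence bound in terms of uniform covering numbers with the known covering-number estimate for Lipschitz function classes (Lemma~\ref{lemma:lipschitz_covering}). First I would recall that for any bounded loss class, the empirical error converges to the true error uniformly at a rate governed by the $\ell_\infty$ (or $\ell_1$) uniform covering number: with $M$ i.i.d.\ samples, with probability at least $1-\delta$ we have $\sup_{f\in\Fl}|\widehat{\errd}(f)-\errd(f)| \leq \epsilon/2$ provided $M = \Omega\!\left(\frac{\log N_\infty(\epsilon/c,\Fl,M) + \log(1/\delta)}{\epsilon^2}\right)$ for an absolute constant $c$; this is a textbook consequence of symmetrization plus a union bound over the cover, using that the loss $|y-f(x)|$ is $1$-Lipschitz in $f(x)$ and bounded in $[0,1]$. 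Since the loss is $1$-Lipschitz in the prediction, an $(\epsilon/2)$-cover of $\Fl$ in sup-norm induces an $(\epsilon/2)$-cover of the loss class.

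Next I would substitute the covering number bound. By Lemma~\ref{lemma:lipschitz_covering}, $\log N_\infty(\epsilon,\Fl,M) = O\!\left((Ll/\epsilon)^d\right)$ for $L$-Lipschitz functions on $[0,l]^d$ with range $[0,1]$; rescaling the accuracy parameter by a constant only changes the hidden constant, so $\log N_\infty(\epsilon/c,\Fl,M) = O\!\left((Ll/\epsilon)^d\right)$ as well. Plugging this into the sample-complexity requirement gives exactly $M = \Omega\!\left(\frac{1}{\epsilon^2}(\Omega(Ll/\epsilon))^d \log(1/\delta)\right)$, matching the hypothesis of the lemma (the nested $\Omega$ absorbs the constant $c$ inside the $d$-th power).

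Finally I would close the argument by the standard two-step comparison: let $f^*_\D \in \Fl$ be the true error minimizer and $\hat f_S$ the empirical minimizer. On the event that uniform convergence holds at scale $\epsilon/2$, we get $\errd(\hat f_S) \leq \widehat{\errd}(\hat f_S) + \epsilon/2 \leq \widehat{\errd}(f^*_\D) + \epsilon/2 \leq \errd(f^*_\D) + \epsilon = \errd(\Fl) + \epsilon$, where the middle inequality is optimality of $\hat f_S$ for the empirical error. This establishes $\errd(\hat f_S) - \errd(\Fl) \leq \epsilon$ with probability at least $1-\delta$.

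The main obstacle — really the only nontrivial ingredient — is invoking the correct form of the uniform convergence theorem with the right dependence on the covering number (ensuring it is the \emph{uniform} covering number $N_p(\epsilon,\Fl,M)$ at the relevant sample size, per Definition~\ref{defn:uniform_covering}, and that the $\log(1/\delta)$ term enters additively inside the $\Omega$), together with verifying that Lemma~\ref{lemma:lipschitz_covering}'s bound is indeed a bound on this uniform covering number uniformly in $M$. Once these are in place the rest is the routine ERM comparison above.
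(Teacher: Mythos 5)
Your proposal is correct and follows essentially the same route as the paper: invoke a standard uniform-convergence bound parameterized by the uniform covering number, use the $1$-Lipschitzness of the loss $|y-f(x)|$ in $f(x)$ to reduce the covering of the loss class to a covering of $\Fl$, plug in Lemma~\ref{lemma:lipschitz_covering}, and close with the two-step ERM comparison. The only cosmetic difference is that the paper explicitly routes through $N_1 \leq N_\infty$ (citing Lemma 10.5 of Anthony and Bartlett) while you work directly in $N_\infty$; this does not change the substance of the argument.
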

\begin{proof}
A standard result for uniform convergence for general loss functions (For example, Theorem 21.1 in \cite{anthony2009neural}) states that
\begin{equation*}
\Pr_{S\sim \mathcal{D}^n}[\sup_{f\in F}|er_\mathcal{D}^l[f]-er^l_S[f]|]> \epsilon] \leq 4N_1\left(\frac{\epsilon}{8}, l_F, 2m\right)e^{\frac{-m\epsilon^2}{32}}
\end{equation*}
where $l$ is the loss function bounded between $[0,1]$ and $F$ is a class of functions mapping into $[0,1]$. 

We will prove that $N_1\left(\frac{\epsilon}{8}, l_F, 2m\right) \leq (O(\frac{Ll}{\epsilon}))^d$ which will complete the proof of the lemma by using triangle inequality and uniform convergence for $f_{\D}^*$ and $\hat{f}_S$.
\begin{equation*}
    \log\left(N_1\left(\frac{\epsilon}{8}, l_F, 2m\right)\right) \stackrel{\1}{\leq} \log\left(N_1\left(\frac{\epsilon}{8}, F, 2m\right)\right) \stackrel{\2}{\leq} \log\left(N_\infty\left(\frac{\epsilon}{8}, F, 2m\right)\right)\\
    \stackrel{\3}{\leq} \left(O\left(\frac{Ll}{\epsilon}\right)\right)^d
\end{equation*}
$\1$ follows because $|l(f(x_i), y_i)-l(f'(x_i),y_i)| \leq |f(x_i)-f'(x_i)|$ for all  $f,f' \in F$. $\2$ follows from Lemma 10.5 in \cite{anthony2009neural} and $\3$ follows from Lemma~\ref{lemma:lipschitz_covering}. 
\end{proof}
Now, we state the definition of Lipschitz extension in Theorem~\ref{thm:lipschitz-extension} and the result which states that for any metric space, if we have a $L$-Lipschitz function on a subset of the metric space, then it is possible to extend the function to the entire space with respect to the metric which preserves the values of the function at the points originally in the domain and is now $L$-Lipschitz on the entire domain. This will be used in the computing covering number bounds for the class of Lipschitz functions (Lemma~\ref{lemma:lipschitz_covering}).
\begin{theorem}
\label{thm:lipschitz-extension}
[Theorem 1 from \cite{mcshane1934extension}]
For a $L$-Lipschitz function $f:E\rightarrow \R$ defined on a subset $E$ of the metric space $S$, $f$ can be extended to $S$ such that its values on the subset $E$ is preserved and it satisfies the $L$-Lipschitz property over the entire domain $S$ with respect to the same metric. Such an extension is called Lipschitz extension of the function $f$. 
\end{theorem}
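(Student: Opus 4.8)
\textbf{Proof proposal for Theorem~\ref{thm:lipschitz-extension} (McShane's Lipschitz extension theorem).}

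The plan is to give the classical McShane construction: define the extension by an explicit infimum formula over the subset $E$ and verify the three required properties (well-definedness, agreement on $E$, and the $L$-Lipschitz bound on all of $S$). Concretely, for a function $f:E\to\R$ that is $L$-Lipschitz with respect to the metric $\rho$ of $S$, I would define
\begin{equation*}
  \tilde f(s) \defn \inf_{e \in E}\bigl(f(e) + L\,\rho(s,e)\bigr), \qquad s \in S.
\end{equation*}
First I would check that the infimum is finite: fixing any base point $e_0 \in E$, the $L$-Lipschitz property of $f$ on $E$ gives $f(e) \ge f(e_0) - L\,\rho(e,e_0) \ge f(e_0) - L\,\rho(e,s) - L\,\rho(s,e_0)$ by the triangle inequality, so $f(e) + L\rho(s,e) \ge f(e_0) - L\rho(s,e_0)$ for every $e$, and the infimum is bounded below (and obviously bounded above by $f(e_0)+L\rho(s,e_0)$). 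Hence $\tilde f(s)\in\R$ is well-defined for all $s\in S$.

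Next I would verify that $\tilde f$ agrees with $f$ on $E$. For $s=e'\in E$, taking $e=e'$ in the infimum gives $\tilde f(e') \le f(e')$; and for any $e\in E$, the $L$-Lipschitz property of $f$ on $E$ gives $f(e)+L\rho(e',e) \ge f(e')$, so $\tilde f(e') \ge f(e')$. Thus $\tilde f(e')=f(e')$. Finally, the $L$-Lipschitz bound on $S$: for $s,t\in S$ and any $e\in E$, $f(e)+L\rho(s,e) \le f(e) + L\rho(t,e) + L\rho(s,t)$; taking the infimum over $e$ on both sides yields $\tilde f(s) \le \tilde f(t) + L\rho(s,t)$, and by symmetry $\tilde f(t) \le \tilde f(s) + L\rho(s,t)$, so $|\tilde f(s)-\tilde f(t)| \le L\rho(s,t)$. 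This establishes all three claims, and $\tilde f$ is the desired Lipschitz extension.

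There is no serious obstacle here — the result is entirely classical and the argument above is complete up to these short triangle-inequality estimates. The only point that deserves a word of care is the finiteness of the infimum (boundedness below), which is exactly where the $L$-Lipschitzness of $f$ on $E$ is used; everything else is a direct manipulation of the defining formula. If one wanted the ``largest'' rather than ``smallest'' extension one could symmetrically use a supremum formula $\sup_{e\in E}(f(e)-L\rho(s,e))$, but a single valid extension suffices for the statement and for the downstream use in Lemma~\ref{lemma:lipschitz_covering}.
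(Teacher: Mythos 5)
Your proposal is correct and complete: it is the classical McShane inf-convolution construction ($\tilde f(s)=\inf_{e\in E}(f(e)+L\rho(s,e))$), which is exactly the argument behind the cited result; the paper itself states Theorem~\ref{thm:lipschitz-extension} without proof, deferring to \cite{mcshane1934extension}. Your verification of finiteness, agreement on $E$, and the $L$-Lipschitz bound via the triangle inequality is sound and needs no changes.
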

Now, we will state the well known covering number bounds for the class of high dimensional Lipschitz functions. Note that we have stated this here just for completeness and the proof essentially follows the proof from \cite{gottlieb2017efficient}. 
\begin{lemma}
\label{lemma:lipschitz_covering}
For the class of high dimensional Lipschitz functions $\mathcal{F}_L:[0,l]^d \rightarrow [0,1]$ where $f \in \mathcal{F}_L$ satisfies $|f(x)-f(y)| \leq L||x-y||_\infty \  \forall x,y \in [0,l]^d$, we have $\log(N_\infty(\epsilon, F, m)) = (O(\frac{Ll}{\epsilon}))^d$. 
\end{lemma}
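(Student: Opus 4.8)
The final statement to prove is Lemma~\ref{lemma:lipschitz_covering}: for the class $\mathcal{F}_L:[0,l]^d\to[0,1]$ of $L$-Lipschitz functions (w.r.t. $\|\cdot\|_\infty$), we have $\log(N_\infty(\epsilon,\mathcal{F}_L,m)) = (O(Ll/\epsilon))^d$.

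\medskip

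The plan is to bound the $\ell_\infty$ uniform covering number directly by constructing an explicit $\epsilon$-cover of $\mathcal{F}_L$ in the sup-norm; since the sup-norm over all of $[0,l]^d$ dominates the $\ell_\infty$ distance on any finite sample of $m$ points, such a cover immediately upper bounds $N_\infty(\epsilon,\mathcal{F}_L,m)$ independently of $m$. First I would discretize the domain: lay down a grid on $[0,l]^d$ with spacing roughly $\epsilon/(2L)$, giving $T = (2Ll/\epsilon + 1)^d = (O(Ll/\epsilon))^d$ grid points. Next, discretize the range $[0,1]$ into levels of width $\epsilon/2$. The candidate cover consists of all functions obtained as Lipschitz extensions (via Theorem~\ref{thm:lipschitz-extension}, the McShane extension) of grid-value assignments that (a) take values in the discretized range and (b) are themselves consistent with being $L$-Lipschitz on the grid. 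Each function in $\mathcal{F}_L$ is within $\epsilon/2$ at the grid points of some such discretized assignment (round each value to the nearest level — this preserves approximate Lipschitzness on the grid up to the rounding error, which I'd absorb by a constant factor in the grid spacing), and then by Lipschitzness plus the grid spacing the extension is within $\epsilon$ everywhere.

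\medskip

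The key counting step is to see that the number of valid grid assignments is at most $(O(1))^T$ rather than naively $(2/\epsilon)^T$: because neighboring grid points differ by at most $L\cdot(\epsilon/2L) = \epsilon/2$ in function value, once the value at one grid point is fixed there are only $O(1)$ choices (those within one discretization level) for each adjacent grid point. Propagating this along a spanning path of the grid graph, the total number of assignments is at most $(2/\epsilon)\cdot(O(1))^{T-1} = (O(1))^T$. Taking logs gives $\log N_\infty(\epsilon,\mathcal{F}_L,m) = O(T) = (O(Ll/\epsilon))^d$, as claimed. (One should double-check that the exponent $d$ sits in the right place: $T = (O(Ll/\epsilon))^d$ and $O(T)$ is still $(O(Ll/\epsilon))^d$ since the base constant can be folded in.)

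\medskip

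The main obstacle — really a bookkeeping subtlety rather than a deep difficulty — is making the rounding argument airtight: after rounding $f$'s grid values to the nearest level, the rounded values need not be \emph{exactly} realizable as the restriction of an $L$-Lipschitz function, but they are realizable as the restriction of an $L'$-Lipschitz function with $L' = L(1 + o(1))$, or alternatively one chooses the grid spacing to be $\epsilon/(4L)$ so that the extra $\epsilon/2$ of slack from rounding is covered. Either way one invokes Theorem~\ref{thm:lipschitz-extension} to extend the rounded grid function to all of $[0,l]^d$ while preserving its grid values, and then a triangle inequality ($|f(x) - g(x)| \le |f(x)-f(\text{nearest grid pt})| + |\text{rounding}| + |g(\text{nearest grid pt}) - g(x)| \le \epsilon/4 + \epsilon/2 + \epsilon/4 = \epsilon$) closes the bound. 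This is precisely the argument of \cite{gottlieb2017efficient}, so I would cite it for the details and present the construction and counting as above.
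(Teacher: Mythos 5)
Your proposal follows the same overall blueprint as the paper's proof --- discretize the domain $[0,l]^d$ into a grid of spacing $\Theta(\epsilon/L)$, discretize the range $[0,1]$ into levels of width $\Theta(\epsilon)$, round an arbitrary $f\in\mathcal{F}_L$ to a grid function with values in the discretized range, invoke the McShane extension (Theorem~\ref{thm:lipschitz-extension}) to obtain an $L$-Lipschitz representative from each grid assignment, and close with a three-term triangle inequality --- so this is essentially the same route.

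There is one genuine refinement in your counting step that the paper glosses over, and it is worth flagging because the paper's version does not quite deliver the stated bound. The paper bounds $|\mathcal{F}_L^\epsilon|$ by $(3/\epsilon)^{(3Ll/\epsilon)^d}$, i.e., it allows each of the $T=(3Ll/\epsilon)^d$ grid points to take any of the $\Theta(1/\epsilon)$ levels independently, giving $\log|\mathcal{F}_L^\epsilon| = \Theta\bigl((Ll/\epsilon)^d \log(1/\epsilon)\bigr)$, which carries a spurious $\log(1/\epsilon)$ factor not present in the lemma statement $(O(Ll/\epsilon))^d$. Your spanning-path argument fixes this: since rounded values at adjacent grid points can differ by at most $O(\epsilon)$ while the level spacing is $\Theta(\epsilon)$, each grid point beyond the first admits only $O(1)$ consistent values, so the count is $(1/\epsilon)\cdot O(1)^{T-1}=O(1)^T$ and $\log N_\infty = O(T) = (O(Ll/\epsilon))^d$ exactly as claimed. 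Your handling of the rounding-vs.-Lipschitzness subtlety (either relax to $L'=L(1+o(1))$ or tighten the grid spacing to $\epsilon/(4L)$) is correct and matches what one needs to make the triangle inequality close at $\epsilon$. In short, your proof is the right one and is actually the argument from \cite{gottlieb2017efficient} that the paper cites but reproduces in a slightly lossy form.
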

\begin{proof}
Let us consider a discretization of the domain $P=[0,l]^d$ where we divide each coordinate of the domain into intervals of length $\frac{\epsilon}{3L}$. Let us consider a set $\mathcal{F}_{L}^\epsilon$ of all those Lipschitz functions which are the Lipschitz extensions of the Lipschitz functions which have output values amongst $R = \{0,\frac{\epsilon}{3},\frac{2\epsilon}{3},\cdots,1\}$ at the discretized points of the domain $P$ (note that it is always possible to form a Lipschitz extension of a Lipschitz function over metric space  by \cite{mcshane1934extension}, also mentioned in Theorem~\ref{thm:lipschitz-extension} above). So, we have that $|\mathcal{F}_{L}^{\epsilon}| \leq (\frac{3}{\epsilon})^{(\frac{3Ll}{\epsilon})^d}$.

Now, we will show that $\mathcal{F}_L^{\epsilon}$ forms a valid covering of the function class $\mathcal{F}_L$, and hence we get
$\log(N_\infty(\epsilon, \mathcal{F}_L, m)) = (O(\frac{Ll}{\epsilon}))^d$. Now, let us show that for any $f \in \mathcal{F}_L$, there exists a function $\tilde{f} \in \mathcal{F}_{L}^{\epsilon}$ such that $\sup_x|f(x)-\tilde{f}(x)| \leq \epsilon$. 

Consider a function $\hat{f}$ such that $\hat{f}(x) = \argmin_{y \in R}|y-f(x)|$ at the discretization of the domain $P$ and let $\tilde{f}$ be its Lipschitz extension. First, we will argue that $\hat{f}(x)$ is $L$-Lipschitz and since $\tilde{f}$ is a Lipschitz extension of $\hat{f}$, $\tilde{f}$ is also $L$-Lipschitz and by construction belongs to $F_L^{\epsilon}$. 

Now, we will prove that $\hat{f}$ is $L$-Lipschitz restricted to the discretization of the domain. Consider any $x,y$ in the discretized domain with $||x-y||_\infty \leq \frac{\epsilon}{3L}$, we have $|\hat{f}(x)-\hat{f}(y)| \leq L||x-y||_{\infty}$ because if the smaller value say $f(y)$ gets rounded down and the larger value $f(x)$ gets rounded above, this would violate the $L$-Lipschitzness of the function $f$. Hence, we see that $\hat{f}$ is $L$-Lipschitz. 

Now, we will show that  $\sup_x|f(x)-\tilde{f}(x)| \leq \epsilon$. Consider any point $x \in [0,l]^d$. Now, we know that there exists a $\tilde{x} \in P$ i.e. in the discretization of the domain such that $||x-\tilde{x}||_\infty \leq \frac{\epsilon}{3L}$. Hence, we get that $|\tilde{f}(x)-f(x)| \leq |\tilde{f}(x)-\tilde{f}(\tilde{x})| + |\tilde{f}(\tilde{x})-f(\tilde{x})| + |f(\tilde{x})-f(x)| \leq \frac{\epsilon}{3} + \frac{\epsilon}{3} + \frac{\epsilon}{3} \leq \epsilon$ since the functions $\tilde{f}$ and $f$ are both $L$-Lipschitz and $|\tilde{f}(\tilde{x})-f(\tilde{x})| = |\hat{f}(\tilde{x})-f(\tilde{x})| \leq \frac{\epsilon}{3}$. 
\end{proof}
%
%
\section{High Dimensional Case}
\label{subsection:lipschitz-highd}
\subsection{Problem Setup}
We recall the setting from the one dimensional case.
For any fixed $L>0$, let $\mathcal{F}_{L}$ be the class of $d$ dimensional functions supported on the domain $[0,1]^d$ with Lipschitz constant at most $L$, that is,
\begin{equation}
    \mathcal{F}_L = \{f:[0,1]^d\mapsto [0,1], \  |f(x)-f(y)|\leq L \|x-y\|_{\infty} \;  \forall x,y \in [0,1]^d\}.
\end{equation}
 Let $\epsilon$ be the error parameter. We will think of the dimension $d$ as constant with respect to $L$ and $\epsilon$.
Like the one-dimensional case, our algorithm for local predictions first involves a preprocessing step (Algorithm~\ref{alg4}) 
which takes as input the Lipschitz constant $L$, sampling access to distribution $\D_x$ and the error parameter $\epsilon$ and returns a partition $\p$ where the partition along dimension $j$ is $\p^j = \{ [b^j_0, b^j_1],[b^j_1, b^j_2], \ldots, \}$ and a set $S$ of unlabeled samples. The partition $\p$ consists of alternating intervals\footnote{Note that long intervals at the boundary could be shorter, but those can be handled similarly. } of length ${2}/{L}$ and ${d}/(L\epsilon)$ along each dimension. Let us divide these intervals along each dimension $j$ further into the two sets 
\begin{equation*}
    \begin{gathered}
    \plong^j\defn\{[b^j_0,b^j_1],[b^j_2,b^j_3],\ldots,\}\quad \text{(long intervals for dimension } j \text{ of length } {d}/(L\epsilon)), \\
    \pshort^j\defn\{[b^j_1,b^j_2],[b^j_3,b^j_4],\ldots,\}\quad \text{(short intervals for dimension } j \text{ of length }{2}/L).\\
    \end{gathered}
\end{equation*}

A data point $x$ is said to belong to a box $B_J$ (where $J$ is a vector of dimension $d$) if $x$ belongs to interval $B_J^j = I^j_{J_j} = [b^j_{J_j-1}, b^j_{J_j}]$ along the $j$th dimension. A data point $x$ which belongs to a short interval in $\pshort^j$ along at least one of the dimensions $j$ is said to belong to a set of short boxes $\pshort$ and otherwise, is said to belong to a set of long boxes $\plong$.
 \begin{algorithm}[H]
   \begin{algorithmic}[1]
\caption{Preprocess($L, \D_x, \epsilon$)\label{alg4}}
   \STATE Sample a uniformly random offset $b^j_1$ from $\{0,1,2,\cdots,\frac{d}{2\epsilon}\}\frac{2}{L}$ for each dimension $j \in [d]$. 
   \STATE Divide the $[0,1]$  interval along each dimension $j$ into alternating intervals of length $\frac{d}{L\epsilon}$ and $\frac{2}{L}$ with boundary at $b^j_1$ and let $\p$ be the resulting partition, that is, $\p^j = \{[b^j_0=0,b^j_1],[b^j_1, b^j_2], \ldots, ], \cdots, 1\}$ where $b^j_2 = b^j_1 + \frac{2}{L}, b_3^j = b_2^j + \frac{d}{L\epsilon}, \ldots, $. 
   \STATE Sample a set $S = \{x_i\}_{i=1}^M$ of $M = O((\frac{L}{\epsilon})^d \frac{1}{\epsilon^3}\log(\frac{1}{\epsilon}))$ unlabeled examples from distribution $\mathcal{D}_x$.\\
   \STATE \textbf{Output} $S, \p$.
   \end{algorithmic}
  \end{algorithm}
We give the definition of the extension box for a long box in $\plong$. It consists of a box interval and a cuboidal shell of thickness $\frac{1}{L}$ around it in each dimension with cut off at the boundary.
\begin{definition}
\label{defn:extension-interval}
For any long box $B_J$ where $B_J^i = I^i_{J_i}=[b^i_{J_i-1}, b^i_{J_i}]$, the extension box is defined to be $\hat{B}_J$ where $\hat{B}_J^i = \hat{I}_{J_i}^i = [\max(0,b^i_{J_i-1}-\frac{1}{L}),\min(b^i_{J_i}+\frac{1}{L},1)]$ $\forall i \in [d]$ such that $B_J \subset \hat{B}_J$.\end{definition}
In other words, box $\hat{B}_J$ consists of the box $B_J$ and a cuboidal shell of thickness $\frac{1}{L}$ around it on both sides in each dimension unless it does not extend beyond $[0,1]$ in each dimension.

The Query algorithm (Algorithm~\ref{alg5}) for test point $x^*$ takes as input the set $S$ of unlabeled examples and the partition $\p$ returned by the Preprocess algorithm. Note that all subsequent queries use the same partition $\p$ and the same set of unlabeled examples $S$. The algorithm uses different learning strategies depending on whether $x^*$ belongs to one of the long boxes in $\plong$ or short boxes in $\pshort$.

For the long boxes, it outputs the prediction corresponding to the empirical risk minimizer (ERM) function restricted to that box. If a query point $x$ lies in a short box which is the Lipschitz extension of a given long box, the function learned is the Lipschitz extension of function over the long box. The middle function value of the short box along each dimension is constrained to be $1$ which makes the overall function Lipschitz. We bound the expected prediction error of this scheme with respect to class $\Fl$ by separately bounding this error for long and short boxes. For the long boxes, we prove that the ERM has low error by ensuring that each box contains enough unlabeled samples. On the other hand, we show that the short boxes do not contribute much to the error because of their low probability under the distribution $\D$.
 \begin{algorithm}[H]
   \begin{algorithmic}[1]
\caption{Query($x, S, \p=[\{[b^j_0,b^j_1],[b^j_1, b^j_2],\cdots\}]^d_{j=1}$)\label{alg5}}
   \IF{query $x \in B_J \text{ where } B_J^j = I^j_{J_j} = [b^j_{J_j-1}, b^j_{J_j}] \text{ when } B_J \in \plong$ }
     \STATE Query labels for $x \in S \cap B_J$
   \STATE \textbf{Output} $\hat{f}_{S \cap B_J}(x)$ \\

    \ELSIF{query $x \in \hat{B}_J$ where  $\hat{B}_J$ is the extension of a long box $B_J$ where $B^j_{J} = I^j_{J_j} = [b^j_{J_j-1}, b^j_{J_j}]$ is a long interval}
     \STATE Query labels for $x \in S \cap B_J$
   \STATE \textbf{Output:} $f(x)$ where $f(x)$ is the Lipschitz extension of $\hat{f}_{S\cap B_J}(x)$ to the extension set $\hat{B}_{J}$ with constraints $f(x) = 1$ $\forall x \in \hat{B}_{J}$ satisfying $x_i \in \Big\{\frac{b^i_{J_i}+b^i_{J_i+1}}{2},\frac{b^i_{J_i-2}+b^i_{J_i-1}}{2}\Big\}$ for any dimension $i \in [d]$. 
\ELSE
    \STATE \textbf{Output:} 1
\ENDIF
   \end{algorithmic}
  \end{algorithm}
Next, we state the number of label queries needed to making local predictions corresponding to the Query algorithm (Algorithm \ref{alg5}).
\begin{theorem}
\label{thm:lipschitzlocalquery-highd}
For any distribution $\mathcal{D}$ over $[0,1]^d\times[0,1]$, Lipschitz constant $L>0$ and error parameter $\epsilon \in [0,1]$, let $(S, P)$ be the output of (randomized) Algorithm~\ref{alg1} where $S$ is the set of unlabeled samples of size $(O(\frac{L}{\epsilon}))^d\frac{1}{\epsilon^3}\log(\frac{1}{\epsilon})$ and $P$ is a partition of the domain $[0,1]^d$. Then, there exists a function  $\tilde{f} \in \mathcal{F}_L$, such that for all $x \in [0,1]$, Algorithm~\ref{alg5} queries $\frac{1}{\epsilon^2}(O(\frac{d}{\epsilon^2}))^d\log(\frac{1}{\epsilon})$ labels from the set $S$ and outputs $\text{Query}(x, S, P) $ satisfying
\begin{equation}
    \text{Query}(x, S, P) = \tilde{f}(x)\;,
\end{equation}
and the function $\tilde{f}$ is $\epsilon$-optimal, that is, $ \diffd(\ft, \Fl) \leq \epsilon$ with probability greater than $\frac{1}{2}$.
\end{theorem}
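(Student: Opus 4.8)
The plan is to mirror the one-dimensional argument of Theorem~\ref{thm:lipschitzlocalquery} essentially verbatim, with the boxes $B_J$ playing the role of intervals and McShane's Lipschitz extension (Theorem~\ref{thm:lipschitz-extension}) replacing the linear interpolation used there to glue adjacent pieces together. Write $S=\{x_i\}_{i=1}^M$ and $\p=[\p^j]_{j=1}^d$ for the output of Algorithm~\ref{alg4}. For a box $B_J$ let $\D_J$ be the law of $(X,Y)\sim\D$ conditioned on $\{X\in B_J\}$, let $p_J=\Pr_{x\sim\D_x}(x\in B_J)$ and $M_J=\sum_{x_i\in S}\mathbb{I}[x_i\in B_J]$, and let $\Dlg,\Dsh$ be $\D$ conditioned on the union of the long boxes $\plong$ and on the union of the short boxes $\pshort$ respectively. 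Let $\ft$ be the function produced by Algorithm~\ref{alg5}: on a long box $B_J$ it equals $\hat f_{S\cap B_J}$; on the extension shell $\hat B_J\setminus B_J$ of a long box it equals the $L$-Lipschitz extension of $\hat f_{S\cap B_J}$ to $\hat B_J$ subject to $\ft\equiv1$ on the mid-hyperplanes of the adjacent short intervals; and $\ft\equiv1$ on the remaining (``else'') region.

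\textbf{Step 1: $\ft$ is well defined and $L$-Lipschitz.} On every long box $\hat f_{S\cap B_J}$ is $L$-Lipschitz, since it is an ERM over $\Fl$ restricted to $B_J$ and such a minimiser may always be chosen to be a genuine $L$-Lipschitz function on $[0,1]^d$ by Theorem~\ref{thm:lipschitz-extension}. Each short interval has length $2/L$, exactly twice the shell thickness $1/L$, and all prescribed boundary values lie in $[0,1]$; hence extending a function valued in $[0,1]$ to the constant $1$ over an $\|\cdot\|_\infty$-distance of $1/L$ along a short dimension never violates the $L$-Lipschitz condition, and such an extension exists by McShane's theorem. The constraint $\ft\equiv1$ on the mid-hyperplanes of the short intervals is what makes the construction consistent: a short box can border up to $2^d$ long boxes, but the extensions coming from all of them, as well as the ``else'' value $1$, agree (being equal to $1$) on those hyperplanes, so $\ft$ is continuous across every box boundary and globally $L$-Lipschitz. \emph{This gluing step is the part I expect to require the most care}: one must check that the shells of the (up to $2^d$) neighbouring long boxes cover the inner part of each short box, that the outer part where $\ft\equiv1$ is reached continuously, and that every imposed boundary value is itself $L$-Lipschitz so that the McShane extensions exist. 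The remaining steps are close copies of the one-dimensional estimates.

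\textbf{Step 2: error of $\ft$.} Exactly as in~\eqref{eqn:total-error}, the tower property gives
\[
\diffd(\ft,\Fl)=\psh\,\diff_{\Dsh}(\ft,f^*_{\D})+\sum_{B_J\in\plong}p_J\,\diff_{\D_J}(\ft,f^*_{\D}).
\]
For the short boxes, the total volume of the short region is at most $d$ times the one-dimensional short fraction $\frac{2/L}{2/L+d/(L\epsilon)}\le\frac{2\epsilon}{d}$, hence at most $2\epsilon$; since the offsets $b^j_1$ are uniformly random, the $d$-dimensional analogue of Lemma~\ref{lemma:prob_intervals} gives $\psh\le O(\epsilon/\delta)$ with failure probability $\delta$, and the loss is at most $1$, so this term is $O(\epsilon/\delta)$. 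For the long boxes I would reuse the three subtypes $\plongone,\plongtwo,\plongthree$. The boxes in $\plongthree$ have $p_J$ below $\epsilon$ over the number of long boxes (at most $(L\epsilon/d+1)^d$), so their total contribution is at most $\epsilon$. For $\plongone\cup\plongtwo$ (the long boxes above this cut-off) the cut-off is chosen so that $M\cdot p_J=\Omega\!\big(\frac{1}{\epsilon^2}(d/\epsilon^2)^d\log(1/\epsilon)\big)$, using $M=\Theta\!\big((L/\epsilon)^d\,\frac{1}{\epsilon^3}\log(1/\epsilon)\big)$ together with the covering-number bound $\log N_\infty(\epsilon,\Fl|_{B_J})=(O(L\ell/\epsilon))^d=(O(d/\epsilon^2))^d$ for side length $\ell=d/(L\epsilon)$ (Lemma~\ref{lemma:lipschitz_covering}), fed into Lemma~\ref{lemma:lipschitz_sample_complexity}. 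A Hoeffding bound shows a box in this probability range falls into $\plongtwo$ (too few samples) with probability at most $\epsilon$, so a Markov argument over its expected mass contributes $O(\epsilon/\delta)$. For $\plongone$, uniform convergence on $B_J$ yields $\diff_{\D_J}(\hat f_{S\cap B_J},f^*_{\D_J})\le\epsilon$ with probability $\ge1-\epsilon$; and since $f^*_{\D_J}$ minimises $\err_{\D_J}$ over all of $\Fl$, we have $\diff_{\D_J}(\ft,f^*_{\D})=\diff_{\D_J}(\hat f_{S\cap B_J},f^*_{\D})\le\diff_{\D_J}(\hat f_{S\cap B_J},f^*_{\D_J})$, so taking expectations (bounding the bad event by loss $1$) and applying Markov gives $O(\epsilon/\delta)$. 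Adding the four contributions, choosing $\delta$ a small constant and rescaling $\epsilon$ by a constant gives $\diffd(\ft,\Fl)\le\epsilon$ with probability at least $\frac12$.

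\textbf{Step 3: label complexity.} A query $x$ lying in a long box $B_J$ needs only the labels of $S\cap B_J$; a query in an extension shell $\hat B_J$ needs only the labels of the corresponding $S\cap B_J$; a query in the ``else'' region needs none. After the discarding rule analogous to the footnote of Algorithm~\ref{alg1} (retain at most the uniform-convergence threshold $\frac{1}{\epsilon^2}(O(d/\epsilon^2))^d\log(1/\epsilon)$ of samples per long box, which suffices for the ERM), every query costs at most $\frac{1}{\epsilon^2}(O(d/\epsilon^2))^d\log(1/\epsilon)$ labels, all drawn from the fixed set $S$ of $(O(L/\epsilon))^d\,\frac{1}{\epsilon^3}\log(1/\epsilon)$ unlabeled points, which is precisely the claim.
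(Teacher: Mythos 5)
Your proposal mirrors the paper's proof essentially verbatim: the same piecewise definition of $\ft$ (ERM on long boxes, McShane extensions pinned to $1$ on the mid-hyperplanes of short intervals, value $1$ on the remaining boundary region), the same long/short decomposition via the tower property, the same three subtypes $\plongone,\plongtwo,\plongthree$ with the same $\epsilon$, $\epsilon/\delta$, $2\epsilon/\delta$ bounds through Lemmas~\ref{lemma:prob_intervals-highd}, \ref{lemma:number_unlabeled_samples-highd}, and \ref{lem: good type 1 intervals-highd} (via the covering bound of Lemma~\ref{lemma:lipschitz_covering}), and the same label-query accounting. You correctly flag the gluing of neighbouring extension shells as the delicate point, which is exactly where the paper spends its extra care; no genuine gap.
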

%
\begin{proof}
 We begin by defining some notation. Let $S= \{x_i\}^M_{i=1}$ be the set of the unlabeled samples and $\p=[\{[b^j_0,b^j_1],[b^j_1,b^j_2], \ldots\}]_{j=1}^d$ be the partition returned by the pre-processing step given by Algorithm~\ref{alg4}. 
Let $\D_J$ be the distribution of a random variable  $(X, Y) \sim \D$ conditioned on the event $\{X \in B_J\}$. Similarly, let $\Dsh$ and $\Dlg$ be the conditional distribution of $\D$ on boxes belonging to $\plong$ and $\pshort$ respectively. Let $p_J$ denote the probability of a point sampled from distribution $\D$ lying in box $B_J$. Going forward, we use the shorthand `probability of box $B_J$' to denote $p_J$. Let $\plg$ and $\psh$ be the probability of set of long and short boxes respectively. Recall that $f_{\D}^*$ is the function which minimizes $\errd(f)$ for $f \in \Fl$ and $f_{\D_J}^*$ is the function which minimizes this error with respect to the conditional distribution $\D_J$. Let $M_J$ denote the number of unlabeled samples of $S$ lying in box $B_J$. For any box $B_J$, let $\hat{f}_{S\cap B_J}$ be the ERM with respect to that interval.%
\paragraph{Lipschitzness of $\ft$.} 
For any long box $B_J$, we can define the extension function $f^{\sf{ext}}_J: \hat{B}_J \mapsto \R$ as the $L$-Lipschitz extension of the function $\hat{f}_{S\cap B_J}:B_J\mapsto \R$ to the extension interval $\hat{B}_J$ restricted to $1$ at the boundaries of the extension interval $\hat{B}_J$. 
The Query procedure (Algorithm~\ref{alg5}) is designed to output $\ft(x)$ for each query $x$ where
\begin{small}
\begin{equation*}
       \ft(x) = \begin{cases}
       \hat{f}_{S\cap B_J}(x) \quad &\text{ if } x \in B_J \text{ for any } B_J \in \plong\\
        f^{\sf{ext}}_J(x) \quad &\text{ else if } x \in \hat{B}_J \text{ for any } B_J \in  \plong\\
        1 \quad &\text{ otherwise }
       \end{cases}.
\end{equation*}
\end{small}
%
%
%
%
Now, we will argue that $\tilde{f}(x)$ is $L$-Lipschitz for all $x\in[0,1]^d$. The function $\tilde{f}(x)$ for each long box is $L$-Lipschitz by construction. Each of the extension functions $f^{\sf{ext}}_J$ is also $L$-Lipschitz by construction if it exists. We need to prove that such a Lipschitz extension exists. By \cite{mcshane1934extension} (also stated as Theorem~\ref{thm:lipschitz-extension} in this paper), we can see that such a Lipschitz extension always exists because for point $x \in B_J$ and $y$ belonging to the boundary of the extension interval $\hat{B}_J$, we get that $||x-y||_{\infty} \geq \frac{1}{L}$ and $|f(x)-f(y)|\leq 1$.  For query points $x$ which do not belong to any of these extension intervals, we output $1$. Basically, these points belong to the short intervals at the boundary of the domain and equivalent to learning Lipschitz extension with empty long interval and constrained to be $1$ at the boundary. Hence, the function takes $1$ everywhere in this interval. This shows that each of the functions is individually Lipschitz. 

Now, we will argue that the function is also continuous. 
For each long box $B_J$ where $I^j_J = [b^j_{J^j-1},b^j_{J^j}]$ $\forall j \in [d]$, consider the extension interval $\hat{I}_J$ as defined in Definition~\ref{defn:extension-interval}. The function $\tilde{f}(x)$ is constrained to be $1$ for the middle point of the short interval along each dimension i.e. $\tilde{f}(x) = 1$ $\forall x\in[0,1]^d\text{ if } \exists i\in[d] \text{ such that }x_i=\frac{b^i_{j-1}+b^i_{j}}{2}$ where $[b^i_{j-1},b^i_{j}]$ is a short interval for dimension $i$. Note that these middle points of short boxes are precisely the only points of intersection between extensions of two different long boxes because long intervals in each dimension are separated by a distance of $\frac{2}{L}$ by construction and the extension boxes extend up to a shell of thickness around $\frac{1}{L}$ in each dimension. Now, for a query point $x$ belonging to extension interval $\hat{I}_J$ for a long box $B_J$ takes the value $f(x)$ where $f(x)$ is the Lipschitz extension of the Lipschitz function $f_J$ to the superset $\hat{B}_J$ and constrained to be $1$ at the boundary of the shell. Hence, we can see that the function $\tilde{f}$ is $L$-Lipschitz. 
\paragraph{Error Guarantees for $\ft$.} Now looking at the error rate of the function $\tilde{f}(x)$ and following a repeated application of tower property of expectation, we get that 
\begin{align}
    \diffd(\ft, \Fl) &= \plg\diff_{\Dlg}(\ft, f_{\D}^*) + \psh\diff_{\Dsh}(\ft, f_{\D}^*)\nonumber\\
    &= \sum_{J:B_J \in \plong}p_J\diff_{\D_J}(\ft, f_{\D}^*) + \psh\diff_{\Dsh}(\ft, f_{\D}^*)\label{eqn:total-error-highd}
\end{align}
Now, we need to argue about the error bounds for both long and short boxes to argue about the total error of the function $\tilde{f}$.

\emph{Error for short boxes. } 
From Lemma~\ref{lemma:prob_intervals-highd}, we know that with probability at least $1-\delta$, the probability of short boxes $\psh$ is upper bounded by $2\epsilon/\delta$. Also, the error for any function $f$ is bounded between $[0,1]$ since the function's range is $[0,1]$. Hence, we get that
\begin{align}
\psh\diff_{\D_{\pshort}}(\ft, f_{\D}^*) &\leq \frac{2\epsilon}{\delta} \label{eqn:type2-error-highd}
\end{align}
\emph{Error for long boxes:} We further divide the long boxes into 3 subtypes: 
{\small
\begin{align*}
    \plongone &\defn \left\lbrace B_J \; |\; B_J \in \plong,\; p_J \geq\frac{\epsilon}{(\frac{L\epsilon}{d})^d},\; M_J \geq  \frac{c_d}{\epsilon^2}\left(\frac{d}{\epsilon^2}\right)^d\log\left(\frac{1}{\epsilon}\right) \right\rbrace,\\ 
    \plongtwo &\defn \left\lbrace B_J \; |\; B_J \in \plong,\; p_J \geq\frac{\epsilon}{(\frac{L\epsilon}{d})^d},\; M_J <  \frac{c_d}{\epsilon^2}\left(\frac{d}{\epsilon^2}\right)^d\log\left(\frac{1}{\epsilon}\right) \right\rbrace,\\ 
   \plongthree &\defn \left\lbrace B_J \; |\; B_J \in \plong,\; p_J < \frac{\epsilon}{(\frac{L\epsilon}{d})^d} \right\rbrace.
\end{align*}}
The boxes in both first and second types have large probability $p_J$ with respect to distribution $\D$ but differ in the number of unlabeled samples in $S$ lying in them. Here, $c_d$ is some constant depending on the dimension $d$. Finally, the boxes in third subtype $\plongthree$ have small probability $p_J$ with respect to distribution $\D$. Now, we can divide the total error of long boxes into error in these subtypes
\begin{small}
\begin{align}
\sum_{J:B_J \in \plong}p_J\diff_{\D_J}(\ft, f_{\D}^*) &= \underbrace{\sum_{J:B_J \in \plongone}p_J\diff_{\D_J}(\ft, f_{\D}^*)}_{E1} + \underbrace{\sum_{J:B_J \in \plongtwo}p_J\diff_{\D_J}(\ft, f_{\D}^*)}_{E2}+ \underbrace{\sum_{J:B_J \in \plongthree}p_J\diff_{\D_J}(\ft, f_{\D}^*)}_{E3} 
\label{eqn:combined-highd}
\end{align}
\end{small}
Now, we will argue about the contribution of each of the three terms above. 

\underline{Bounding $E3$.} Since there are at most $(\frac{L\epsilon}{d})^d$ long boxes and each of these boxes $B_J$ has probability $p_J$ upper bounded by $\frac{\epsilon}{(\frac{L\epsilon}{d})^d}$, the total probability combined in these boxes is at most $\epsilon$. Also, in the worst case, the loss can be 1. Hence, we get an upper bound of $\epsilon$ on $E_3$.

\underline{Bounding $E2$.} From Lemma~\ref{lemma:number_unlabeled_samples-highd}, we know that with success probability $\delta$, these boxes have total probability upper bounded by ${\epsilon}/{\delta}$. Again, the loss can be 1 in the worst case. Hence, we can get an upper bound of ${\epsilon}/{\delta}$ on $E_2$.

\underline{Bounding $E1$.}  
Let $F_J$ denote the event that $\diff_{\D_J}(\hat{f} _{S\cap B_J}, f_{\D_J}^*) > \epsilon$.  The expected error of boxes $B_J$ in $\plongone$ is then
{\small
\begin{align*}
\E[\sum_{J:B_J \in \plongone}p_J\diff_{\D}(\ft, f_{\D_J}^*)] &\stackrel{\1}{\leq} \E[\sum_{J:B_J \in \plongone}p_J\diff_{\D_J}(\hat{f}_{S\cap B_J}, f_{\D_J}^*)]\\ 
&= \sum_{J:B_J \in \plongone}p_J(\E[\diff_{\D_J}(\hat{f}_{S\cap B_J}, f_{\D_J}^*)|F_J]\Pr(F_J) + \E[\diff_{\D_J}(\hat{f}_{S\cap B_J}, f_{\D_J}^*)|\neg F_J]\Pr(\neg F_J))\\
&\stackrel{\2}{\leq} \sum_{J:B_J \in \plongone}p_J(1\cdot \epsilon + \epsilon \cdot 1) 
\leq 2\epsilon,
\end{align*}}
where step $\1$ follows by noting that $\ft = \hat{f}_{S\cap B_J}$ for all long boxes $B_J \in \plong$ and that $f^*_{\D_J}(x)$ is the minimizer of the error $\err_{\D_J}(f)$ over all $L$-Lipschitz functions, 
and step $\2$ follows since $\E[\diff_{\D_J}(\hat{f}_{S\cap B_J}, f_{\D_J}^*)|\neg F_J] \leq \epsilon$ by the definition of event $F_J$ and $\Pr(F_J) \leq \epsilon$ follows from a standard uniform convergence argument (detailed in Lemma~\ref{lem: good type 1 intervals-highd}).
Now, using Markov's inequality, we get that $E_1 \leq {2\epsilon}/{\delta}$ with failure probability at most $\delta$. 

Plugging the error bounds obtained in equations~\eqref{eqn:type2-error-highd} and ~\eqref{eqn:combined-highd} into equation~\eqref{eqn:total-error-highd} and setting $\delta = \frac{1}{20}$ establishes the required claim.

\paragraph{Label Query Complexity.} Observe that for any given query point $x^*$, $\ft(x^*)$ can be computed by only querying the labels of the box in which $x$ lies (in case if $x^*$ lies in a long box) or extension of the box in which $x^*$ lies (in case if $x^*$ lies in a short box) and hence, would only require $\frac{1}{\epsilon^2}(O(\frac{d}{\epsilon^2}))^d\log(\frac{1}{\epsilon})$ active label queries over the set $S$ of $\frac{1}{\epsilon^3}(O(\frac{L}{\epsilon}))^d\log(\frac{1}{\epsilon})$ unlabeled samples.
\end{proof}
Now, we will state the sample complexity bounds for estimating the error of the optimal function amongst the class of Lipschitz functions up to an additive error of $\epsilon$. 
\begin{restatable}{theorem}{theoremlipschitzerror-highd}
\label{thm:lipschitzerror-highd}
For any distribution $\mathcal{D}$ over $[0,1]^d\times[0,1]$, Lipschitz constant $L>0$ and parameter $\epsilon \in [0,1]$, Algorithm~\ref{alg3} uses $\frac{1}{\epsilon^4}(O(\frac{d}{\epsilon^2}))^d\log(\frac{1}{\epsilon})$ active label queries on $\frac{1}{\epsilon^3}(O(\frac{L}{\epsilon}))^d\log(\frac{1}{\epsilon})$ unlabeled samples from distribution $\D_x$ and produces an output $\widehat{\errd}(\Fl)$ satisfying
\begin{equation*}
|\widehat{\errd}(\Fl) - \errd( \Fl)| \leq \epsilon 
\end{equation*} 
with probability at least $\frac{1}{2}$.
\end{restatable}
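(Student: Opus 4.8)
The plan is to mirror the proof of Theorem~\ref{thm:lipschitzerror}, substituting the one-dimensional local-learning guarantee with its high-dimensional counterpart, Theorem~\ref{thm:lipschitzlocalquery-highd}. First I would invoke that theorem: with probability greater than $\frac12$ over the randomness of Algorithm~\ref{alg4} (the preprocessing step called inside Algorithm~\ref{alg3}), the returned pair $(S,\p)$ is such that there is a fixed $\ft \in \Fl$ with $\diffd(\ft,\Fl)\le\epsilon$ and $\text{Query}(x,S,\p)=\ft(x)$ for every $x\in[0,1]^d$. The key point to record is that once $(S,\p)$ is drawn and fixed, $\ft$ is a deterministic function that does not depend on the fresh query points $x_1,\dots,x_N$ sampled in Step~2 of Algorithm~\ref{alg3}.

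Conditioning on a good realization of $(S,\p)$, the estimator becomes $\widehat{\errd}(\Fl)=\frac1N\sum_{i=1}^N|\text{Query}(x_i,S,\p)-y_i|=\frac1N\sum_{i=1}^N|\ft(x_i)-y_i|$, an average of $N$ i.i.d.\ random variables bounded in $[0,1]$ with mean $\errd(\ft)$. By Hoeffding's inequality, taking $N=O(1/\epsilon^2)$ with a suitable constant gives $|\widehat{\errd}(\Fl)-\errd(\ft)|\le\epsilon$ except with small constant probability; combined with $|\errd(\ft)-\errd(\Fl)|=\diffd(\ft,\Fl)\le\epsilon$ and the triangle inequality this yields $|\widehat{\errd}(\Fl)-\errd(\Fl)|\le2\epsilon$, and rescaling $\epsilon$ by a constant (and absorbing this into the hidden constants in the sample sizes) gives the stated additive error $\epsilon$. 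For the failure probability I would union-bound the two bad events --- $(S,\p)$ not being good, controlled below $\frac12$ by the choice $\delta=\frac1{20}$ in Theorem~\ref{thm:lipschitzlocalquery-highd}, and the Hoeffding deviation, made an arbitrarily small constant by tuning $N$ --- keeping the overall success probability above $\frac12$.

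For the complexity bounds, the unlabeled pool is exactly the set $S$ produced by Algorithm~\ref{alg4}, of size $\frac{1}{\epsilon^3}(O(\frac{L}{\epsilon}))^d\log(\frac1\epsilon)$, and each of the $N=O(1/\epsilon^2)$ calls to $\text{Query}$ costs at most $\frac{1}{\epsilon^2}(O(\frac{d}{\epsilon^2}))^d\log(\frac1\epsilon)$ label queries by Theorem~\ref{thm:lipschitzlocalquery-highd}, for a total of $\frac{1}{\epsilon^4}(O(\frac{d}{\epsilon^2}))^d\log(\frac1\epsilon)$ labels (queries landing in an already-seen long box or its extension reuse previously queried labels, so this is an upper bound). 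The main obstacle, modest as it is, is purely one of care rather than of idea: one must be sure that $\ft$ is a genuinely fixed function independent of the fresh query points --- this is what lets the average over $x_1,\dots,x_N$ concentrate to $\errd(\ft)$ --- and must keep track of both sources of failure and the constant-factor slack in $\epsilon$ so that everything fits under probability $\frac12$ with additive error exactly $\epsilon$.
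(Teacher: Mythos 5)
Your proof is correct and mirrors the paper's own argument, which the authors simply state "goes exactly like the proof for the corresponding theorem in the one-dimensional case": invoke the high-dimensional local-learning guarantee (Theorem~\ref{thm:lipschitzlocalquery-highd}) to get an $\epsilon$-optimal fixed $\ft$ realized by $\text{Query}$, apply Hoeffding over $N=O(1/\epsilon^2)$ fresh samples, and finish by the triangle inequality and a union bound. Your explicit remark that $\ft$ is determined by $(S,\p)$ and hence independent of the fresh query points, together with the careful probability and constant-rescaling bookkeeping, is exactly the care the paper's terse proof implicitly relies on.
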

%
%
The proof goes exactly like the proof for the corresponding theorem in the one-dimensional case (Theorem~\ref{thm:lipschitzerror}).

The following lemma proves that with enough unlabeled samples, a large fraction of long boxes have enough unlabeled samples in them which is eventually used to argue that they will be sufficient to learn a function which is approximately close to the optimal Lipschitz function over that box. 
\begin{lemma}
\label{lemma:number_unlabeled_samples-highd}
For any distribution $\D_x$, consider a set $S = \{x_1, x_2, \cdots, x_M\}$ of unlabeled samples where each sample $x_i\stackrel{\text{i.i.d.}}{\sim}\D_x$. Let $\mathcal{G}$ be the set of long boxes $\{B_J\}$ each of which satisfies $p_J = \Pr_{x \sim \D_x}(x \in B_J) \geq \frac{\epsilon}{(\frac{L\epsilon}{d})^d}$. Let $E_J$ denote the event that $\sum_{x_j \in S}\mathbb{I}[x_j \in B_J] < \frac{1}{\epsilon^2}(\frac{d}{\epsilon^2})^d\log(\frac{1}{\epsilon})$. Then, we have 
\begin{equation*}
    \sum_{B_J \in \mathcal{G}}p_J\mathbb{I}[E_J] \leq \frac{\epsilon}{\delta}
\end{equation*}
with failure probability atmost $\delta$ for $M = \frac{1}{\epsilon^3}(\Omega(\frac{L}{\epsilon}))^d\log(\frac{1}{\epsilon})$. 
\end{lemma}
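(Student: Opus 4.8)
The plan is to mirror the one-dimensional argument of Lemma~\ref{lemma:number_unlabeled_samples} almost verbatim, replacing intervals by boxes: show that each individual box in $\mathcal{G}$ collects enough samples except with probability at most $\epsilon$, bound the expected total $\D_x$-mass of the boxes that fail, and then convert the expectation bound into a high-probability bound by Markov's inequality.

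First I would fix an arbitrary long box $B_J \in \mathcal{G}$ and note that $\sum_{x_j \in S}\mathbb{I}[x_j \in B_J]$ is a sum of $M$ i.i.d.\ Bernoulli$(p_J)$ random variables, so it has mean $M p_J$. The only real bookkeeping is the cube arithmetic: since $p_J \geq \epsilon / (\tfrac{L\epsilon}{d})^d$ and $M = \tfrac{1}{\epsilon^3}\bigl(\Omega(\tfrac{L}{\epsilon})\bigr)^d\log(\tfrac1\epsilon)$, writing $C$ for the constant hidden in the $\Omega(\cdot)$ we get
\[
M p_J \;\geq\; \frac{1}{\epsilon^3}\Bigl(\frac{CL}{\epsilon}\Bigr)^d\log\Bigl(\frac1\epsilon\Bigr)\cdot\frac{\epsilon}{(\tfrac{L\epsilon}{d})^d} \;=\; \frac{C^d}{\epsilon^2}\Bigl(\frac{d}{\epsilon^2}\Bigr)^d\log\Bigl(\frac1\epsilon\Bigr).
\]
Taking $C$ to be a large enough absolute constant (so that $C^d$ exceeds the constant $c_d$ used for $\plongtwo$ in Theorem~\ref{thm:lipschitzlocalquery-highd}, and in particular $C^d \geq 16$) makes $M p_J$ at least twice the threshold $\tfrac{1}{\epsilon^2}(\tfrac{d}{\epsilon^2})^d\log(\tfrac1\epsilon)$ appearing in the definition of $E_J$, and also guarantees $M p_J \geq 8\log(1/\epsilon)$.

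Next I would apply a multiplicative Chernoff bound to the lower tail of this binomial: the probability that the number of samples in $B_J$ falls below half of $M p_J$ is at most $\exp(-M p_J/8) \leq \exp(-\log(1/\epsilon)) = \epsilon$. Since the $E_J$-threshold is below $M p_J / 2$, this yields $\Pr(E_J) \leq \epsilon$ for every $B_J \in \mathcal{G}$. (The statement of $E_J$ uses the raw count $\sum_{x_j\in S}\mathbb{I}[x_j\in B_J]$; the ``discard excess samples'' convention of the Preprocess step only affects the upper tail and is therefore irrelevant here.) Then, by linearity of expectation and the fact that the boxes $B_J$ are disjoint, so that $\sum_{B_J}p_J \leq 1$,
\[
\E\Bigl[\sum_{B_J\in\mathcal{G}}p_J\,\mathbb{I}[E_J]\Bigr] \;=\; \sum_{B_J\in\mathcal{G}}p_J\Pr[E_J] \;\leq\; \epsilon\sum_{B_J\in\mathcal{G}}p_J \;\leq\; \epsilon .
\]
Finally, Markov's inequality applied to the non-negative random variable $\sum_{B_J\in\mathcal{G}}p_J\mathbb{I}[E_J]$ gives that it exceeds $\epsilon/\delta$ with probability at most $\delta$, which is exactly the claim.

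I do not expect a genuine obstacle: the proof is a direct transcription of the one-dimensional case. The one step that needs care is the cube-counting arithmetic that ties together $M$, the lower bound on $p_J$, and the sample threshold in $E_J$ — in particular choosing the $\Omega$-constant in $M$ consistently with the threshold constants used for $\plongone$ and $\plongtwo$ in Theorem~\ref{thm:lipschitzlocalquery-highd}, so that the Chernoff bound delivers failure probability at most $\epsilon$ per box. Everything else (linearity of expectation, the union-free summation over boxes using $\sum p_J \le 1$, and the final Markov step) is routine.
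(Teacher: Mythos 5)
Your proposal is correct and follows the same overall structure as the paper's proof: per-box concentration to get $\Pr(E_J)\leq\epsilon$, linearity of expectation over disjoint boxes to bound $\E\bigl[\sum_{B_J\in\mathcal{G}}p_J\mathbb{I}[E_J]\bigr]\leq\epsilon$, and Markov's inequality. The only difference is cosmetic: the paper invokes ``Hoeffding'' for the per-box tail bound where you apply a multiplicative Chernoff bound --- your choice is in fact the more appropriate one since the deviation here is a constant fraction of the mean $Mp_J$ rather than a fixed additive amount, and your explicit bookkeeping of the $\Omega$-constant is a welcome tightening that the paper leaves implicit.
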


\begin{proof}
For any box $B_J \in \mathcal{G}$, we have that $\E[\sum_{x_j \in S}\mathbb{I}[x_j \in B_J]] \geq \frac{c_d}{\epsilon^2}(\frac{d}{\epsilon^2})^d\log(\frac{1}{\epsilon})$ for some constant $c_d$ depending on the dimension $d$. Using Hoeffding inequality, we can get that $\Pr(E_J) \leq \epsilon$ for all boxes $B_J \in \mathcal{G}$. Calculating expectation of the desired quantity, we get
\begin{equation*}
    \E[\sum_{B_J \in \mathcal{G}}p_J\mathbb{I}[E_J]] = \sum_{B_J \in \mathcal{G}}p_J\Pr[E_J] \leq \epsilon\sum_{B_J \in \mathcal{G}}p_J \leq \epsilon
\end{equation*}

We get the desired result using Markov's inequality.
\end{proof}

%
%
%
%
%
The following lemma states that the probability of short boxes $\psh$ is small with high probability. Consider the case of uniform distribution. In this case, since the short boxes cover only $2\epsilon$ fraction of the domain $[0,1]^d$, their probability $\psh$ is upper bounded by $2\epsilon$. The case for arbitrary distributions holds because the boxes are chosen randomly.
\begin{lemma}
\label{lemma:prob_intervals-highd}
When we divide the $[0,1]^d$ domain into long and short boxes as in the preprocessing step (Algorithm~\ref{alg4}), then
\begin{equation*}
    \psh = \sum_{B_J \in \pshort}p_J \leq \frac{2\epsilon}{\delta}
\end{equation*}
with failure probability atmost $\delta$. 
\end{lemma}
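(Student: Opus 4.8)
The plan is to run the same union-bound-plus-averaging argument as in the one-dimensional Lemma~\ref{lemma:prob_intervals}, with the one extra twist that in $d$ dimensions a box $B_J$ is declared short as soon as it is short along a \emph{single} coordinate. So first I would peel the $d$ coordinates apart: writing $\pshort^j$ for the short intervals of the one-dimensional partition $\p^j$, a point lies in a short box iff one of its coordinates lies in a short interval, hence by a union bound
\[
\psh \;=\; \Pr_{x\sim\D_x}\Big[\exists\, j\in[d]:\ x_j \in \textstyle\bigcup_{I\in\pshort^j} I\Big] \;\le\; \sum_{j=1}^d \Pr_{x\sim\D_x}\Big[x_j \in \textstyle\bigcup_{I\in\pshort^j} I\Big].
\]
It therefore suffices to bound, for each fixed coordinate $j$, the probability that $x_j$ falls in a short interval of $\p^j$, and then to take expectation over the independent random offsets $b_1^1,\dots,b_1^d$ used in Algorithm~\ref{alg4}.

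Fix a coordinate $j$ and a point $x_j\in[0,1]$, and count for how many of the $N:=\frac{d}{2\epsilon}+1$ admissible offsets $b_1^j\in\{0,1,\dots,\frac{d}{2\epsilon}\}\cdot\frac{2}{L}$ the point $x_j$ lands inside a short interval (assume $\frac{d}{2\epsilon}\in\N$; otherwise round, changing constants negligibly). Short intervals of $\p^j$ have length $\ell_s=\frac{2}{L}$, long intervals have length $\ell_l=\frac{d}{L\epsilon}$, and the alternating pattern has period $T=\ell_s+\ell_l$. The arithmetic identity that makes the whole thing work is
\[
N\,\ell_s \;=\; \Big(\tfrac{d}{2\epsilon}+1\Big)\tfrac{2}{L} \;=\; \tfrac{d}{L\epsilon}+\tfrac{2}{L} \;=\; T,
\]
so consecutive admissible offsets differ by exactly $\ell_s$ and there are exactly $T/\ell_s$ of them; consequently the short intervals generated by the $N$ different offsets tile $[0,1]$ exactly (the truncations of the pattern at $0$ and $1$ only shorten a boundary interval and remove no short mass from points inside $[0,1]$). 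Hence $x_j$ lies in a short interval for at most one of the $N$ offsets, giving
\[
\Pr_{b_1^j}\Big[x_j \in \textstyle\bigcup_{I\in\pshort^j} I\Big] \;\le\; \frac1N \;=\; \frac{1}{\frac{d}{2\epsilon}+1} \;\le\; \frac{2\epsilon}{d}.
\]

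Now assemble the pieces. Taking expectation over the offsets, using independence (the $j$-th summand depends only on $b_1^j$) and swapping the inner probability with the expectation over $x$ by Tonelli,
\[
\E_{b_1^1,\dots,b_1^d}[\psh] \;\le\; \sum_{j=1}^d \E_{b_1^j}\!\Big[\Pr_{x}\big[x_j\in \textstyle\bigcup_{I\in\pshort^j} I\big]\Big] \;=\; \sum_{j=1}^d \E_{x}\!\Big[\Pr_{b_1^j}\big[x_j\in \textstyle\bigcup_{I\in\pshort^j} I\big]\Big] \;\le\; d\cdot\frac{2\epsilon}{d} \;=\; 2\epsilon .
\]
Since $\psh\ge 0$, Markov's inequality yields $\Pr[\psh > 2\epsilon/\delta]\le\delta$, which is exactly the claim. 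The only delicate point — the ``main obstacle'' in an otherwise routine argument — is the bookkeeping in the middle step: verifying that the $N$ offsets tile $[0,1]$ \emph{exactly} (this is where the specific choices $\ell_s=2/L$, offset spacing $2/L$, and offset count $\frac{d}{2\epsilon}+1$ in Algorithm~\ref{alg4} are used), and checking that the truncation of the pattern at the two ends of $[0,1]$, together with the convention in the footnote of Algorithm~\ref{alg4}, never removes short mass covering an interior point, so that the ``at most one offset'' count — and hence the $1/N\le 2\epsilon/d$ bound — is preserved. Everything else is identical to the one-dimensional case.
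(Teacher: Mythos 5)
Your proof is correct and takes essentially the same route as the paper's: both reduce to the one-dimensional picture via a union bound over coordinates, use the fact that the short intervals for the $\frac{d}{2\epsilon}+1$ equally spaced offsets tile one period exactly (so each point is short for at most one offset, giving $\le 2\epsilon/d$ per coordinate), and then apply Markov. The paper states this as a counting argument over all $(\frac{d}{2\epsilon}+1)^d$ offset tuples rather than an expectation, but the two formulations are the same computation; your write-up is merely more explicit about the tiling arithmetic and the boundary truncation.
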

%
%
%
\begin{proof}
Now, we consider the division of every dimension i.e. $[0,1]$ independently into alternating intervals of length $\frac{d}{L\epsilon}$ and $\frac{2}{L}$ with the offset chosen uniformly randomly from $\{0, 1, 2, \cdots, \frac{d}{2\epsilon}\}\frac{2}{L}$. For any set of fixed offsets chosen for the $d-1$ dimensions, the intervals of length $\frac{2}{L}$ chosen for the $dth$ dimension combined are disjoint in each of these divisions and together cover the entire $[0,1]^d$ and hence amount to probability mass $1$. Therefore, the total probability mass covered in the total $(\frac{d}{2\epsilon}+1)^d$ possible divisions is $d(\frac{d}{2\epsilon}+1)^{d-1}$.  Hence, there are at most $\delta$ fraction out of the total $(\frac{d}{2\epsilon}+1)^d$ cases where the short boxes have probability greater than $\frac{2\epsilon}{\delta}$. Hence, with probability $1-\delta$, the short boxes have probability upper bounded by $\frac{2\epsilon}{\delta}$. %
\end{proof}

\begin{lemma}
\label{lem: good type 1 intervals-highd}
Let $B_J \in \plongone$ be any long box of subtype 1. For the event \mbox{$F_J = \{\diff_{\D_J}(\hat{f}_{S \cap B_J}, f_{\D_J}^*) > \epsilon
\}$}, we have
\begin{equation*}
    \Pr(F_J) \leq \epsilon
\end{equation*}
\end{lemma}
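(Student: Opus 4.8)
The plan is to transcribe the one-dimensional argument of Lemma~\ref{lem: good type 1 intervals}, substituting the one-dimensional covering-number bound for its $d$-dimensional counterpart. Fix a box $B_J \in \plongone$. By the definition of $\plongone$, the number $M_J$ of unlabeled samples of $S$ landing in $B_J$ satisfies $M_J \geq \frac{c_d}{\epsilon^2}\bigl(\frac{d}{\epsilon^2}\bigr)^d\log\bigl(\frac{1}{\epsilon}\bigr)$ for the dimension-dependent constant $c_d$. First I would observe that, conditioned on the value of $M_J$, the points of $S \cap B_J$ together with their queried labels are i.i.d.\ draws from the conditional distribution $\D_J$ (which which samples fall in $B_J$ is independent of where inside $B_J$ they land), so a uniform-convergence bound over the class of $L$-Lipschitz functions on the box $B_J$ applies to $\hat f_{S\cap B_J}$.

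Next I would compute the relevant covering number. Each long interval has side length $\frac{d}{L\epsilon}$, so $B_J$ is a box of side length $l=\frac{d}{L\epsilon}$; plugging $l$ into Lemma~\ref{lemma:lipschitz_covering} gives $\log N_\infty(\epsilon, \cdot, m) = \bigl(O\bigl(\tfrac{Ll}{\epsilon}\bigr)\bigr)^d = \bigl(O\bigl(\tfrac{d}{\epsilon^2}\bigr)\bigr)^d$ for the class of $L$-Lipschitz functions on $B_J$. Feeding this into Lemma~\ref{lemma:lipschitz_sample_complexity} with the failure-probability parameter set to $\epsilon$, a sample of size $\Omega\bigl(\frac{1}{\epsilon^2}\bigl(\frac{d}{\epsilon^2}\bigr)^d\log\frac{1}{\epsilon}\bigr)$ guarantees that the empirical risk minimizer over $L$-Lipschitz functions on $B_J$ has true error under $\D_J$ within an additive $\epsilon$ of the best such function, except with probability at most $\epsilon$. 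Choosing $c_d$ large enough to absorb the hidden constant from Lemma~\ref{lemma:lipschitz_covering} makes the $\plongone$ threshold on $M_J$ match exactly this requirement.

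It remains to identify $\min_{g}\err_{\D_J}(g)$ over $L$-Lipschitz functions on $B_J$ with $\err_{\D_J}(f^*_{\D_J})$, the error of the global minimizer restricted to $B_J$. On one hand $f^*_{\D_J}$ restricted to $B_J$ is $L$-Lipschitz on $B_J$, so the box-restricted optimum is no larger; on the other hand, by the McShane extension theorem (Theorem~\ref{thm:lipschitz-extension}) any $L$-Lipschitz function on $B_J$ extends to an $L$-Lipschitz function on $[0,1]^d$ without changing its values on $B_J$, hence without changing its error under $\D_J$, so the two optima agree. Combining with the previous paragraph, $\diff_{\D_J}(\hat f_{S\cap B_J}, f^*_{\D_J}) = \err_{\D_J}(\hat f_{S\cap B_J}) - \err_{\D_J}(f^*_{\D_J}) \leq \epsilon$ except with probability at most $\epsilon$, which is precisely $\Pr(F_J)\leq\epsilon$. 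I expect the only slightly delicate points to be verifying that conditioning on $M_J$ yields genuinely i.i.d.\ $\D_J$-samples and pinning down $c_d$ so that the $\plongone$ threshold lines up with the uniform-convergence requirement; the rest is a routine adaptation of the one-dimensional proof.
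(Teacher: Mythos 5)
Your proof is correct and takes essentially the same route as the paper's: plug the side length $l=\frac{d}{L\epsilon}$ into the covering-number bound of Lemma~\ref{lemma:lipschitz_covering} to get $(O(\frac{d}{\epsilon^2}))^d$, feed that into the uniform-convergence bound of Lemma~\ref{lemma:lipschitz_sample_complexity} with failure probability set to $\epsilon$, and observe that the membership condition for $\plongone$ supplies the required $\frac{c_d}{\epsilon^2}(\frac{d}{\epsilon^2})^d\log\frac{1}{\epsilon}$ samples. The two side remarks you flag as ``slightly delicate'' --- that conditioning on $M_J$ yields i.i.d.\ draws from $\D_J$, and that restricting/extending (via Theorem~\ref{thm:lipschitz-extension}) reconciles the minimum over $L$-Lipschitz functions on $B_J$ with $\err_{\D_J}(f^*_{\D_J})$ --- are left implicit in the paper's ``standard learning theory arguments,'' so you've just filled in detail rather than taken a different path.
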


\begin{proof}
We know that the covering number for the class of $d$-dimensional $L$-Lipschitz functions supported on the box $[0,l]^d$ is $(O(\frac{Ll}{\epsilon}))^d$ (Lemma~\ref{lemma:lipschitz_covering}). For, Lipschitz functions supported on long intervals of length $l=\frac{d}{L\epsilon}$ along each dimension, we get this complexity as $(O(\frac{d}{\epsilon^2}))^d$.  We know by standard results in uniform convergence, that the number of samples required for uniform convergence up to an error of $\epsilon$ and failure probability $\delta$ for all functions in a class $\mathcal{F}_L$ is $O((\frac{\text{Covering number of } \mathcal{F}_L)}{\epsilon^2})\log(\frac{1}{\delta}))$ (Lemma~\ref{lemma:lipschitz_sample_complexity}) and hence, for long boxes we get this estimate as $\frac{1}{\epsilon^2}(O(\frac{d}{\epsilon^2}))^d\log(\frac{1}{\delta})$. 
Given that there are $\frac{1}{\epsilon^2}(\Omega(\frac{d}{\epsilon^2}))^d\log(\frac{1}{\epsilon})$ samples in these boxes by design, the function learned using these samples is only additively worse than the function with minimum error using standard learning theory arguments with failure probability at most $\epsilon$. Thus, we get that 
$$\diff_{\D_J}(f_{S \cap B_J}, f_{\D_J}^*) \leq \epsilon \quad \forall B_J \in \plongone$$
with failure probability at most $\epsilon$. 
\end{proof}
\section{Nadaraya-Watson Estimator}
\label{appendix:kde}
In this section, we state the proof of Theorem~\ref{thm:kde-min-error} and the lemmas needed. 
But, first we state a theorem from \cite{backurs2018efficient} which will be crucial in the proof. The following theorem essentially states that for certain nice kernel $K(x,y)$, it is possible to estimate $\frac{1}{N}\sum_{i=1}^N K(x,x_i)$ with a multiplicative error of $\epsilon$ for any query $x$ efficiently. In this section, we will use $A\preceq B \text{ }(\text{or } A\succeq B)$ to denote that $ B-A\text{ }(\text{or } A-B)$ is a positive semidefinite matrix for two positive semidefinite matrices $A$ and $B$.
%
\begin{theorem} [Theorem 11 of \cite{backurs2018efficient}]
\label{thm:backurs2018efficient}
There exists a data structure that given a data set $P \subset \R^d$ of size $N$, using $O(dL2^{O(t)}\log(\frac{\Phi N}{\delta}))\frac{1}{\epsilon^2}\cdot N$ space and preprocessing time, for any $(L,t)$ nice kernel and a query $q \in \R^d$, estimates $KDF_P(q) = \frac{1}{|P|}\sum_{y\in P}k(q,y)$ with accuracy $(1\pm\epsilon)$ in time $O(dL2^{O(t)}\log(\frac{\Phi N}{\delta}))\frac{1}{\epsilon^2})$ with probability at least $1-\frac{1}{poly(N)}-\delta$.
\end{theorem}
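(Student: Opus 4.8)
The plan is to establish Theorem~\ref{thm:backurs2018efficient} via the \emph{hashing‑based estimator} (HBE) paradigm, reducing the data‑structure problem to two ingredients: (i) a kernel‑specific randomized hash family whose collision probabilities dominate a sub‑linear power of the kernel, and (ii) a second‑moment analysis of the associated importance‑sampling readout. First I would normalize: since $k$ takes values in $[0,1]$ and the data is bounded (so that for a nice kernel $\mu \defn KDF_P(q)$ satisfies $\mu \ge 1/\mathrm{poly}(\Phi)$), it suffices to describe a structure that, for a \emph{guessed} dyadic value $\hat\mu$ of the right order, returns a $(1\pm\epsilon)$ estimate whenever $\mu \asymp \hat\mu$. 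Running a geometric cascade of $O(\log\Phi)$ such structures, one per dyadic guess, together with the standard ``keep the level whose readout is consistent with its own guess'' argument, removes the assumption that $\mu$ is known in advance; this is the source of the $\log(\Phi N/\delta)$ factors.

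For a single level: the $(L,t)$‑niceness hypothesis on $k$ is exactly what guarantees — this is the piece I would import from the kernel‑specific LSH constructions, e.g.\ Euclidean LSH with a width tuned to the relevant radius and a $t$‑fold concatenation — a hash distribution $h\sim\mathcal H$ with collision probability $p(x,y)\defn\Pr_h[h(x)=h(y)]$ obeying $p(x,y)\gtrsim k(x,y)^{\alpha}$ for some exponent $\alpha<1$, uniformly over all pairs and all distance scales, where $h$ is evaluable in $O(d\cdot 2^{O(t)})$ time and induces a logarithmic‑depth bucket structure. Store $P$ in the hash table of $h$; on query $q$ read the bucket $B=\{y\in P: h(y)=h(q)\}$, draw $Z$ uniform from $B$ (output $0$ if $B=\emptyset$), and return the appropriately normalized importance‑sampling readout $X_h$ built from $k(q,Z)/p(q,Z)$ and $|B|$. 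Unbiasedness $\E_h[X_h]=\mu$ is the usual importance‑sampling identity; the crux is a relative second‑moment bound of the form $\E_h[X_h^2]/\mu^2 \le R(L,t)$, obtained by substituting $p\gtrsim k^{\alpha}$ and using the monotonicity/decay properties granted by niceness to collapse $\sum_y k(q,y)^{2-\alpha}$ down to $\mu^{2-\alpha}$ up to lower‑order terms — the point being that $\alpha<1$ breaks the $\Theta(1/\mu)$ relative‑variance bottleneck of plain uniform sampling and makes $R$ a function of $(L,t)$ alone (up to $\log\Phi$). Averaging $m=O(R(L,t)\,\epsilon^{-2})$ independent copies of $X_h$ and taking a median over $O(\log(N/\delta))$ such averages yields a $(1\pm\epsilon)$ estimate with failure probability $\le 1/\mathrm{poly}(N)+\delta$, by Chebyshev plus median‑of‑means.

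Finally I would account for the cost. Across $O(\log\Phi)$ scales, $m=O(2^{O(t)}\epsilon^{-2})$ repetitions, and $O(\log(N/\delta))$ median copies, the structure holds $O(2^{O(t)}\epsilon^{-2}\log(\Phi N/\delta))$ hash tables, each storing the $N$ points with $O(d)$ coordinates and a bucket index of size $\le N$, giving the stated $O\!\big(dL\,2^{O(t)}\epsilon^{-2}\log(\Phi N/\delta)\cdot N\big)$ space and preprocessing (the factor $L$ absorbing the Lipschitz/repetition parameter of the nice‑kernel LSH family); a query touches one bucket and performs one $O(d)$ kernel evaluation per table, matching the claimed query time. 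The main obstacle — the part that genuinely uses the precise definition of an $(L,t)$‑nice kernel rather than generic LSH — is ingredient (i): showing that a \emph{single} hash family has collision probability $\gtrsim k^{\alpha}$ with $\alpha<1$ \emph{simultaneously at every distance scale}, since this ``scale‑freeness'' is what makes the dyadic cascade legitimate and is what drives the sub‑$1/\mu$ variance. The Chebyshev/median‑of‑means amplification and the bookkeeping are routine once that bound is in hand.
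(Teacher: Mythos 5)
There is no proof of this statement in the paper for you to match: it is imported verbatim as Theorem 11 of \cite{backurs2018efficient}, so the only question is whether your proposed route would actually deliver the stated bounds. It would not, and the gap is in what you call the crux. In the hashing-based-estimator framework you invoke (collision probability $p(x,y)\gtrsim k(x,y)^{\alpha}$ with $\alpha<1$), the relative second moment of the readout is \emph{not} a function of $(L,t)$ alone; the standard analysis gives $\E_h[X_h^2]/\mu^2 = O(\mu^{-(1-\alpha)})$, i.e.\ the exponent $\alpha<1$ only softens the $1/\mu$ bottleneck of uniform sampling, it does not remove the dependence on $\mu=KDF_P(q)$. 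Since for the kernels at hand $\mu$ can be as small as $1/\mathrm{poly}(\Phi)$ (or $\sim 1/N$), your scheme yields query time polynomial in $\Phi$ (or of order $\sqrt{N}$-type), whereas the theorem claims time $O\bigl(dL2^{O(t)}\log(\Phi N/\delta)/\epsilon^2\bigr)$, purely logarithmic in $\Phi$ and $N$. So the step ``$\alpha<1$ makes $R$ a function of $(L,t)$ alone (up to $\log\Phi$)'' is exactly the assertion that fails, and no choice of LSH family repairs it within this framework. In addition, the ingredient you defer --- a single hash family with $p\gtrsim k^{\alpha}$, $\alpha<1$, uniformly at every distance scale for a heavy-tailed kernel such as $1/(1+\|A(x-y)\|_2^2)$ --- is itself not available off the shelf, so even the reduction you rely on is unsupported.

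The actual argument in \cite{backurs2018efficient} exploits the $(L,t)$-niceness (polynomial decay plus a bounded-ratio/smoothness condition) directly rather than through collision probabilities: the domain is decomposed into $O(\log\Phi)$ geometric distance scales (via randomly shifted grids/annuli), smoothness guarantees the kernel varies by at most a factor controlled by $L$ and $2^{O(t)}$ within each scale, and an importance-sampling estimator over scales then has relative variance bounded by $L\,2^{O(t)}\,\mathrm{polylog}$, independent of $\mu$; Chebyshev plus median amplification finishes as you describe. That multi-scale variance bound --- not an LSH collision bound --- is the source of the $L$, $2^{O(t)}$, and $\log(\Phi N/\delta)$ factors in the statement, and it is the piece your proposal is missing.
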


The kernel used in this setting $K_A(x,y) = \frac{1}{1+||A(x-y)||_2^2} \ \forall A \in \mathcal{A}$ is $(4,2)$ smooth according to their definition of smoothness as shown in Definition 1 in \cite{backurs2018efficient} and hence can be computed efficiently. Moreover, as mentioned in \cite{backurs2018efficient}, it is also possible to remove the dependence on the aspect ratio and in turn achieve time complexity of $O(\frac{d}{\epsilon^2}\log(\frac{ N}{\delta}))$ with a preprocessing time of $O(\frac{dN}{\epsilon^2}\log(\frac{ N}{\delta}))$. Note that the data structure in \cite{backurs2018efficient} only depends on the smoothness properties of the kernel and hence, the same data structure can be used for simultaneously computing the kernel density for all kernels $K_A$ for all   
$A \in \mathcal{A}$. As a direct corollary of Theorem~\ref{thm:backurs2018efficient}, we obtain that it is possible to efficiently estimate $p_{S,A}(q,x_i)$ $\forall x_i\in S, q \in \R^d, A \in \mathcal{A}$ since multiplication by a constant still preserves the multiplicative approximation (Corollary~\ref{thm:backurs2018efficient-cor}). Now, we restate Theorem~\ref{thm:kde-min-error} and its proof. 
\theoremkdeminerror*
\begin{proof}
Let us assume that $A^*$ is the optimal matrix which minimizes the prediction error i.e. 
$$A^* = \argmin_{A \in \mathcal{A}} \E_{x\sim
\mathcal{D}}\sum_ip_{S,A}(x_i,x)|f(x_i)-f(x)|$$ 
Let us consider a set $\mathcal{A}_{\epsilon}$, an $\epsilon$-covering of the set of matrices $\mathcal{A}$ with size $T = |\mathcal{A}_{\epsilon}| = O(\frac{1}{\epsilon^d})$, that is, $$\mathcal{A}_{\epsilon}=\{A \in \R^{d\times d}\ |\ A_{i,j} = 0 \  \forall \ i\neq j \text{ and } A_{i,i}\in\{1,1+\epsilon,(1+\epsilon)^2,\cdots,2\} \  \forall i \in [d]\}. $$
From Lemma \ref{lemma:kde-approximate}, we know that it is sufficient to estimate $\min_{A \in \mathcal{A}_{\epsilon}} \E_{x\sim
\mathcal{D}}|\sum_ip_{S, A}(x_i,x)|f(x_i)-f(x)|$ up to an error of $\epsilon$ because the optimal error in $\mathcal{A}$ and $\mathcal{A}_{\epsilon}$ are within $\epsilon$ of each other. To estimate this, we  use the estimator from Algorithm~\ref{alg-kde}. 

Now, we will prove that the estimator is within $E_{x \sim \mathcal{D}}\sum_{i=1}^Np_{S,A}(x,x_i)|f(x_i)-f(x)| \pm \epsilon$ with high probability for all $A \in \mathcal{A}_{\epsilon}$. Let $E$ be the event that the estimators $\hat{p}_{S, A}$ from Algorithm \ref{alg-kde} all approximate $p_{S, A}$ up to a multiplicative error of $\epsilon$, that is,
\begin{equation*}
E = \{\hat{p}_{S, A}(z_i,\zt_i) \in p_{S, A}(z_i,\zt_i)[1-\epsilon, 1+\epsilon] \quad \forall i \in [M], \forall A \in \mathcal{A}_{\epsilon}\}.
\end{equation*}
Now, we will break the probability of the estimator $\hat{L}_{S, K_A}$ not being within $\epsilon$ close to the true value $L_{S,K_A}$ for all $A \in \mathcal{A}$. 
\begin{small}
\begin{equation*}
    \Pr(|\hat{L}_{S, K_A}  - L_{S, K_A} | > \epsilon) = \underbrace{\Pr(|\hat{L}_{S, K_A}  - L_{S, K_A} | > \epsilon| E)}_{T_1}\Pr(E) + \Pr(|\hat{L}_{S, K_A}  - L_{S, K_A} | > \epsilon | \neg E)\underbrace{\Pr(\neg E)}_{T_2}
\end{equation*}
\end{small}
\underline{Bounding $T_1$. } Computing the expectation of the estimator conditioned on the event $E$, we get that 
\begin{align*}
E[\hat{L}_{S,K_A}|E] &= \frac{1}{M}\sum_iE_{z_i\sim \mathcal{D}, \zt_i\sim p_{S,I}(z_i,\zt_i)}[\frac{\hat{p}_{S,A}(z_i, \zt_i)}{p_{S,I}(z_i, \zt_i)}|f(z_i)-f(\zt_i)|]\\
&= \frac{1}{M}\sum_iE_{z_i\sim \mathcal{D}}[\sum_{j=1}^N\hat{p}_{S,A}(z_i,x_j)|f(z_i)-f(x_j)|]\\
&= E_{z\sim \mathcal{D}}\sum_{j=1}^N\hat{p}_{S,A}(z,x_j)|f(z)-f(x_j)|\\
&\in E_{z\sim \mathcal{D}}\sum_{j=1}^N{p}_{S,A}(z,x_j)|f(z)-f(x_j)|[1-\epsilon, 1+\epsilon]\\
&\in L_{S,K_A}[1-\epsilon, 1+\epsilon]
\end{align*}
Now, we will show that the estimator is close to its expectation with high probability conditioned on the event $E$. Since we know that $|f(z_i)-f(\zt_i)| \leq 1$ and
\begin{equation*}
p_{S,A}(z_i,\zt_i) \leq 16p_{S,I}(z_i,\zt_i) \quad \forall z_i,\zt_i \in \R^d, \forall A \in \mathcal{A}, 
\end{equation*}
from Lemma \ref{lemma:kde-ineq}, we get that each entry of our estimator is bounded between $[0,16(1+\epsilon)]$. Hence, using Hoeffding's inequality we get that 
\begin{equation*}
\Pr(|\hat{L}_{S,K_A}-E[\hat{L}_{S,K_A}|E]| \geq \epsilon|E) \leq 2e^{\frac{-2M\epsilon^2}{{256(1+\epsilon)}^2}}.    
\end{equation*}
Hence, for $M = O(\frac{1}{\epsilon^2}\log(\frac{T}{\delta}))$ and union bound over the $\epsilon$ cover of size $T$, we get $T_1 \leq \delta$

\underline{Bounding $T_2$. } Using Corollary~\ref{thm:backurs2018efficient-cor} with $\delta$  as $\frac{\delta\epsilon^d}{M}$ and a union bound over the $M\cdot\frac{1}{\epsilon^d}$ computations of $\hat{p}_{S, A}$, we get that 
\begin{equation*}
    \Pr(\neg E) \leq \delta + \frac{1}{poly(N)}\frac{M}{\epsilon^d}
\end{equation*}

%
%
%
Combining the upper bounds for $T_1$ and $T_2$, 
\begin{equation*}
    \Pr(|\hat{L}_{S, K_A}  - L_{S, K_A} | > \epsilon) \leq 2\delta +  \frac{1}{poly(N)}\frac{M}{\epsilon^d}, 
\end{equation*}
for $M = O(\frac{1}{\epsilon^2}\log(\frac{T}{\delta}))$. 
Substituting $T = \frac{1}{\epsilon^d}$, we get a sample complexity of $M = O(\frac{1}{\epsilon^2}(d\log(\frac{1}{\epsilon})+\log(\frac{1}{\delta})))$ and failure probability of $2\delta+ \frac{1}{poly(N)\epsilon^{d+2}}(d\log(\frac{1}{\epsilon})+\log(\frac{1}{\delta}))$. 

\paragraph{Running Time.} For the running time, since we have
a total of $O(\frac{1}{\epsilon^2}(d\log(\frac{1}{\epsilon})+\log(\frac{1}{\delta}))$ samples and we have to take a sum over these samples for each $A \in \mathcal{A}_{\epsilon}$, we get a total running time of $O(\frac{1}{\epsilon^d}(\frac{1}{\epsilon^2}(d\log(\frac{1}{\epsilon})+\log(\frac{1}{\delta})))$ for this part. $p_{S,I}$ needs to be computed only once for each sample and this leads to a running time of $O(N(\frac{1}{\epsilon^2}(d\log(\frac{1}{\epsilon})+\log(\frac{1}{\delta})))$. We also compute an estimator $\hat{p}_{S,A}$ for each of the sample pair for each $A \in \mathcal{A}_{\epsilon}$ and thus from Corollary~\ref{thm:backurs2018efficient-cor}, we get a preprocessing time of $O(\frac{Nd}{\epsilon^2}\log(\frac{NM}{\delta\epsilon^d}))$ and $O(\frac{Md}{\epsilon^{d+2}}\log(\frac{NM}{\delta\epsilon^d}))$ time in computing the estimator for each $A \in \mathcal{A}_{\epsilon}$. 
This completes the proof of the theorem.
%
%
%
%
%
%
%
%
%
\end{proof}
The following lemma used in the proof of Theorem~\ref{thm:kde-min-error} states that it is sufficient to consider an $\epsilon$-net of the set of matrices $\mathcal{A}$ to approximate the minimum error up to an additive error of $\epsilon$. 
\begin{lemma}
\label{lemma:kde-approximate}
Let us consider a set $\mathcal{A}_{\epsilon}$, an $\epsilon$-covering of the set of matrices $\mathcal{A}$ i.e. $$\mathcal{A}_{\epsilon}=\{A \in \R^{d\times d}\ |\ A_{i,j} = 0 \  \forall \ i\neq j \text{ and } A_{i,i}\in\{1,1+\epsilon,(1+\epsilon)^2,\cdots,2\} \  \forall i \in [d]\}$$ 
Then the minimum prediction error for $A \in \mathcal{A}_{\epsilon}$ additively approximates the minimum prediction error for $A \in \mathcal{A}$ i.e.
$$\big|\min_{A \in \mathcal{A}_{\epsilon}} \E_{x\sim
\mathcal{D}}\sum_i p_{S,A}(x_i,x)|f(x_i)-f(x)|-\min_{A \in \mathcal{A}} \E_{x\sim
\mathcal{D}}\sum_ip_{S,A}(x_i,x)|f(x_i)-f(x)|\big| \leq 15\epsilon$$
\end{lemma}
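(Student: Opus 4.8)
The plan is to show that for any target matrix $A \in \mathcal{A}$ there is a matrix $A' \in \mathcal{A}_\epsilon$ whose induced prediction error is within $O(\epsilon)$ of that of $A$; applying this both to the minimizer over $\mathcal{A}$ and observing $\mathcal{A}_\epsilon \subset \mathcal{A}$ then yields the two-sided bound. First I would fix $A \in \mathcal{A}$ with diagonal entries $a_i \in [1,2]$ and round each $a_i$ down to the nearest grid point $a_i' \in \{1, 1+\epsilon, (1+\epsilon)^2, \ldots, 2\}$, so that $a_i' \le a_i \le (1+\epsilon) a_i'$, and let $A'$ be the resulting diagonal matrix in $\mathcal{A}_\epsilon$. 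The core estimate is that the sampling probabilities are multiplicatively close: for every query $x$ and every data point $x_i$,
\begin{equation*}
p_{S,A'}(x_i,x) \in p_{S,A}(x_i,x) \cdot [c^{-1}, c]
\end{equation*}
for some $c = 1 + O(\epsilon)$. This follows because $K_{A'}(x,y) = 1/(1 + \|A'(x-y)\|_2^2)$ and $K_A(x,y) = 1/(1+\|A(x-y)\|_2^2)$ differ only through the quadratic forms $\sum_i (a_i')^2 (x-y)_i^2$ versus $\sum_i a_i^2 (x-y)_i^2$, which are within a factor $(1+\epsilon)^2$ of each other entrywise, hence $K_{A'}(x,y) \in K_A(x,y)\cdot[(1+\epsilon)^{-2}, (1+\epsilon)^2]$ — here I would use that for $u,v \ge 0$ with $v \le u \le (1+\epsilon)^2 v$ one has $1/(1+u) \ge (1+\epsilon)^{-2}/(1+v)$ after a short manipulation. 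The ratio $p_{S,A'}(x_i,x) = K_{A'}(x_i,x)/\sum_j K_{A'}(x_j,x)$ then inherits a $(1+\epsilon)^{\pm 4}$ multiplicative error since numerator and denominator each move by at most $(1+\epsilon)^{\pm 2}$.

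Given this multiplicative control, I would propagate it to the error functional. Writing $g(A) \defn \E_{x\sim\mathcal{D}}\sum_i p_{S,A}(x_i,x)|f(x_i)-f(x)|$, each summand is nonnegative and $|f(x_i)-f(x)| \le 1$, so
\begin{equation*}
|g(A') - g(A)| \le \E_{x\sim\mathcal{D}}\sum_i |p_{S,A'}(x_i,x) - p_{S,A}(x_i,x)| \cdot 1 \le ((1+\epsilon)^4 - 1)\,\E_{x\sim\mathcal{D}}\sum_i p_{S,A}(x_i,x),
\end{equation*}
and since $\sum_i p_{S,A}(x_i,x) = 1$ for every $x$ this is at most $(1+\epsilon)^4 - 1 \le 15\epsilon$ for $\epsilon \le 1$ (the constant $15$ is slack; $(1+\epsilon)^4-1 \le 4\epsilon + 6\epsilon^2 + 4\epsilon^3 + \epsilon^4 \le 15\epsilon$). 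Finally, taking $A$ to be the minimizer of $g$ over $\mathcal{A}$ gives $\min_{\mathcal{A}_\epsilon} g \le g(A') \le g(A) + 15\epsilon = \min_{\mathcal{A}} g + 15\epsilon$; the reverse inequality $\min_{\mathcal{A}} g \le \min_{\mathcal{A}_\epsilon} g$ is immediate from $\mathcal{A}_\epsilon \subseteq \mathcal{A}$. Combining the two directions yields $|\min_{\mathcal{A}_\epsilon} g - \min_{\mathcal{A}} g| \le 15\epsilon$, as claimed.

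The only genuinely delicate step is the multiplicative comparison of the kernel values, i.e.\ converting the additive/multiplicative closeness of the quadratic forms $\|A(x-y)\|_2^2$ into a multiplicative bound on $1/(1+\|A(x-y)\|_2^2)$ that is uniform over all $x,y \in \R^d$; the key observation that makes this work is that $t \mapsto 1/(1+t)$ maps a multiplicative perturbation of $t$ into a multiplicative perturbation of the same order, uniformly in $t \ge 0$, because $\frac{1+t}{1+(1+\eta)t} \ge \frac{1}{1+\eta}$ for all $t\ge 0$. Everything after that is bookkeeping with the fact that the $p_{S,A}(\cdot,x)$ form a probability distribution over the $N$ data points, which kills the dependence on $N$ and on $|f(x_i)-f(x)|$.
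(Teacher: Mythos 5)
Your proposal is correct and follows essentially the same route as the paper: it rounds the optimal $A^*$ to a nearby grid matrix $A'$, shows $p_{S,A'}(x_i,x)$ and $p_{S,A}(x_i,x)$ agree up to a $(1+\epsilon)^{\pm 4}$ multiplicative factor by first comparing the kernel values (this is exactly the content of the paper's Lemma~\ref{lemma:kde-ineq}, which you inline), and then uses $\sum_i p_{S,A}(x_i,x)=1$ and $|f(x_i)-f(x)|\le 1$ to convert $(1+\epsilon)^4-1\le 15\epsilon$ into the additive bound on the error functional. The only cosmetic differences are that you round down rather than to the nearest grid point (which actually tightens the probability ratio to $(1+\epsilon)^{\pm 2}$, though you conservatively quote $(1+\epsilon)^{\pm 4}$) and that you make the one-sided inclusion $\mathcal{A}_\epsilon\subseteq\mathcal{A}$ explicit when closing the two-sided inequality, both of which are fine.
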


\begin{proof}
We first show that that if $(1+\epsilon)^{-1}A_2 \preceq A_1 \preceq A_2(1+\epsilon)$, then $$\big|\E_{x\sim \mathcal{D}}\sum_i p_{S,A_1}(x_i,x)|f(x_i)-f(x)|-\E_{x\sim \mathcal{D}}|\sum_i p_{S,A_2}(x_i,x)|f(x_i)-f(x)|\big| \leq 15\epsilon$$
Note that the loss difference can also be written as $ \E_{x\sim \mathcal{D}}\sum_i |p_{S,A_1}(x_i,x)-p_{S,A_2}(x_i,x)||f(x_i)-f(x)|$. 
Now from Lemma~\ref{lemma:kde-ineq}, we know that if $(1+\epsilon)^{-1}A_2 \preceq A_1 \preceq A_2(1+\epsilon)$, then $(1+\epsilon)^{-4}p_{S,A_2}(x_i,x) \leq p_{S,A_1}(x_i,x) \leq p_{S,A_2}(x_i,x)(1+\epsilon)^4$. Using this,  we get that 
\begin{multline*}
 E_{x\sim \mathcal{D}}\sum_i|p_{S,A_1}(x_i,x)-p_{S,A_2}(x_i,x)||f(x_i)-f(x)|\\
  \begin{aligned}
    &\leq  E_{x\sim \mathcal{D}}\sum_i|p_{S,A_2}(x_i,x)(1+\epsilon)^4-p_{S,A_2}(x_i,x)||f(x_i)-f(x)|\\
&\leq 15\epsilon E_{x\sim \mathcal{D}}\sum_i |p_{S,A_2}(x_i,x)||f(x_i)-f(x)|\\
&\leq 15\epsilon E_{x\sim \mathcal{D}}\sum_i |p_{S,A_2}(x_i,x)| \leq 15\epsilon
  \end{aligned}
\end{multline*}
Hence, the loss difference is also bounded by $15\epsilon$.
Now, we will prove that $\exists A \in \mathcal{A}_\epsilon$ such that $(1+\epsilon)^{-1}A \preceq A^* \preceq A(1+\epsilon)$ where $A^* = \argmin\E_{x\sim \mathcal{D}}\sum_i |p_{S,A}(x_i,x)||f(x_i)-f(x)|$. This will be sufficient to show that the minimum error for the set $\mathcal{A}_\epsilon$ and $\mathcal{A}$ differ from each other by an additive error at most $\epsilon$. Consider each of the diagonal entries of $A$ to be the value in $\{1,1+\epsilon,(1+\epsilon)^2,\cdots,2\}$ closest to the corresponding entry of $A^*$. It is easy to see that $(1+\epsilon)^{-1}A \preceq A^* \preceq A(1+\epsilon)$. 
\end{proof}
The following lemma states that if two matrices $A_1$ and $A_2$ are multiplicatively close to each other in terms of all their eigenvalues, then the corresponding probabilities for any query point $x$ and any data point $x_i$ are also multiplicatively close to each other.
\begin{lemma}
\label{lemma:kde-ineq}
For any two matrices $A_1,A_2 \in \R^{d\times d}$, if $\frac{1}{1+\epsilon}A_2 \preceq A_1 \preceq A_2(1+\epsilon)$, then $$\frac{1}{(1+\epsilon)^4}p_{S,A_2}(x_i,x) \leq p_{S,A_1}(x_i,x) \leq p_{S,A_2}(x_i,x)(1+\epsilon)^4 \quad \forall x \in \R^d, x_i\in S.$$
\end{lemma}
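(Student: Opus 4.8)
The plan is to reduce the claimed comparison of the normalized probabilities $p_{S,A_1}$ and $p_{S,A_2}$ to a pointwise comparison of the unnormalized kernel values $K_{A_1}(x,x_i)$ and $K_{A_2}(x,x_i)$, and then observe that dividing by the normalization $\sum_{x_j\in S}K_A(x,x_j)$ at worst squares the multiplicative error. So the whole argument is three short steps.

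First I would establish the quadratic-form comparison: for every $v\in\R^d$,
\[
(1+\epsilon)^{-2}\,\|A_2 v\|_2^2 \;\le\; \|A_1 v\|_2^2 \;\le\; (1+\epsilon)^2\,\|A_2 v\|_2^2.
\]
In our setting the matrices in $\mathcal{A}$ are diagonal with strictly positive diagonal entries, so the hypothesis $\tfrac{1}{1+\epsilon}A_2\preceq A_1\preceq(1+\epsilon)A_2$ is equivalent to $(1+\epsilon)^{-1}(A_2)_{ii}\le (A_1)_{ii}\le(1+\epsilon)(A_2)_{ii}$ for every $i\in[d]$; squaring these scalar inequalities and weighting by $v_i^2$ and summing over $i$ gives the display.

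Next I would translate this into a bound on $K_A(x,y)=1/(1+\|A(x-y)\|_2^2)$. Taking $v=x-x_i$ and using $(1+\epsilon)^2\ge 1\ge(1+\epsilon)^{-2}$ to absorb the additive $1$, I get $(1+\epsilon)^{-2}\bigl(1+\|A_2(x-x_i)\|_2^2\bigr)\le 1+\|A_1(x-x_i)\|_2^2\le (1+\epsilon)^2\bigl(1+\|A_2(x-x_i)\|_2^2\bigr)$, hence taking reciprocals
\[
(1+\epsilon)^{-2}\,K_{A_2}(x,x_i)\;\le\;K_{A_1}(x,x_i)\;\le\;(1+\epsilon)^{2}\,K_{A_2}(x,x_i)\qquad \forall\, x\in\R^d,\ x_i\in S.
\]
Finally, writing $p_{S,A}(x_i,x)=K_A(x,x_i)\big/\sum_{x_j\in S}K_A(x,x_j)$, the numerator for $A_1$ is within a factor $(1+\epsilon)^{\pm 2}$ of that for $A_2$ by the previous display, and the denominator — a sum over $x_j\in S$ of exactly those kernel values — is likewise within a factor $(1+\epsilon)^{\pm 2}$; dividing multiplies the errors and yields the stated factor $(1+\epsilon)^{\pm 4}$.

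The argument is essentially routine; the only mildly delicate point is the first step, passing from the Loewner-order hypothesis to the $\ell_2$-norm comparison of $A_1 v$ and $A_2 v$. This is immediate for diagonal matrices but would not follow for general positive semidefinite matrices, since $X\mapsto X^2$ is not operator monotone; as $\mathcal{A}$ consists only of diagonal matrices, this causes no difficulty here.
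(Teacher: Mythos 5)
Your proof follows the same three-step structure as the paper's: first deduce $(1+\epsilon)^{-2}K_{A_2}(x,x_i)\le K_{A_1}(x,x_i)\le(1+\epsilon)^{2}K_{A_2}(x,x_i)$ from the Loewner hypothesis, then observe that normalizing by $\sum_{x_j\in S}K_A(x,x_j)$ costs at most another factor $(1+\epsilon)^{\pm 2}$ on top. Your explicit remark that the step from $\tfrac{1}{1+\epsilon}A_2\preceq A_1\preceq(1+\epsilon)A_2$ to the norm comparison $\|A_1 v\|\lessgtr(1+\epsilon)^{\pm1}\|A_2 v\|$ uses the diagonal structure (since $X\mapsto X^2$ is not operator monotone in general) is a worthwhile clarification that the paper leaves implicit.
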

\begin{proof}
Using $(1+\epsilon)^{-1}A_2 \preceq A_1 \preceq A_2(1+\epsilon)$, we get that
\begin{alignat}{3}
\centermathcell{||A_2(x_i-x)||\frac{1}{1+\epsilon}} &\leq&  \centermathcell{||A_1(x_i-x)||} &\leq& \centermathcell{||A_2(x_i-x)||(1+\epsilon)}\nonumber\\
\centermathcell{1+||A_2(x_i-x)||^2\frac{1}{(1+\epsilon)^2}} &\leq& \centermathcell{1+||A_1(x_i-x)||^2} &\leq& \centermathcell{1+||A_2(x_i-x)||^2(1+\epsilon)^2}\nonumber\\
\centermathcell{\frac{1}{(1+\epsilon)^2}(1+||A_2(x_i-x)||^2)} &\leq& \centermathcell{\text{   }1+||A_1(x_i-x)||^2\text{   }} &\leq& \centermathcell{\text{   }(1+||A_2(x_i-x)||^2)(1+\epsilon)^2\text{   }}\nonumber\\
\centermathcell{\frac{1}{(1+\epsilon)^2}K_{A_2}(x_i,x)} &\leq& \centermathcell{K_{A_1}(x_i,x)} &\leq& \centermathcell{K_{A_2}(x_i,x)(1+\epsilon)^2}\label{eqn:kde-ineq}
\end{alignat}
Hence, using the inequality in equation~\ref{eqn:kde-ineq}, we get that
\begin{alignat*}{3}
\centermathcell{\text{   }\frac{1}{(1+\epsilon)^4}\frac{K_{A_2}(x_i,x)}{\sum_{x_i \in S} K_{A_2}(x_i,x)}\text{   }} &\leq& \centermathcell{\text{   }\frac{K_{A_1}(x_i,x)}{\sum_{x_i\in S} K_{A_1}(x_i,x)}\text{   }} &\leq& \centermathcell{\text{   }\frac{K_{A_2}(x_i,x)}{\sum_{x_i \in S} K_{A_2}(x_i,x)}(1+\epsilon)^4\text{   }}\\
\centermathcell{\frac{1}{(1+\epsilon)^4}p_{S,A_2}(x_i,x)} &\leq& \centermathcell{p_{S,A_1}(x_i,x)} &\leq& \centermathcell{p_{S,A_2}(x_i,x)(1+\epsilon)^4}
\end{alignat*}
This completes the proof of the lemma.
\end{proof}
\end{document}